\algnewcommand\algorithmicinput{\textbf{INPUT:}}
\algnewcommand\INPUT{\item[\algorithmicinput]}
\algnewcommand\algorithmicoutput{\textbf{OUTPUT:}}
\algnewcommand\OUTPUT{\item[\algorithmicoutput]}
\newsavebox\newcaptionbox\newdimen\newcaptionboxwid
\long\def\@makecaption#1#2{
 \vskip 10pt
        \baselineskip 11pt
        \setbox\@tempboxa\hbox{#1. #2}
        \ifdim \wd\@tempboxa >\hsize
        \sbox{\newcaptionbox}{\small\sl #1.~}
        \newcaptionboxwid=\wd\newcaptionbox
        \usebox\newcaptionbox {\footnotesize #2}
        \else
          \centerline{{\small\sl #1.} {\small #2}}
        \fi}
\def\fnum@figure{Figure \thefigure}
\def\fnum@table{Table \thetable}
\newcommand{\lt}{ {\mathcal L_2}}
\newtheorem{lemma}{Lemma}
\newtheorem{corollary}{Corollary}
\newtheorem{definition}{Definition}
\newtheorem{remark}{Remark}
\newtheorem{theorem}{Theorem}
\title{\bf Confidence Interval Construction and Conditional Variance Estimation with Dense ReLU Networks}
    \author{
	Carlos Misael Madrid Padilla$^{1\,\,\,*}$, Oscar Hernan Madrid Padilla$^{2}$        \footnote{Equal contribution from the first two authors}\\
	Yik Lun Kei$^3$, Zhi Zhang$^2$, and Yanzhen Chen$^4$\\
	\\
	$^1$Department of Statistics and Data Science,\\ 
	Washington University in St. Louis\\
	$^2$Department of Statistics and Data Science,\\ 
	University of California, Los Angeles\\
	$^3$Department of Statistics, 
	University of California, Santa Cruz\\
	$^4$Department of ISOM,
	Hong Kong University of Science and Technology\\
}
\date{
    \today
}
\begin{document}

\maketitle

\begin{abstract}

This paper addresses the problems of conditional variance estimation and confidence interval construction in nonparametric regression using dense networks with the Rectified Linear Unit (ReLU) activation function . 


We present a residual-based framework for conditional  variance estimation, deriving non-asymptotic bounds for variance estimation under both heteroscedastic and homoscedastic settings. We relax the sub-Gaussian noise assumption, allowing the proposed bounds to accommodate sub-Exponential noise and beyond. Building on this, for a ReLU neural network estimator, we derive non-asymptotic bounds for both its conditional mean and variance estimation, representing the first result for variance estimation using ReLU networks.  


Furthermore, we develop a ReLU network based robust bootstrap procedure \citep{efron1992bootstrap}  for constructing confidence intervals for the true mean that comes with a theoretical guarantee on the coverage, providing a significant advancement in  uncertainty quantification and the construction of reliable confidence intervals   in  deep learning settings.


\textbf{Keywords:} Inference, bootstrap, deep learning.

\end{abstract}

\section{Introduction}

Uncertainty quantification is a cornerstone of statistical modeling and data analysis, providing essential insights into the reliability and variability of predictions. A key aspect of uncertainty quantification is variance estimation, which enables practitioners to assess the variability of responses conditional on the inputs. For example, in financial risk assessment, understanding the conditional variance is critical for evaluating uncertainties associated with specific covariates, offering valuable insights for decision-making \citep{harris2000robust, glasserman2000variance}. Similarly, in medical studies, variance estimation identifies regions of high variability in patient outcomes, facilitating targeted interventions and personalized treatment strategies \citep{barlow1994robust, veroniki2016methods}.


In this paper, we start by looking at the task of nonparametric variance estimation with ReLU networks, with the ultimate goal of constructing confidence intervals in nonparametric regression. Specifically, we
consider data  $\{(x_i,y_i)\}_{i=1}^n \subset [0,1]^d \times \mathbb{R}$ independent  copies of $(X,Y)$ generated as
\begin{equation}
    \label{eqn:model1}
    y_i \,=\, f^*(x_i)  \,+\, \epsilon_i, 
\end{equation}
where  $x_1,\ldots,x_n$ are i.i.d. from a distribution $\mathbb{P}_X$, and where  $\epsilon_1,\ldots, \epsilon_n$ are independent random variables satisfying $\mathbb{E}(\epsilon_i | x_i ) =0$ and $\mathrm{Var}(\epsilon_i | x_i ) =g^*(x_i)$, for functions 
$f^*  \,:\, [0,1]^d \rightarrow \mathbb{R}$, and $g^*  \,:\, [0,1]^d \rightarrow \mathbb{R}_{ +}$. Thus, 
\begin{equation}
    \label{eqn:model2}
       \mathbb{E}(y_i |x_i) \,=\, f^*(x_i),\,\,\,\,\,\,    \mathrm{Var}(y_i |x_i) \,=\, g^*(x_i).
\end{equation}
Our main goals in this paper are two-folds. First, to study  estimators for the function $g^*$. Second,  we would like construct random functions $\hat{l}(\cdot)$ and $\hat{u}(\cdot)$, depending on $\{(x_i,y_i)\}_{i=1}^n$, such that $\mathbb{P}(f^*(X) \in [\hat{l}(X),\hat{u}(X)] ) \geq 1- \alpha$ for an independent  $X$ from the same distribution of covariates, and 
for some prespecified $\alpha\in (0,1)$.


Conditional variance estimation has been widely explored in the literature. Many methods, including the one proposed in this paper, follow a three-step approach: first estimating the conditional mean, computing the residuals, and then estimating the conditional variance. Notable examples of this approach include \cite{hall1989variance,fan1998efficient}


Other methods for variance estimation bypass residual computation altogether. For example, \cite{wang2008effect} and \cite{cai2009variance} explored rates of convergence for univariate nonparametric regression with Lipschitz function classes, while \cite{cai2008adaptive} introduced a wavelet thresholding approach for univariate data. More recently, \cite{shen2020optimal} examined univariate Hölder function classes and certain homoscedastic multivariate settings, and \cite{padilla2024variance} proposed the $k$-NN fused lasso method.

Despite these advancements, most existing approaches are limited to low-dimensional settings. An exception is \cite{kolar2012variance}, which extends variance estimation to higher dimensions but relies on strong parametric assumptions, underscoring the need for more flexible methods applicable to general data settings.


Beyond variance estimation, the literature on confidence intervals for nonparametric regression remains relatively sparse. Early works, such as \cite{eubank1993confidence}, \cite{brown2011confidence}, \cite{cai2014adaptive}, \cite{xia1998bias}, and \cite{hardle1988bootstrapping}, primarily addressed one-dimensional settings with homoscedastic errors.

More recent contributions have expanded the scope of confidence interval construction. For instance, \cite{neumann1998simultaneous} developed confidence bands using a local polynomial estimator in conjunction with bootstrap methods. \cite{degras2011simultaneous} extended this approach to heteroscedastic error models using a kernel-based method, while \cite{delaigle2015confidence} applied bootstrap and kernel techniques to construct confidence bands in nonparametric errors-in-variables regression. In multivariate settings, \cite{proksch2016confidence} focused on local polynomial regression with homoscedastic errors. Additionally, \cite{hall1992bootstrap} and \cite{hall2013simple} proposed approaches based on the combination of bootstrap and kernel methods. 

Despite these developments, most existing methods remain constrained to either low dimensional or parametric  settings, and the challenge of constructing theoretically supported  confidence intervals in more complex, high-dimensional nonparametric regression contexts remains an open area of research.

Neural networks have established themselves as a fundamental tool in contemporary nonparametric statistical methods, excelling at capturing and analyzing complex data relationships. Their versatility has driven widespread adoption across diverse fields \citep{krizhevsky2017imagenet,taigman2014deepface,graves2012long,goodfellow2014generative,vaswani2017attention}.

Despite their success, uncertainty quantification for neural networks often lacks theoretical guarantees. Early works \citep{white1989some,franke2000bootstrapping} focused on network parameters rather than the regression function, while \cite{paass1992assessing} explored using the bootstrap to improve predictions. More recent efforts include uncertainty quantification with ensemble methods \cite{lakshminarayanan2017simple}, a bootstrap algorithm for response variables \citep{lee2020bootstrapping}, critiques of bootstrap methods \citep{nixon2020aren}, a general uncertainty quantification framework \citep{gal2016dropout}, and a fast bootstrap method \citep{shin2021neural}. 

The primary goal of this paper is to contribute to the literature on neural networks by studying procedures for conditional variance estimation with ReLU networks. These methods will then be used to construct confidence intervals for the conditional mean. Our work adds to the extensive body of neural network theory, which has primarily focused on mean estimation in various contexts \cite{mccaffrey1994convergence}, \cite{kohler2005adaptive}, \cite{hamers2006nonasymptotic}, \cite{schmidt2020nonparametric}, \cite{bauer2019deep}, \cite{padilla2022quantile}, \cite{ma2022theoretical}, and \cite{zhang2024dense}.

\subsection{Summary of results}

We now summarize the main contributions of the paper.

\begin{enumerate}
    \item \textbf{General Framework.} We establish general non-asymptotic bounds for conditional mean estimation that apply to a broad class of nonparametric estimators. Our results extend beyond the sub-Gaussian noise assumption, accommodating, for example, sub-Exponential noise and more general distributions. Building on this foundation, we address the broader problem of conditional variance estimation for the model specified in (\ref{eqn:model1}). Specifically, we focus on residual-based estimators for conditional variance. In this framework, Theorem~\ref{thm2_v2} provides non-asymptotic upper bounds for variance estimation in heteroscedastic settings, while Theorem~\ref{thm3_v2} establishes analogous bounds for the homoscedastic case.

    \item \textbf{Dense ReLU Networks:} For a residual-based ReLU variance estimator, Corollary \ref{thm_var_v2} provides a general upper bound under hierarchical smoothness assumptions on both the true conditional means and variances. This represents the first result of its kind for variance estimation using ReLU networks.

\item  \textbf{Confidence:}  Leveraging the proposed residual-based ReLU variance estimator, we develop a bootstrap procedure \citep{efron1992bootstrap,efron1994introduction} to construct confidence intervals for $f^*(X)$, the conditional mean, where $X$ is sampled from the covariate distribution. This approach provides a guaranteed coverage and marks a significant step forward in quantifying uncertainty and constructing reliable confidence intervals in complex, nonparametric settings.

\item  \textbf{Experiments:} Both simulations and real data experiments demonstrate that the studied ReLU-based estimator outperforms state-of-the-art methods for conditional variance estimation. Furthermore, our experiments reveal that the constructed confidence intervals consistently achieve the desired coverage levels while maintaining reasonable lengths, striking an effective balance between reliability and precision.


    
\end{enumerate}


\subsection{Notation}

Throughout this article, the following notation is used:

The set of positive integers is denoted by $\mathbf{Z}^{+}$, and the set of natural numbers is denoted by $\mathbf{N} = \mathbf{Z}^{+} \cup \{0\}$. For two positive sequences $\{a_n\}_{n\in \mathbf{Z}^{+} }$ and $\{b_n\}_{n\in \mathbf{Z}^{+} }$, we write $a_n = O(b_n)$ or $a_n\lesssim b_n$ if $a_n\le C b_n$ with some constant $C > 0$ that does not depend on $n$, and $a_n = \Theta(b_n)$ or $a_n\asymp b_n$ if $a_n = O(b_n)$ and $b_n = O(a_n)$.


For a $d$-dimensional vector $\mathbf{x} \in \mathbb{R}^d$, the Euclidean and the supremum norms of $\mathbf{x}$ are denoted by $\|\mathbf{x}\|$ and $\|\mathbf{x}\|_{\infty}$, respectively.  The $\ell_\infty$ norm of a function $f: \mathbb{R}^d \rightarrow \mathbb{R}$ is defined by
$ 
\|f\|_{\infty}=\sup _{\mathbf{x} \in \mathbb{R}^d}|f(\mathbf{x})|.
$ 
Given a function $f: \mathcal{X} \rightarrow \mathbb{R}$ and the  probability distribution $\mathbb{P}_X$ of $X$  over $\mathcal{X}$, the usual $\mathcal L_2(\mathbb{P}_X)$-norm is given by
$
\|f\|_{\mathcal L_2(\mathbb{P}_X)}:=\bigl( \int_\mathcal{X} f^2(x) \mathbb{P}_X(d x)\bigr)^{1/2}=\mathbb{E}\left[f^2(X)\right]^{1/2}.
$
 The space of functions such that $\|f\|_{\mathcal L_2(\mathbb{P}_X)} < \infty$ is denoted by $\mathcal L_2(\mathbb{P}_X)$. Similarly, the $\mathcal L_2(\mathbb{P}_X)$-inner product is given by
$
\langle f, g\rangle_{\mathcal L_2(\mathbb{P}_X)}:=\int_\mathcal{X} f(x)g(x) \mathbb{P}_X(d x)=\mathbb{E}\left[f(X)g(X)\right].
$ 
Given a collection of samples $\{x_i\}_{i=1}^n $ that are independently and identically distributed according to $\mathbb{P}_X$, we define the empirical probability measure as
$\mathbb{P}_n(x) := \frac{1}{n} \sum_{i=1}^n \delta_{x_i}(x).$ The empirical $\mathcal{L}_2$-norm is expressed as
$\|f\|_{\mathcal{L}_2\left(\mathbb{P}_n\right)} := \left(\frac{1}{n} \sum_{i=1}^n f^2(x_i)\right)^{1/2} = \left(\int_{\mathcal{X}} f^2(x) \mathbb{P}_n(dx)\right)^{1/2}.$
Similarly, the empirical $\mathcal{L}_2$-inner product is given by
$\langle f, g \rangle_{\mathcal{L}_2\left(\mathbb{P}_n\right)} := \frac{1}{n} \sum_{i=1}^n f(x_i) g(x_i) = \int_{\mathcal{X}} f(x) g(x) \mathbb{P}_n(dx).$
When there is no ambiguity, we use the shorthands $\|f\|_n$, $\|f\|_2$, $\langle f, g \rangle_n$, and $\langle f, g \rangle_2$ to refer to $\|f\|_{\mathcal{L}_2\left(\mathbb{P}_n\right)}$, $\|f\|_{\mathcal{L}_2\left(\mathbb{P}_X\right)}$, $\langle f, g \rangle_{\mathcal{L}_2\left(\mathbb{P}_n\right)}$, and $\langle f, g \rangle_{\mathcal{L}_2\left(\mathbb{P}_X\right)}$, respectively. 
Also, for a function $f \,:\, \mathbb{R}^d\rightarrow \mathbb{R}$, the clipped version of $f$ is the function $f_{A_n}$ is given as
\[
  f_{A_n}(x) \,=\, \begin{cases}
      f(x) & \text{if}\,\, \vert f(x)\vert \leq A,\\
      -A & \text{if}\,\,  f(x)  < -A,\\
            A & \text{if}\,\,  f(x)  > A.\\
  \end{cases}
\]
Finally, write $1_S$ for the indicator function of a set $S$. Thus, $1_S(x) = 1$ if $x \in S$ and $1_S(x) =0$ otherwise.

\subsection{Outline}

The paper is organized as follows: Section~\ref{sec_methodology} introduces the methodology, including a three-stage procedure for variance estimation with ReLU networks and a bootstrap-based approach for constructing  confidence intervals. Section~\ref{sec:mean} presents theoretical error bounds for mean estimation under general noise, with applications to ReLU networks. Section~\ref{sec:variance} provides results for variance estimation in homoscedastic and heteroscedastic settings, focusing on dense ReLU networks. Section~\ref{sec:confidence_int} supports the proposed confidence intervals theoretically. Section~\ref{sec_exp} reports simulation and real-data experiments, comparing the method with Random Forests and MARS.    Empirical evaluations of our proposed confidence intervals are given in Section \ref{sec:experiements_confidence_int}.  The conclusion in Section \ref{sec:conclusion} discusses potential extensions, and Section~\ref{sec_proofs} includes technical proofs.

\section{Methodology}
\label{sec_methodology}

\subsection{Conditional variance estimation}
\label{sec:var}

In this subsection, we introduce a conditional variance estimator that operates in three stages, similar in spirt to \cite{fan1998efficient}. First, it estimates the conditional mean using ReLU networks. Next, it computes the residuals based on the estimated conditional mean. Finally, it fits a ReLU neural network to model the variance of the residuals. The detailed procedure is provided below.

We start by providing the necessary notation before establishing the proposed variance estimators. The architecture of a neural network,  involves the number of layers $L \in \mathbb{N}$, and the width vector $w = (w_1,\ldots,w_L) \in \mathbb{N}^L$. The latter is used to indicate the number of neurons in each  hidden layer.  

We denote by   $\tau(x)=\max(0,x)$, $x\in \mathbb{R}$ the  Rectified Linear Unit (ReLU) activation function. This is then use to construct  multilayer feedforward neural network with architecture, which is a function  $f: \mathbb{R}^d \rightarrow \mathbb{R}$, given as 

\begin{align}
f(a)=\sum_{i=1}^{w_L} c_{1, i}^{(L)} f_i^{(L)}(a)+c_{1,0}^{(L)}  \label{eq:form of approximation function 0}
\end{align}
for some weights $c_{1,0}^{(L)}, \ldots, c_{1, w_L}^{(L)} \in \mathbb{R}$ and for $f_i^{(L)}$ 's recursively defined by
\begin{align}
f_i^{(s)}( a)=\tau\left(\sum_{j=1}^{w_{s-1}} c_{i, j}^{(s-1)} f_j^{(s-1)}(a )+c_{i, 0}^{(s-1)}\right) \label{eq:form of approximation function L}
\end{align}
for some  $c_{i,0}^{(s-1)}, \ldots, c_{i, w_{s-1} }^{(L)} \in \mathbb{R}$, $s \in \{2,\ldots,L\}$, and 
\begin{align}
f_i^{(1)}(a)=\tau\left(\sum_{j=1}^{d} c_{i, j}^{(0)} a_j +c_{i, 0}^{(0)}\right) \label{eq:form of approximation function L2}
\end{align}
for some  $c_{i,0}^{(0)}, \ldots, c_{i, d }^{(0)} \in \mathbb{R}$.

As in  \cite{kohler2019rate}, we focus our study  on  multilayer feedforward neural network where all hidden layers have the same number of neurons. Specifically, we define the class of dense  ReLU neural networks as
\begin{align}
\mathcal{F}(L, \nu )=\left\{f: f \text { is of the form \eqref{eq:form of approximation function 0} with } w_1=w_2=\ldots=w_L=\nu\right\} \label{eq:space of neural network}
\end{align}
Here, dense simply indicates that no sparsity constraints are enforced in the parameters of the neural networks. Thus, we focus on fully connected networks.

With this notation, we consider the model described in (\ref{eqn:model1}), and first construct the mean estimator 
\begin{equation}
    \label{eqn:f_hat}
    \hat{f}\,\in\,  \underset{ f\in \mathcal{F}(L,\nu ) } {\arg\,\min} \, \left\{ \frac{1}{n}  \sum_{i=1}^n  (y_i - f(x_i) )^2  \right\}
\end{equation}
for appropriate  choices $L$ and $\nu$, and study its clipped version $\hat{f}_{\mathcal{A}_n}$, for some $\mathcal{A}_n>0$, under general error terms $\epsilon_1,\ldots,\epsilon_n$. 

Now let $\{ (x_i^{\prime},y_i^{\prime})\}_{i=1}^n$ be an independent copy of $\{(x_i,y_i)\}_{i=1}^n$, and  construct the residuals $\hat{\epsilon}_i\,:=\,  y_i^{\prime} - \hat{f}_{\mathcal{A}_n}(x_i^{\prime}) $ for $i=1,\ldots,n$. Then since by definition 
\[
 g^*(x_i^{\prime} )\,=\,  \mathbb{E}\left( (y_i^{\prime} - f^*(x_i^{\prime}) )^2\right),
\]
we construct the estimator $\hat{g}$ of $g$ given as
\begin{equation}
    \label{eqn:g_hat}
\hat{g}\,\in\,  \underset{ g\in \mathcal{F}(L^{\prime},\nu^{\prime } ) } {\arg\,\min} \, \left\{ \frac{1}{n}  \sum_{i=1}^n  (\hat{\epsilon}_i^2 - g(x_i^{\prime} ) )^2  \right\}
\end{equation}
for tuning parameters $L^{\prime} >0$ and $\nu^{\prime} >0$.  We will study the properties of the conditional variance  estimator $\hat{g}_{\mathcal{B}_n }$ for an appropriate $\mathcal{B}_n>0$.


\subsection{Confidence intervals}
\label{sec:con_intervals}

In this subsection, we present our  algorithm for constructing a confidence interval for $f^*(X)$, where $X$ is independently sampled  from the covariate distribution. The justification for this construction will be provided in Theorem \ref{thm6}.

The core idea of the construction is as follows. First, we estimate the conditional mean and variance as described in Section \ref{sec:var}. Using these estimates, we approximate the empirical distribution of the normalized residuals $\epsilon_i/\sqrt{g(x_i)}$. Subsequently, the procedure generates bootstrapped samples for the mean estimates. However, instead of directly determining the boundaries of the confidence interval using the empirical quantiles of the bootstrapped samples, we propose an alternative approach. Specifically, we compute the quantiles of the prediction error of the bootstrapped samples on a held-out validation set. These quantiles are then adjusted to ensure finite-sample validity of the resulting confidence intervals.

To enhance accuracy, we propose two versions of this correction: one based on a plug-in estimate and the other derived from theoretical considerations. The detailed steps of the construction are presented in the following section, with the full theoretical justification provided in Section \ref{sec:confidence_int}.


For this subsection, we will assume that $A_n$ is large enough such that $A_n > \max\{\mathcal{A}_n,\mathcal{B}_n \}$ with the notation from Section \ref{sec:var}, and  we assume that the response $y_i$ satisfies $\vert y_i\vert \leq A_n$ with probability one. In practice, this can be ensured by rescaling the inputs as necessary, or can be considered an approximation to the true model with an unbounded $Y$ when $A_n$ is large enough. More discussion will be given in Sections \ref{sec:confidence_int} and \ref{sec:experiements_confidence_int}.

Suppose that $\{1,\ldots,n\} \,=\, I_1 \cup I_2 \cup I_3 \cup I_4$ such that $\vert I_k \vert \asymp n/4$ for $k= 1,2,3,4$ and $\alpha \in (0,1)$.   Let $B\in \mathbb{N}$ be the number of bootstrap samples. Let $\widetilde{B} < B$ be a positive integer. This will be used for constructing  the center of the confidence interval.

\textbf{Step 1.} Let 
\begin{equation}
    \label{eqn:f_hat0}
    \hat{f}\,\in\,  \underset{ f\in \mathcal{F}(L,\nu) } {\arg\,\min} \, \left\{ \frac{1}{ \vert I_1\vert  }  \sum_{i\in I_1 }  (y_i - f(x_i) )^2  \right\}
\end{equation}
for tuning parameters $L$ and $\nu$. \\

\textbf{Step 2.}
Next, construct the residuals $\hat{\epsilon}_i\,:=\,  y_i - \hat{f}_{A_n}(x_i) $ for $i \in I_2$. Then 
let $\hat{g}$ be an estimator of  $g^*$  given as
\begin{equation}
    \label{eqn:g_hat0}
\hat{g}\,\in\,  \underset{ g\in \mathcal{F}(L^{\prime},\nu^{\prime}) } {\arg\,\min} \, \left\{ \frac{1}{ \vert I_2\vert  }  \sum_{i\in I_2 }   (\hat{\epsilon}_i^2 - g(x_i ) )^2  \right\}
\end{equation}
for tuning parameters $L^{\prime}$ and $\nu^{\prime}$.  \\


\textbf{Step 3.}
Now, define $\epsilon_i^{\prime}\,=\,   (y_i -  \hat{f}_{A_n}(x_i) )/\sqrt{  \vert\hat{g}_{A_n}(x_i) \vert  }$ for $i \in I_2$ and let 
\begin{equation}
    \label{eqn:Fhat}
    \hat{F}(t ) \,=\, \sum_{i \in I_2 } 1_{ \{ \tilde{\epsilon}_i     \leq  t   \}  },
\end{equation}
for all $t \in \mathbb{R}$, where $\{\tilde{\epsilon}_i\}_{i\in I_2 }$ are the standardized version of $\{ \epsilon^{\prime}_i \}_{i\in I_2 }$. Then, for $j \in \{1,\ldots,B+\widetilde{B}\}$, construct $\tilde{y}^{(j)}_i  \,=\, \hat{f}_{A_n}(x_i) \,+\,    \sqrt{\vert \hat{g}_{A_n}(x_i)\vert }\tilde{\epsilon}_i^{(j)}$ for $i \in I_3$, where $\tilde{\epsilon}_i^{(j)} \overset{\text{ind}  }{\sim} \hat{F}$. Then set 
$ \hat{f}^{(j)}$ as 
\begin{equation}
    \label{eqn:f_hat2}
    \hat{f}^{(j)}\,\in\,  \underset{ f\in \mathcal{F}(L_j,\nu_j) } {\arg\,\min} \, \left\{ \frac{1}{ \vert I_3\vert  }  \sum_{i\in I_3 }  (\tilde{y}_i^{(j)} - f(x_i) )^2  \right\}
\end{equation}
and 
\begin{equation}
    \label{eqn:g_hat2}
\hat{g}^{(j)}\,\in\,  \underset{ g\in \mathcal{G}(L_j^{\prime},\nu_j^{\prime}) } {\arg\,\min} \, \left\{ \frac{1}{ \vert I_3\vert  }  \sum_{i\in I_3 }    ((\tilde{y}_i^{(j)} -   \hat{f}_{A_n}^{(j)}(x_i))^2 - g(x_i ) )^2  \right\},\\
\end{equation}
for parameters $\{ (L_j,\nu_j,L_j^{\prime}, \nu_j^{\prime})\}_{j=1}^{B+\widetilde{B}}$.

\textbf{Step 4.}
Let $a_1(\alpha )$ be the $(1-\alpha /(4\widetilde{B} ))$-quantile of the numbers 
\[
   \frac{1}{\vert I_4\vert} \sum_{i \in I_4 }( y_i    \,-\, \hat{f}^{(1)}_{A_n}(x_i)  )^2,\,\,  \frac{1}{\vert I_4\vert} \sum_{i \in I_4 }( y_i    \,-\, \hat{f}^{(2)}_{A_n}(x_i)  )^2,\,\ldots, \,\frac{1}{\vert I_4\vert} \sum_{i \in I_4 }( y_i    \,-\, \hat{f}^{(B)}_{A_n}(x_i)  )^2.\\
\]

\textbf{Step 5.}
For $a_0>0$, given below, set
\begin{equation}
    \label{eqn:a}
   \begin{array}{lll}
         a(\alpha) &= & \displaystyle\bigg\vert  a_1(\alpha)  \,+\, A_n^2 \sqrt{\frac{32[ \log(8/ \alpha)\,+\,\log \widetilde{B} ] }{n}} \,+\,   A_n \sqrt{\frac{8[\log(64/\alpha )+ \log \widetilde{B}] }{n}} - \\
         &&\displaystyle \,\,\,\frac{1}{\vert I_4\vert} \sum_{i \in I_4 }\hat{g}_{A_n}^{(B+1)}(x_i)  \,+\,  a_0\bigg\vert\\
   \end{array}
\end{equation}

\textbf{Step 6.}
Corollary \ref{cor1_v2} shows that with the choices  $a_0\,=\, \alpha /(100\log^2 n)$ and 
\[
b(\alpha )\,=\,  1/(100\log^2 n)
\]
it holds for  
\begin{equation}
    \label{eqn:delt}
    \Delta(\alpha) \,:=\,  2\sqrt{ a(\alpha)/(\alpha\,\widetilde{B})}  +b(\alpha)
\end{equation}
that 
\[
\mathbb{P}\bigg(   f^*(X) \in \bigg[ \frac{1}{\widetilde{B}} \sum_{j=B+1}^{B+\widetilde{B}} \hat{f}^{(j)}_{A_n}(X) \pm   \Delta(\alpha)\bigg ] \bigg)\,\geq\, 1-\alpha.
\]
Moreover,  the choices
\begin{equation}
    \label{eqn:a0}
    a_0\,=\,   \bigg\vert  \widehat{\text{Var}}(Y)  \,-\,  \frac{1}{\vert I_4 \vert}\sum_{i\in I_4}( \hat{f}_{A_n}^{B+1}(x_i)\,-\, \bar{y}   )^2 \,-\,  \frac{1}{\vert I_4 \vert} \sum_{i\in I_4 }  \hat{g}_{A_n}^{(B+1)}(x_i)    \bigg\vert,
\end{equation}
and 
\begin{equation}
    \label{eqn:balpha}
  b(\alpha)\,=\,  \frac{32}{5\alpha (1-0.58 \alpha)  }\left[ \bigg\vert \frac{1}{\vert I_4 \vert}\sum_{i\in I_4}( \hat{f}_{A_n}^{B+1}(x_i)\,-\,y_i   )^2    \,-\,\frac{1}{\vert I_4 \vert} \sum_{i\in I_4 }  \hat{g}_{A_n}^{(B+1)}(x_i)   \bigg\vert \right]^{1/2}
\end{equation}
results in an approximation to the interval, where 
\[
 \widehat{\text{Var}}(Y) \,=\, \frac{1}{n-1}\sum_{i=1}^n (y_i\,-\, \bar{y} )^2
\]
and  $\bar{y} \,=\,\sum_{i=1}^n y_i /n$. 


\section{Theory}
\label{sec_theory}
\subsection{Conditional mean}
\label{sec:mean}

Accurate estimation of $f^*(x) = \mathbb{E}(Y|X=x)$ can be  essential for reliable conditional variance estimation. In this section, we present a general result for estimating
$f^*$,  accommodating sub-Exponential noise and providing non-asymptotic upper bounds on the estimation error.
To estimate $f^*$, we consider estimators of the form (\ref{eqn:f_hat}), where $\mathcal{F}(L,\nu)$  is replaced with an arbitrary function class $\mathcal{F}$. The main result in this context is presented in the following theorem.


\begin{theorem}
    \label{thm1_v2}
\textbf{[General mean estimation].} Suppose that   $\bar{f} \in \mathcal{F}$ is such that 
 \[
  \| \bar{f} -f^*\|_{\infty} \,\leq\,\sqrt{\phi_n},
 \]
 so that $\phi_n$ is the approximating error. Suppose that $\mathcal{A}_n$ is chosen to satisfy
 \[
 \mathcal{A}_n \geq 8\max\{  \| f^* \|_{\infty} +  \mathcal{U}_n , 8\|f^*\|_{\infty} ,8\sqrt{\phi_n}\}.
 \]
 Moreover, let $\mathcal{F}_{\mathcal{A}_n}   \,:=\, \{ f_{\mathcal{A}_n}/\mathcal{A}_n \,:\,  f \in \mathcal{F}\}$ and  assume that
 \[
 \log \mathcal N \left( \delta,  \mathcal{F}_{\mathcal{A}_n},  \left\| \cdot\right\|_n \right)\,\leq\, \eta_n(\delta)
 \]
 for some decreasing function $\eta_n \,:\, (0,1) \rightarrow \mathbb{R}_{\geq 0}$.  If 
\begin{equation}
    \label{eqn:entropy_0_v2}
    \underset{n
 \rightarrow \infty}{\lim} \,\left[  \sum_{l=0}^{\infty} \sum_{l^{\prime}=1 }^{\infty} \exp\left( -  C_1  \eta_n( 2^{-l-l^{\prime} -1}  )\right) \,+\, \sum_{l=0}^{\infty}    \exp\left(- C_2 \eta_n(2^{-l-1}) \right) \,+\,\mathbb{P}(\|\epsilon\|_{\infty} > \mathcal{U}_n)    \right]\,=\,0,
\end{equation}
for some constants $C_1,C_2>0$ and
\begin{equation}
    \label{eqn:cond_sum_v2}
   \underset{l  \in \mathbb{N} }{\sup}  \sum_{l^{\prime}=1 }^{\infty} \frac{ \eta_n(2^{-l-l^{\prime}}   )  }{   2^{2 l^{\prime}   }  \eta_n( 2^{-l} )  } \,\leq\, 1,
\end{equation}
then
\begin{equation}
    \label{eqn:claim1_v2}
    \max\{\| f^*-  \hat{f}_{ \mathcal{A}_n } \|_n^2  ,   \| f^*-  \hat{f}_{ \mathcal{A}_n } \|_{\mathcal L_2}^2  \}  \,=\, o_{ \mathbb{P} }\left(   \phi_n\,+\, \frac{ \mathcal{U}_n^2 \eta_n( \delta_n )    }{n}  \,+\, \mathcal{A}_n \delta_n^2   \right), \\
\end{equation}
where $\delta_n$ is a critical radius of $\mathcal{F}_{\mathcal{A}_n}$, see Definition \ref{def1}.
\end{theorem}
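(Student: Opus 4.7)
The plan is to reduce the statement to a standard empirical process argument for least-squares M-estimation, carried out under a general entropy/critical-radius assumption and with a noise truncation to handle the sub-Gaussian-failing case. I would organize the proof in the following five steps.

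\textbf{Step 1 (Basic inequality and clipping).} Start from the ERM optimality $\frac{1}{n}\sum_{i=1}^n(y_i-\hat f(x_i))^2 \le \frac{1}{n}\sum_{i=1}^n(y_i-\bar f(x_i))^2$, substitute $y_i=f^*(x_i)+\epsilon_i$, and rearrange to obtain
\[
\|\hat f-f^*\|_n^2 \le \|\bar f-f^*\|_n^2 + \frac{2}{n}\sum_{i=1}^n \epsilon_i\bigl(\hat f(x_i)-\bar f(x_i)\bigr).
\]
Since $\mathcal{A}_n \ge \|f^*\|_\infty$, truncation at level $\mathcal{A}_n$ is a contraction toward $f^*$, giving $\|\hat f_{\mathcal{A}_n}-f^*\|_n^2 \le \|\hat f-f^*\|_n^2$, so it suffices to bound the right-hand side.

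\textbf{Step 2 (Noise truncation).} Decompose $\epsilon_i=\epsilon_i\mathbf{1}_{\{|\epsilon_i|\le \mathcal{U}_n\}} + \epsilon_i\mathbf{1}_{\{|\epsilon_i|>\mathcal{U}_n\}}$. On the good event $\{\|\epsilon\|_\infty\le \mathcal{U}_n\}$, whose complement contributes $o_{\mathbb P}(1)$ by the third term in \eqref{eqn:entropy_0_v2}, the cross term reduces to one driven by $\mathcal{U}_n$-bounded (hence $\mathcal{U}_n$-sub-Gaussian) random variables. This is the move that buys us the general-noise regime: after truncation everything is bounded, and the sub-Exponential/heavier-tail structure enters only through the choice of $\mathcal{U}_n$.

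\textbf{Step 3 (Peeling plus chaining for the cross term).} Work inside $\mathcal{F}_{\mathcal{A}_n}$ and peel the shells
\[
S_l := \bigl\{ f\in\mathcal{F}_{\mathcal{A}_n} \,:\, 2^{-l-1} < \|f-\bar f/\mathcal{A}_n\|_n \le 2^{-l}\bigr\},\qquad l\ge 0,
\]
plus an innermost shell of radius $\delta_n$. On each $S_l$, Dudley's chaining with the metric entropy bound $\eta_n(\cdot)$ and truncated sub-Gaussian increments yields, for some numerical $C>0$,
\[
\sup_{f\in S_l} \Bigl|\tfrac{1}{n}\sum_i \epsilon_i(f-\bar f)(x_i)\Bigr| \;\le\; C\,\mathcal{U}_n\,2^{-l}\sqrt{\eta_n(2^{-l-1})/n}
\]
with probability at least $1-\exp(-C'\eta_n(2^{-l-1}))$. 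Condition \eqref{eqn:cond_sum_v2} is exactly what makes the inner chaining integral $\sum_{l'}2^{-l'}\sqrt{\eta_n(2^{-l-l'})/n}$ converge to $O(2^{-l}\sqrt{\eta_n(2^{-l})/n})$, and the double-sum portion of \eqref{eqn:entropy_0_v2} ensures the union bound over $(l,l')$ has vanishing failure probability. A standard $2ab\le \tfrac{a^2}{2}+2b^2$ split on the bound $C\mathcal{U}_n \|\hat f-\bar f\|_n\sqrt{\eta_n(\delta_n)/n}$ produces the $\mathcal{U}_n^2\eta_n(\delta_n)/n$ term while absorbing half of $\|\hat f-\bar f\|_n^2$ into the left-hand side.

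\textbf{Step 4 (Empirical-to-population transfer via the critical radius).} The statement requires bounding both $\|\cdot\|_n$ and $\|\cdot\|_{\mathcal{L}_2}$. For this, apply a second chaining/peeling argument of van de Geer type to the quadratic class $\{(f-f^*)^2/\mathcal{A}_n\,:\,f\in\mathcal{F}_{\mathcal{A}_n}\}$, using that $\delta_n$ is a critical radius of $\mathcal{F}_{\mathcal{A}_n}$ (Definition \ref{def1}). This gives, with high probability and uniformly over $f\in\mathcal{F}_{\mathcal{A}_n}$,
\[
\bigl|\|f-f^*\|_n^2 - \|f-f^*\|_{\mathcal L_2}^2\bigr| \;\le\; \tfrac{1}{2}\|f-f^*\|_{\mathcal L_2}^2 + C\mathcal{A}_n\delta_n^2,
\]
which is where the $\mathcal{A}_n\delta_n^2$ contribution enters. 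Combined with $\|\bar f-f^*\|_n^2 \le \phi_n$, this allows passing between norms in either direction.

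\textbf{Step 5 (Assembly).} Plug the Step~3 stochastic bound into the basic inequality from Step~1, absorb the linear-in-$\|\hat f-\bar f\|_n$ piece, then use Step~4 to upgrade to the $\mathcal{L}_2$ norm. Applying this on the intersection of all high-probability events from Steps~2--4 and invoking \eqref{eqn:entropy_0_v2} to send the failure probability to zero yields the claimed $o_{\mathbb P}\bigl(\phi_n+\mathcal{U}_n^2\eta_n(\delta_n)/n+\mathcal{A}_n\delta_n^2\bigr)$ rate.

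The main obstacle is Step~3: controlling the truncated-noise empirical process uniformly over the whole of $\mathcal{F}_{\mathcal{A}_n}$ with the right dependence on $\mathcal{U}_n$ and $\eta_n(\delta_n)$. The peeling must be sharp enough that condition \eqref{eqn:cond_sum_v2} closes the chaining sum at the correct level $\delta_n$, and condition \eqref{eqn:entropy_0_v2} must be used exactly once on each shell-level union bound so that the overall failure probability is $o(1)$. Step~4, while technically a second chaining exercise, is essentially routine once Step~3 has fixed the architecture of the peeling argument.
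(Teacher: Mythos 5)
There is a genuine gap at the interface of your Step 1 and Step 3: the cross term you keep after the basic inequality, $\tfrac{2}{n}\sum_i \epsilon_i\bigl(\hat f(x_i)-\bar f(x_i)\bigr)$, is indexed by the \emph{unclipped} estimator $\hat f\in\mathcal{F}$, while the only complexity assumption available is the entropy bound on the clipped and rescaled class $\mathcal{F}_{\mathcal{A}_n}=\{f_{\mathcal{A}_n}/\mathcal{A}_n: f\in\mathcal{F}\}$. The clipping-as-contraction observation ($\mathcal{A}_n\ge\|f^*\|_\infty$ implies $\|\hat f_{\mathcal{A}_n}-f^*\|_n\le\|\hat f-f^*\|_n$) only fixes the left-hand side; it does nothing for the stochastic term on the right, which still ranges over a class that may be unbounded and whose covering numbers are not assumed finite. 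Consequently your peeling shells $S_l$ (which, after normalization by $\mathcal{A}_n$, only cover radius at most one) need not contain $\hat f-\bar f$, and the chaining bound $C\,\mathcal{U}_n 2^{-l}\sqrt{\eta_n(2^{-l-1})/n}$ cannot be invoked for it. The missing idea is the one the paper uses: on the event $\{\|\epsilon\|_\infty\le\mathcal{U}_n\}$ one has $|y_i|\le\|f^*\|_\infty+\mathcal{U}_n\le\mathcal{A}_n$, so clipping $\hat f$ at level $\mathcal{A}_n$ can only decrease the empirical squared loss against $y$; hence $\sum_i(y_i-\hat f_{\mathcal{A}_n}(x_i))^2\le\sum_i(y_i-\hat f(x_i))^2\le\sum_i(y_i-\bar f(x_i))^2$, and since $\mathcal{A}_n\ge 64\max\{\|f^*\|_\infty,\sqrt{\phi_n}\}/8$ guarantees $\bar f=\bar f_{\mathcal{A}_n}$, the basic inequality holds entirely between clipped functions. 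Only then is the multiplier process indexed by differences of functions in $\mathcal{F}_{\mathcal{A}_n}$ (uniformly bounded by $2\mathcal{A}_n$, entropy controlled by $\eta_n(\cdot/2)$), and your Steps 2, 3 and 5 go through essentially as in the paper (symmetrization, noise truncation, shell peeling with weights chosen via \eqref{eqn:cond_sum_v2}, union bound controlled by \eqref{eqn:entropy_0_v2}, and absorption).

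A secondary, more minor issue is in Step 4: you state the empirical-to-population comparison uniformly for $\|f-f^*\|_n^2$ versus $\|f-f^*\|_{\mathcal{L}_2}^2$ over $f\in\mathcal{F}_{\mathcal{A}_n}$, but $f^*$ is not a member of the class, so the localized Rademacher/critical-radius comparison (Wainwright, Theorem 14.1) does not directly apply to that family. The paper instead applies it to the difference class $\{(f-g): f,g\in\mathcal{F}_{\mathcal{A}_n}\}$ rescaled by $\mathcal{A}_n$, compares $\|\hat f_{\mathcal{A}_n}-\bar f_{\mathcal{A}_n}\|_n$ with $\|\hat f_{\mathcal{A}_n}-\bar f_{\mathcal{A}_n}\|_{\mathcal{L}_2}$, and then moves to $f^*$ using $\|\bar f-f^*\|_\infty\le\sqrt{\phi_n}$; your argument should be rerouted the same way.
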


Theorem \ref{thm1_v2} establishes a general upper bound for both the empirical norm and the $\mathcal{L}_2$ 
  norm when estimating the conditional mean using an arbitrary function class  $\mathcal{F}$.  
  
  The conditions in \eqref{eqn:entropy_0_v2} and \eqref{eqn:cond_sum_v2} ensure that the complexity of the function class is properly controlled. The conclusion in Theorem \ref{thm1_v2}  provides  a solid foundation for studying variance estimation by leveraging the residuals constructed from $\hat{f}_{\mathcal{A}_n}$.

Notice that besides the conditions $\mathbb{E}( \epsilon_i \,|\,x_i) \,=\,0$  and $\mathbb{E}(\epsilon_i^2\,|\,x_i) = g^*(x_i)$ implied by  (\ref{eqn:model1}) and (\ref{eqn:model2}), the only other condition on the errors is 
\begin{equation}
    \label{eqn:error_cond}
    \underset{n \rightarrow \infty }{\lim}\, \mathbb{P}(\|\epsilon\|_{\infty} > \mathcal{U}_n)   \,=\,0,
\end{equation}
which follows from  (\ref{eqn:entropy_0_v2}).



Before applying Theorem \ref{thm1_v2} to the ReLU neural network estimator defined in (\ref{eqn:f_hat}), we now borrow some of the notation from \cite{kohler2019rate} which allows to impose structural assumptions on $f^*$.



\begin{definition}[$(p, C)$-smoothness]\label{definition: p,c smoothness}
Let $p=q+s$ for some $q \in \mathbb{N}=\mathbb{Z}^{+}\cup\left\{0\right\}$ and $0<s \leq 1$. A function $g: \mathbb{R}^d \rightarrow \mathbb{R}$ is called $(p, C)$-smooth, if for every $\alpha=\left(\alpha_1, \ldots, \alpha_d\right) \in \mathbb{N}^d$, where $d \in \mathbb{Z}^{+}$ , with $\sum_{j=1}^d \alpha_j=q$ the partial derivative $\partial^q g /\left(\partial a_1^{\alpha_1} \ldots \partial a_d^{\alpha_d}\right)$ exists and satisfies
$$
\left|\frac{\partial^q g}{\partial a_1^{\alpha_1} \ldots \partial a_d^{\alpha_d}}\left(a\right)-\frac{\partial^q g}{\partial a_1^{\alpha_1} \ldots \partial a_d^{\alpha_d}}\left(b\right)\right| \leq C\| a- b\|^s
$$
for all $a, b \in \mathbb{R}^d$.
\end{definition}

With the definition of $(p, C)$-smoothness in hand, we are ready to define the class of the generalized hierarchical interaction models $\mathcal{H}(l, \mathcal{P})$.

\begin{definition} [Space of Hierarchical Composition Models, \cite{kohler2019rate}]\label{definition: hierarchical composition model}
For $l=1$ and  smoothness constraint $\mathcal{P} \subseteq(0, \infty) \times \mathbb{N}$ the  space of hierarchical composition models is defined as
$$
\begin{array}{lll}
         \mathcal{H}(1, \mathcal{P})  & := &\left\{h: \mathbb{R}^d \rightarrow \mathbb{R}: h(a )=m\left(a_{(\pi(1))}, \ldots, a_{(\pi(K))}\right),\right. \text { where }   m: \mathbb{R}^K \rightarrow \mathbb{R} \text { is }\\
    & & \,\,\,(p, C) \text {-smooth for some }(p, K) \in \mathcal{P} \text { and } \pi:\{1, \ldots, K\} \rightarrow\{1, \ldots, d\}\} .
    \end{array}
$$
For $l>1$, we recursively define
$$
\begin{array}{lll}
 \mathcal{H}(l, \mathcal{P})    &:=&  \left\{h: \mathbb{R}^d \rightarrow \mathbb{R}: h(\mathbf{x})=m\left(f_1(a), \ldots, f_K(a)\right),\right. \text { where }  m: \mathbb{R}^K \rightarrow \mathbb{R} \text { is }\\
     && \,\,\,(p, C) \text {-smooth for some }(p, K) \in \mathcal{P}\text { and } \left.f_i \in \mathcal{H}(l-1, \mathcal{P})\right\}.
\end{array}
$$	
\end{definition}


From Definition \ref{definition: hierarchical composition model}, we observe that  $\mathcal{H}(l, \mathcal{P})$ 
 comprises functions formed through the composition of smooth functions. This class serves as the focus of our next theorem, whose proof is derived from Theorem \ref{thm_var_v2}. This result represents a generalization of Theorem 1 from \cite{kohler2019rate}, relaxing the sub-Gaussian assumption on the errors to accommodate sub-Exponential errors or more general error distributions.

\begin{theorem}
    \label{thm5_v2}
  \textbf{[Dense ReLU estimation].}   Suppose that $f^* \in  \mathcal{H}(l_1, \mathcal{P}_1) $ for some $ l_1 \in \mathbb{N}$ and $\mathcal{P}_1\subset [1,\infty ) \times \mathbb{N}$. In addition, assume that each function $m$ in the definition of $f^*$ can have different smoothness $p_m =  q_m +s_m$, for $q_m \in \mathbb{N} $, $s_m \in (0,1]$,  and of potentially different input dimension $K_m $, so that $(p_m, K_m ) \in \mathcal{P}_1$. Let $K_{\max}$ be the largest input dimension and $p_{\max}$ the largest smoothness of any of the functions $m$. Suppose that all the partial derivatives of order less than or equal to $q_g$ are uniformly bounded by constant $c_2$, and each function $m$ is Lipschitz continuous
with Lipschitz constant $C_{\mathrm{Lip} } \geq1 $. Also, assume that $\max\{p_{\max},K_{\max} \} =O(1)$.      Let  
    \[
    \phi_{n} = \max_{(p, K) \in \mathcal P_1 } n^{\frac{-2p}{ (2p+K)}}. 
\]
Then there exists sufficiently large positive constants $c_3$ and $c_4$ such that if 
\begin{equation}
    \label{eqn:choice1}
       L  =\lceil c_3 \log n \rceil\,\,\,\,\,\,\text{and}\,\,\,\,\,\, \nu = \left\lceil c_4  \max_{(p, K) \in \mathcal P_1  } n^{\frac{K}{ 2(2p+K)}}\right\rceil
\end{equation}
or 
\begin{equation}
    \label{eqn:choice2}
    L  =\left\lceil c_3  \max_{(p, K) \in \mathcal P_1  } n^{\frac{K}{ 2(2p+K)}} \log n\right\rceil\,\,\,\,\,\,\text{and}\,\,\,\,\,\, \nu = \left\lceil c_4  \right\rceil,
\end{equation}
then $\hat{f}_{\mathcal{A}_n}$, with $\hat{f}$ as defined in (\ref{eqn:f_hat}), satisfies, 
\begin{equation}
   \label{eqn:den_nn1}
     \| f^*-\hat{f}_{\mathcal{A}_n}\|_{ \lt}^2 \,=\, o_{\mathbb{P}}\left(   r_n \right),
\end{equation}
where
\begin{equation}
    \label{eqn:r_n_nn}
    r_n\,:=  \,\frac{ \max\{\mathcal{A}_n,\mathcal{U}_n^2 \} \log n  }{n}    \,+\,   \phi_n \log^3(n) \log(\mathcal{A}_n) \max\{\mathcal{A}_n,\mathcal{U}_n^2 \}  , 
\end{equation}
provided that (\ref{eqn:error_cond}) holds.
\end{theorem}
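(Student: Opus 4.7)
The plan is to derive Theorem \ref{thm5_v2} as a specialization of Theorem \ref{thm1_v2} to the dense ReLU class $\mathcal{F}=\mathcal{F}(L,\nu)$ with $L,\nu$ chosen according to (\ref{eqn:choice1}) or (\ref{eqn:choice2}). To do this I would verify the three inputs required by Theorem \ref{thm1_v2}: an approximation element $\bar{f}$, a metric entropy bound $\eta_n(\delta)$, and the tail condition on the noise.

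\textbf{Step 1: Approximation.} First I would invoke the ReLU approximation result of \citet{kohler2019rate} for hierarchical composition models. Under the assumptions of the theorem (Lipschitz constituent functions with bounded partial derivatives, and $\max\{p_{\max},K_{\max}\}=O(1)$), their construction shows that with the specified $L$ and $\nu$ there exists $\bar f\in\mathcal{F}(L,\nu)$ with
\[
\|\bar f-f^*\|_\infty^2 \,\lesssim\, \max_{(p,K)\in\mathcal P_1} n^{-2p/(2p+K)}\,=\,\phi_n.
\]
Because $f^*$ is bounded and $\sqrt{\phi_n}\to 0$, any choice $\mathcal{A}_n\geq 8\max\{\|f^*\|_\infty+\mathcal{U}_n,8\|f^*\|_\infty,8\sqrt{\phi_n}\}$ satisfies the clipping requirement.

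\textbf{Step 2: Metric entropy.} Next I would bound the covering number of $\mathcal{F}_{\mathcal{A}_n}$. Using the VC/pseudo-dimension bound of Bartlett et al.\ for ReLU networks, the pseudo-dimension of $\mathcal{F}(L,\nu)$ is of order $L^2\nu^2\log(L\nu)$, and the standard entropy estimate for bounded function classes then gives
\[
\log \mathcal{N}(\delta,\mathcal{F}_{\mathcal{A}_n},\|\cdot\|_n) \,\lesssim\, L^2\nu^2\log(L\nu)\,\log(1/\delta)\,=:\,\eta_n(\delta).
\]
With $\eta_n$ of this form, condition (\ref{eqn:cond_sum_v2}) is automatic because $\eta_n(2^{-l-l'})/\eta_n(2^{-l})=1+O(l'/\log(2^l))$ while $\sum_{l'}2^{-2l'}$ converges. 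The two entropy sums in (\ref{eqn:entropy_0_v2}) are finite for all $n$ and vanish in the limit, and the residual noise term $\mathbb{P}(\|\epsilon\|_\infty>\mathcal{U}_n)\to 0$ is exactly the hypothesis (\ref{eqn:error_cond}).

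\textbf{Step 3: Critical radius and assembly.} The critical radius of $\mathcal{F}_{\mathcal{A}_n}$, as in Definition \ref{def1}, satisfies (up to constants) $n\delta_n^2 \asymp \eta_n(\delta_n)$, which with the two allowed choices of $(L,\nu)$ yields $\delta_n^2$ of order $\max_{(p,K)}n^{K/(2p+K)}\log^3 n/n\,\asymp\,\phi_n\log^3 n$. Substituting into (\ref{eqn:claim1_v2}) I obtain
\[
\|f^*-\hat f_{\mathcal{A}_n}\|_{\mathcal{L}_2}^2 \,=\, o_{\mathbb{P}}\!\left(\phi_n+\frac{\mathcal{U}_n^2\,\eta_n(\delta_n)}{n}+\mathcal{A}_n\delta_n^2\right),
\]
and collecting terms, using $\eta_n(\delta_n)\lesssim L^2\nu^2\log(L\nu)\log(n)$ and the balance at the critical radius, reproduces $r_n$ exactly as in (\ref{eqn:r_n_nn}), with the factor $\log(\mathcal{A}_n)$ coming from $\log(1/\delta_n)$ after cruder bounding, and the factor $\max\{\mathcal{A}_n,\mathcal{U}_n^2\}$ arising from the two separate regimes of the Theorem \ref{thm1_v2} bound.

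\textbf{Main obstacle.} The delicate part will be the bookkeeping of logarithmic and clipping factors in Step 3. Because Theorem \ref{thm1_v2} is designed to handle noise that is only assumed to satisfy the tail condition (\ref{eqn:error_cond}) rather than sub-Gaussianity, the threshold $\mathcal{U}_n$ enters multiplicatively through the $\mathcal{U}_n^2\eta_n(\delta_n)/n$ term; ensuring this term is absorbed into $\phi_n\log^3(n)\log(\mathcal{A}_n)\max\{\mathcal{A}_n,\mathcal{U}_n^2\}$ requires tracking how the ReLU complexity parameters $(L,\nu)$ were chosen to balance the approximation error, and being careful that both alternative architectures in (\ref{eqn:choice1}) and (\ref{eqn:choice2}) give the same product $L\nu\asymp\max_{(p,K)}n^{K/(2(2p+K))}$ up to $\log n$ factors. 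Once this balance is verified, the rest of the proof is essentially the substitution described above.
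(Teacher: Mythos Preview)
Your proposal is correct and follows essentially the same approach as the paper: the paper also invokes the approximation result of \cite{kohler2019rate} for $\bar f$, bounds the metric entropy of $\mathcal{F}_{\mathcal{A}_n}$ via the VC-dimension estimate of \cite{bartlett2019nearly} together with the covering-number bound from \cite{gyorfi2002distribution} to obtain $\eta_n(\delta)\asymp n\phi_n\log^3(n)\log(\mathcal{A}_n^2/\delta)$, verifies (\ref{eqn:entropy_0_v2}) and (\ref{eqn:cond_sum_v2}), computes the critical radius via Lemma \ref{lem9}, and substitutes into Theorem \ref{thm1_v2}. The only cosmetic difference is that the paper carries the $\log(\mathcal{A}_n)$ factor directly inside $\eta_n$ (from the covering bound for functions bounded by $\mathcal{A}_n$) rather than recovering it from $\log(1/\delta_n)$ at the end as you suggest.
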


To gain some intuition on (\ref{eqn:error_cond}), suppose that  the functions $f^*$ and  $g^*$ satisfy $\max\{\|f^*\|_{\infty},\|g^*\|_{\infty}\} = O(1)$  and the errors are sub-Exponential with fixed parameters. Then, $\mathcal{U}_n$  in  (\ref{eqn:error_cond}) scales as $\mathcal{U}_n = O(\log n)$. Hence, ignoring logarithmic factors, we obtain that 
\[
   \| f^*-\hat{f}_{\mathcal{A}_n}\|_{ \lt}^2 \,=\, o_{\mathbb{P}}\left(   \frac{1}{n} +  \phi_n  \right).
\]
Thus,  Theorem \ref{thm5_v2} extends previous results on mean estimation with dense ReLU networks, as presented in \cite{kohler2019rate}. The primary distinction lies in the relaxation of assumptions on the error distribution. Unlike prior work, which relies on sub-Gaussian error assumptions, Theorem \ref{thm5_v2} accommodates more general error distributions, including sub-Exponential noise. This added flexibility is formalized in (\ref{eqn:error_cond}) and will be crucial when studying conditional variance estimation.

    

\subsection{Conditional variance}
\label{sec:variance}

We now shift our focus to the problem of conditional variance estimation, a critical task for understanding and quantifying the variability of a response variable given covariates. Our first result provides a general framework for constructing a conditional variance estimator, $\hat{g}_{\mathcal{B}_n}$.  This is achieved by replacing  $\mathcal{F}(L^{\prime},\nu^{\prime } )$ with an arbitrary function class  $\mathcal{G}$ in (\ref{eqn:g_hat}), allowing for broad applicability across different modeling settings.

In this framework, we assume that the conditional mean estimator, $\hat{f}$ is obtained using an arbitrary function class $\mathcal{F}$ in (\ref{eqn:f_hat}) instead of $\mathcal{F}(L,\nu)$.
By separating the function classes used for mean and variance estimation, this approach enables the use of tailored methods for each component, providing greater flexibility. This generality makes the framework well-suited for diverse applications, including those with complex or high-dimensional data, where the interplay between the conditional mean and variance is particularly important.



 
\begin{theorem}
    \label{thm2_v2}
\textbf{[General variance estimation].}     Suppose that   $\bar{g} \in \mathcal{G}$ is such that 
 \[
  \| \bar{g} -g^*\|_{\infty} \,\leq\,\sqrt{\psi_n},
 \]
 so that $\phi_n$ is the approximating error.     Let $\hat{g}$ be the estimator defined in (\ref{eqn:g_hat}), replacing $\mathcal{F}(L^{\prime},\nu^{\prime})$ with a function class $\mathcal{G}$, and consider $\hat{g}_{ \mathcal{B}_n}$ with $ \mathcal{B}_n$ satisfying
 \begin{equation}
     \label{eqn:b_condition}
      \mathcal{B}_n \geq   \max\{  8\mathcal{A}_n \| g^* \|_{\infty}, 8\mathcal{A}_n \sqrt{\psi_n}, \mathcal{U}_n^2  +  \frac{9 \mathcal{U}_n \mathcal{A}_n }{4}  + \frac{65  \mathcal{A}_n^2   }{32}   \}  .
 \end{equation}
 Moreover, let  $\mathcal{G}_{ \mathcal{B}_n } \,:=\, \{g_{ \mathcal{B}_n }/\mathcal{B}_n  \,:\, g \in \mathcal{G} \}$ and assume that
$\log \mathcal N \left( \delta,  \mathcal{G}_{ \mathcal{B}_n},  \left\| \cdot\right\|_n \right)\,\leq\, \eta_n^{\prime}(\delta)$
 for some decreasing function $\eta_n^{\prime} \,:\, (0,1) \rightarrow \mathbb{R}_{\geq 0}$ that satisfies (\ref{eqn:entropy_0_v2}) and (\ref{eqn:cond_sum_v2})  with $\eta_n^{\prime}$ instead of $\eta_n$.
 
 Then if the assumptions of Theorem \ref{thm1_v2}  hold, for $\delta_n^{\prime}$ a critical radius of $ \mathcal{G}_{ \mathcal{B}_n}$,  it holds that 
\begin{equation}
    \label{eqn:claim4_v2}
   \begin{array}{lll}
\displaystyle      \| g^*-  \hat{g}_{ \mathcal{B}_n } \|_{ \lt}^2   &\,=\,& \displaystyle   o_{ \mathbb{P} }\bigg(  \mathcal{B}_n r_n  \,+\,\psi_n\,+\, \frac{ \left[\mathcal{U}_n^4\,+\, \|g^*\|_{\infty}^2 \,+\, \mathcal{A}_n^2\mathcal{U}_n^2\right]  \eta_n^{\prime}( 1/\delta_n^{\prime} ) \,+\, \mathcal{B}_n^2    }{n} \,+\, \mathcal{B}_n \cdot (\delta_n^{\prime})^2\bigg),
   \end{array}
\end{equation}
provided that $\hat{f}_{\mathcal{A}_n}$ satisfies 
\[
 \| f^*-\hat{f}_{\mathcal{A}_n}\|_{ \lt}^2 \,=\, o_{\mathbb{P}}(r_n) 
\]
for  $r_n$ the rate of convergence in Theorem \ref{thm1_v2}.
\end{theorem}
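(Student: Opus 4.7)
The plan is to mirror the strategy of Theorem~\ref{thm1_v2}, treating the squared residuals $\hat\epsilon_i^2$ as pseudo-responses and $\hat g$ as an empirical risk minimizer over $\mathcal G$ for the regression of $\hat\epsilon_i^2$ on $x_i'$. The central algebraic identity is the decomposition
\[
\hat\epsilon_i^2 \,=\, g^*(x_i') \,+\, \xi_i \,+\, \Delta_i,
\]
where $\xi_i := \epsilon_i^2 - g^*(x_i')$ is the intrinsic mean-zero noise and
\[
\Delta_i \,:=\, (f^*-\hat f_{\mathcal A_n})^2(x_i') \,+\, 2\epsilon_i\,(f^*-\hat f_{\mathcal A_n})(x_i')
\]
is the contamination from having used $\hat f_{\mathcal A_n}$ instead of $f^*$. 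Because $\{(x_i',y_i')\}_{i=1}^n$ is independent of $\hat f_{\mathcal A_n}$, the $\xi_i$ are conditionally mean zero given $(\hat f_{\mathcal A_n}, x_i')$, while $|\Delta_i|$ is controlled pointwise by $|\epsilon_i|$ and $\|f^*-\hat f_{\mathcal A_n}\|_{\infty}$. This bipartition between the two sample halves is what will allow us to couple Theorem~\ref{thm1_v2} with a fresh empirical process on $\mathcal G$.

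First I would exploit the optimality of $\hat g$: expanding $(\hat\epsilon_i^2 - g(x_i'))^2$ for $g=\hat g$ versus $g=\bar g$ and using the decomposition yields the basic inequality
\[
\|g^* - \hat g\|_n^2 \,\leq\, \|g^*-\bar g\|_n^2 \,+\, 2\langle \xi + \Delta,\, \hat g - \bar g\rangle_n.
\]
Condition~(\ref{eqn:b_condition}) forces $\|g^*\|_\infty \leq \mathcal B_n$, so clipping contracts toward $g^*$ and $\|g^*-\hat g_{\mathcal B_n}\|_n \leq \|g^*-\hat g\|_n$. Splitting the cross term, Cauchy--Schwarz combined with AM--GM gives $|\langle \Delta, \hat g - \bar g\rangle_n| \leq \tfrac14\|\hat g - \bar g\|_n^2 + \|\Delta\|_n^2$, and on the high-probability event $\{\|\epsilon\|_\infty \leq \mathcal U_n\}$ from (\ref{eqn:error_cond}), $|\Delta_i|^2 \lesssim (\mathcal U_n^2 + \mathcal A_n^2)\,(f^*-\hat f_{\mathcal A_n})^2(x_i')$, so Theorem~\ref{thm1_v2} delivers $\|\Delta\|_n^2 = o_{\mathbb P}(\mathcal B_n r_n)$ once the prefactor is absorbed into $\mathcal B_n$ via (\ref{eqn:b_condition}). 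The deterministic term $\|g^*-\bar g\|_n^2 \leq \psi_n$ contributes the $\psi_n$ summand in (\ref{eqn:claim4_v2}).

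The delicate step is the empirical process $\langle \xi, \hat g - \bar g\rangle_n$. Here I would run a peeling/chaining argument on the rescaled class $\mathcal G_{\mathcal B_n}/\mathcal B_n$, whose metric entropy is controlled by $\eta_n'$ satisfying the same summability conditions (\ref{eqn:entropy_0_v2})--(\ref{eqn:cond_sum_v2}) as for $\eta_n$. Conditioning on $\{\|\epsilon\|_\infty \leq \mathcal U_n\}$ makes $|\xi_i|\leq \mathcal U_n^2 + \|g^*\|_\infty$ almost surely, so the identical Bernstein/Dudley-type chaining used for Theorem~\ref{thm1_v2} (with sub-Gaussian noise replaced by bounded, mean-zero noise of envelope order $\mathcal U_n^2 + \|g^*\|_\infty + \mathcal A_n\mathcal U_n$) applies. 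Uniformly over $g\in \mathcal G_{\mathcal B_n}$ at population radius $r$, this produces a bound of the form $r\,\delta_n' + (\mathcal U_n^4 + \|g^*\|_\infty^2 + \mathcal A_n^2\mathcal U_n^2)\,\eta_n'(\delta_n')/n$, with $\delta_n'$ the critical radius of $\mathcal G_{\mathcal B_n}$, and an AM--GM absorption converts the first term into $(\delta_n')^2$ rescaled by $\mathcal B_n$. The parallel uniform concentration transferring $\|g^*-\hat g_{\mathcal B_n}\|_n^2$ to the $\mathcal L_2(\mathbb P_X)$-norm at cost $O(\mathcal B_n(\delta_n')^2 + \mathcal B_n^2/n)$ is then verified exactly as in the proof of Theorem~\ref{thm1_v2}, since $\eta_n'$ satisfies the same summability conditions.

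Combining all pieces and choosing $\bar g \in \mathcal G$ to realize $\|g^*-\bar g\|_\infty^2 \leq \psi_n$ yields (\ref{eqn:claim4_v2}). The principal obstacle, I expect, is the chaining step for $\langle \xi,\cdot\rangle_n$: the noise $\xi$ is no longer sub-Gaussian and depends on $(x_i',\epsilon_i)$ quadratically, so one has to truncate $\epsilon_i$ at $\mathcal U_n$, invoke (\ref{eqn:error_cond}) to dispatch the bad event, and carefully propagate the envelope factor $\mathcal U_n^4 + \|g^*\|_\infty^2 + \mathcal A_n^2\mathcal U_n^2$ through the peeling inequality so that it matches the bound stated in (\ref{eqn:claim4_v2}). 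A secondary issue is that $\Delta_i$ is not independent of the randomness used in Theorem~\ref{thm1_v2}; the sample split between $\{(x_i,y_i)\}$ and $\{(x_i',y_i')\}$ in the definition of $\hat g$ is precisely what enables the clean conditioning argument that allows both bounds to be combined.
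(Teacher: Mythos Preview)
Your proposal is correct and follows essentially the same architecture as the paper: the same decomposition $\hat\epsilon_i^2 = g^*(x_i') + \xi_i + \Delta_i$, the same basic inequality from optimality of $\hat g$, chaining on the class $\mathcal G_{\mathcal B_n}$ for the $\xi$-term with envelope $\mathcal U_n^2+\|g^*\|_\infty$, and the same Lemma~\ref{lem9}-type transfer from $\|\cdot\|_n$ to $\|\cdot\|_{\lt}$ at the end.

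The one genuine difference is your handling of $\Delta$. The paper splits it: the quadratic piece $(f^*-\hat f_{\mathcal A_n})^2$ is bounded directly via $|\hat g_{\mathcal B_n}-\bar g_{\mathcal B_n}|\le 2\mathcal B_n$ (their $T_2$, giving $\mathcal B_n r_n$), while the cross piece $2\epsilon_i(f^*-\hat f_{\mathcal A_n})$ is treated as a \emph{second} mean-zero empirical process (their $T_3$), exploiting the sample split to get $\mathbb E[\epsilon_i(f^*-\hat f_{\mathcal A_n})(x_i')\mid x_i']=0$ and then chaining with envelope $\asymp\mathcal A_n\mathcal U_n$, which is the origin of the $\mathcal A_n^2\mathcal U_n^2\eta_n'(\delta_n')/n$ term. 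Your route---absorbing all of $\Delta$ via Cauchy--Schwarz into $\|\Delta\|_n^2\lesssim(\mathcal A_n^2+\mathcal U_n^2)\|f^*-\hat f_{\mathcal A_n}\|_{n'}^2\lesssim\mathcal B_n r_n$---is simpler and, in fact, would make the $\mathcal A_n^2\mathcal U_n^2$ summand redundant; the theorem's stated bound is then an immediate consequence. Your exposition is slightly inconsistent on this point (you first say you Cauchy--Schwarz all of $\Delta$, but then list $\mathcal A_n\mathcal U_n$ in the chaining envelope, which only appears if the cross term is chained as the paper does), but this is cosmetic rather than a gap.
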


Theorem \ref{thm2_v2} establishes a general upper bound for conditional variance estimation applicable to arbitrary estimators. These estimators are constructed by first fitting a conditional mean model, computing the residuals, and then fitting another mean model to the squared residuals.

\begin{remark}
    \label{remark:1}
If the $\{\epsilon_i/\sqrt{g^*(x_i)}\}_{i=1}^n  $ are sub-Gaussian or sub-Exponential with fixed parameter and $\|g^*\|_{\infty} =O(1)$, then $\mathcal{U}_n$, $\mathcal{A}_n$ and $\mathcal{B}_n$ all behave like polynomial functions of $\log n$. Hence, ignoring all logarithmic  factors, Theorem \ref{thm2_v2} 
implies that 
\begin{equation}
    \label{eqn:claim4_v3}
  \| g^*-  \hat{g}_{ \mathcal{A}_n } \|_{ \lt}^2 \,=\, o_{\mathbb{P}}\left(   r_n\,+\,\psi_n\,+\, \frac{ \eta_n^{\prime}( 1/\delta_n^{\prime} )}{n}  \,+\,\delta_n^{\prime 2}  \right)  
\end{equation}
as long as $ \| f^*-\hat{f}_{\mathcal{A}_n}\|_{ \lt}^2 \,=\, o_{\mathbb{P}}(r_n) $.  Here, the factor  $r_n$ reflects the importance of estimating $f^*$ well to ensure better estimation of the conditional variance. Furthermore, the expression in (\ref{eqn:claim4_v3}) gives a plug-in formula for the convergence rate that depends on the critical radius for the function class $\mathcal{G}$, and for the approximation error $\psi_n$.

\end{remark}

We are now ready to state our main result concerning conditional variation estimation with ReLU networks. This a direct consequence of Theorems \ref{thm5_v2} and \ref{thm2_v2}.

\begin{corollary}
      \label{thm_var_v2}
\textbf{[Dense ReLU estimation].}    Consider the notation and conditions of Theorem \ref{thm5_v2}. Suppose that 
$g^* \in  \mathcal{H}(l_2, \mathcal{P}_2) $ for some $ l_2 \in \mathbb{N}$ and $\mathcal{P}_2\subset [1,\infty ) \times \mathbb{N}$. In addition, assume that each function $m$ in the definition of $g^*$ can have different smoothness $p_m =  q_m +s_m$, for $q_m \in \mathbb{N} $, $s_m \in (0,1]$,  and of input dimension $K_m $, so that $(p_m, K_m ) \in \mathcal{P}_2$. Denote by $K_{\max}$  the largest input dimension and $p_{\max}$ the largest smoothness of any of the functions $m$. Suppose that all the partial derivatives of order less than or equal to $q_m$ are uniformly bounded by constant $c_2$, and each function $m$ is Lipschitz continuous
with Lipschitz constant $C_{\mathrm{Lip} } \geq1 $. Also, assume that $\max\{p_{\max},K_{\max} \} =O(1)$.      Let  
    \[
    \psi_{n} = \max_{(p, K) \in \mathcal P_2} n^{\frac{-2p}{ (2p+K)}}. 
\]
If $L^{\prime}$ and $\nu^{\prime} $ in  (\ref{eqn:g_hat}) are chosen  as in (\ref{eqn:choice1}) and (\ref{eqn:choice2}), respectively, with   $\mathcal{P}_1$ replaced by $\mathcal{P}_2$, then the estimator $\hat{g}_{\mathcal{B}_n}$ based on (\ref{eqn:g_hat}) satisfies 
\begin{equation}
    \label{eqn:var_rate}
     \| g^*-  \hat{g}_{ \mathcal{B}_n } \|_{ \lt}^2 \,=\, o_{\mathbb{P} }\left( \mathrm{poly}_1(\log n)\bigg[ \phi_n\,+\,  \psi_n \,+\, \frac{1}{n}  \bigg]    \right), 
\end{equation}
provided that $ \max\{\|f^*\|_{\infty},\|g^*\|_{\infty},\mathcal{U}_n \}  \,=\,  O( \mathrm{poly}_2(\log n) ) $. Here, $\mathrm{poly}_1(\cdot)$  and $\mathrm{poly}_2(\cdot)$  are polynomial functions.
\end{corollary}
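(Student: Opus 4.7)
The plan is to derive Corollary \ref{thm_var_v2} as a direct application of Theorem \ref{thm2_v2}, instantiated with the dense ReLU class $\mathcal{G} = \mathcal{F}(L', \nu')$, using the mean-estimation rate supplied by Theorem \ref{thm5_v2}. Since both ingredients are already in hand, the task reduces to verifying the hypotheses of Theorem \ref{thm2_v2} and plugging ReLU-specific bounds into (\ref{eqn:claim4_v2}).

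First, I would invoke Theorem \ref{thm5_v2} with the prescribed $L, \nu$ to obtain $\|f^* - \hat{f}_{\mathcal{A}_n}\|_{\lt}^2 = o_{\mathbb{P}}(r_n)$. The hypothesis $\max\{\|f^*\|_\infty, \|g^*\|_\infty, \mathcal{U}_n\} = O(\mathrm{poly}_2(\log n))$ allows us to take $\mathcal{A}_n = \Theta(\mathrm{poly}(\log n))$, so that $r_n = O(\mathrm{poly}(\log n)[\phi_n + 1/n])$. This secures the hypothesis of Theorem \ref{thm2_v2} on the quality of $\hat{f}_{\mathcal{A}_n}$.

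Second, I would verify the four ingredients of Theorem \ref{thm2_v2} for $\mathcal{G} = \mathcal{F}(L', \nu')$ under the assumption $g^* \in \mathcal{H}(l_2, \mathcal{P}_2)$. For \emph{approximation}, the same ReLU approximation theorem for hierarchical composition models used to derive Theorem \ref{thm5_v2} (as in \cite{kohler2019rate}) produces $\bar{g} \in \mathcal{F}(L', \nu')$ with $\|\bar{g} - g^*\|_\infty^2 \leq \psi_n$; this is immediate because the structural assumptions on $g^*$ are identical to those on $f^*$. For the \emph{boundedness} constant, any $\mathcal{B}_n = \Theta(\mathrm{poly}(\log n))$ satisfies (\ref{eqn:b_condition}) under the polylog hypotheses on $\mathcal{A}_n$, $\mathcal{U}_n$, and $\|g^*\|_\infty$. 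For \emph{entropy}, standard VC-type covering-number bounds for bounded-weight ReLU networks of depth $L'$ and width $\nu'$ yield $\log \mathcal{N}(\delta, \mathcal{G}_{\mathcal{B}_n}, \|\cdot\|_n) \leq c\, L' (\nu')^2 \log(1/\delta)$ up to polylog-in-$n$ factors; taking this bound as $\eta_n'(\delta)$, both (\ref{eqn:entropy_0_v2}) and (\ref{eqn:cond_sum_v2}) are standard consequences of its $\log(1/\delta)$ dependence. Finally, for the \emph{critical radius}, the usual localization argument with this entropy and the chosen architecture gives $(\delta_n')^2 = O(\mathrm{poly}(\log n)[\psi_n + 1/n])$.

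Substituting these bounds into (\ref{eqn:claim4_v2}): the term $\mathcal{B}_n r_n$ contributes $O(\mathrm{poly}(\log n)[\phi_n + 1/n])$; $\psi_n$ is retained; the bracketed entropy term divided by $n$ is $O(\mathrm{poly}(\log n)/n)$ because $\eta_n'(1/\delta_n')$ is polylogarithmic in $n$; and $\mathcal{B}_n^2/n$ together with $\mathcal{B}_n (\delta_n')^2$ contribute $O(\mathrm{poly}(\log n)[\psi_n + 1/n])$. Collecting gives exactly the rate (\ref{eqn:var_rate}). The main obstacle is the careful bookkeeping of polylog factors emerging from the entropy bound and from the critical-radius localization under both architectures (\ref{eqn:choice1}) and (\ref{eqn:choice2}), and checking that all such factors can be absorbed into the single envelope $\mathrm{poly}_1(\log n)$; every other step is a direct citation or substitution.
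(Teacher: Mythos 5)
Your route is the paper's route: the paper gives no separate proof of Corollary \ref{thm_var_v2} beyond declaring it a direct consequence of Theorems \ref{thm5_v2} and \ref{thm2_v2}, with the verification machinery (Kohler--Langer approximation for $\mathcal{H}(l,\mathcal{P})$, VC-based covering bounds, the critical radius from Lemma \ref{lem9}, and polylog absorption of $\mathcal{A}_n,\mathcal{B}_n,\mathcal{U}_n$) being exactly the checks carried out in the proof of Theorem \ref{thm5_v2}, now repeated for $\mathcal{G}=\mathcal{F}(L',\nu')$ with $\psi_n$ in place of $\phi_n$. Two bookkeeping claims in your verification are off, though neither changes the conclusion. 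First, the entropy bound for the clipped network class is $\log \mathcal{N}(\delta,\mathcal{G}_{\mathcal{B}_n},\|\cdot\|_n)\lesssim (L')^2(\nu')^2\log(L'\nu')\log(\mathcal{B}_n^2\delta^{-1})\lesssim n\,\psi_n\log^3(n)\log(\mathcal{B}_n^2\delta^{-1})$ (VC dimension $\lesssim (L')^2(\nu')^2\log(L'\nu')$ as in the proof of Theorem \ref{thm5_v2}), not $c\,L'(\nu')^2\log(1/\delta)$; under the deep-narrow choice (\ref{eqn:choice2}) the missing factor of $L'$ is polynomial in $n$, not polylogarithmic. Second, and consequently, $\eta_n'(\delta_n')$ is \emph{not} polylogarithmic in $n$: it is of order $n\,\psi_n\,\mathrm{poly}(\log n)$ (this is precisely why the critical radius satisfies $(\delta_n')^2\asymp \psi_n\,\mathrm{poly}(\log n)+\log n/n$), so the term $[\mathcal{U}_n^4+\|g^*\|_\infty^2+\mathcal{A}_n^2\mathcal{U}_n^2]\,\eta_n'(\delta_n')/n$ in (\ref{eqn:claim4_v2}) contributes $\psi_n\,\mathrm{poly}(\log n)$ rather than $\mathrm{poly}(\log n)/n$. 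Since that contribution is absorbed into the $\psi_n$ term of (\ref{eqn:var_rate}), your final rate stands once these two corrections are made.
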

  

Notice that as in Theorem \ref{thm5_v2}, our only requirement on the errors is  (\ref{eqn:error_cond}). While,
Corollary  \ref{thm_var_v2}  requires that $\mathcal{U}_n =  O( \mathrm{poly}_2(\log n) )$, this is done to conveniently write the rate in (\ref{eqn:var_rate}), but we could also write the upper bound for general $\mathcal{U}_n$, inflating the upper bound in (\ref{eqn:var_rate}) with a  polynomial function of  $\mathcal{U}_n$.

Finally, we highlight that Corollary \ref{thm_var_v2} is the first result of its kind for conditional variance estimation with  ReLU networks.  When  $\phi_n \lesssim \psi_n$ the rate only depends on the function $g^*$. In contrast, if $\psi_n << \phi_n$ then rate becomes $\phi_n$.



\subsection{Homoscedastic case}

Thus far, we have focused on the fully nonparametric case, allowing both the conditional mean and variance to be arbitrary functions. In this subsection, we take a step back to consider the homoscedastic case, where $g^*(x) =  \sigma^{2}$ for all $x \in \mathbb{R}^d$ with $\sigma^{2} = O(1)$. Within this simplified framework, we establish a general upper bound for estimating $\sigma^2$. This is achieved by estimating the conditional mean using (\ref{eqn:f_hat}), where  $\mathcal{F}(L,\nu)$ is replaced with an arbitrary function class, resulting in $\hat{f}_{\mathcal{A}_n}$, an estimator for the conditional mean.


As an estimator for $\sigma^2$ consider 
\begin{equation}
    \label{eqn:hom_estimator}
      \hat{\sigma}^2\,:=\,   \underset{ v\in [0,\mathcal{B}_n] }{\arg \min} \sum_{i=1}^n (\hat{\epsilon}_i^2 - v  )^2,
\end{equation}
$\hat{\epsilon}_i\,:=\,  y_i^{\prime} - \hat{f}_{\mathcal{A}_n}(x_i^{\prime}) $ for $i=1,\ldots,n$, for $\mathcal{B}_n>0$. 

We are now ready to state our main result for the homoscedastic setting.

\begin{theorem}
    \label{thm3_v2}
    Suppose that $\mathcal{B}_n $ satisfies
    \[
     \mathcal{B}_n \geq   \max\{  8\mathcal{A}_n \sigma^2,  \mathcal{U}_n^2  +  \frac{9 \mathcal{U}_n \mathcal{A}_n }{4}  + \frac{65  \mathcal{A}_n^2   }{32}   \}  .
    \]
    and $\mathcal{B}_n \,=\, o(n^b)$ for some $b>0$. Moreover, assume that the conditions of Theorem \ref{thm1_v2} hold. Then 
    \begin{equation}
    \label{eqn:claim3_v2}
    (\sigma^2 - \hat{\sigma}^2)^2    \,=\, o_{ \mathbb{P} }\left(  \mathcal{B}_n r_n \,+\,\frac{ \left[\mathcal{U}_n^4\,+\, \mathcal{A}_n^2\mathcal{U}_n^2\right]  \log(n )   \,+\,\mathcal{B}_n^2 }{n}  \,+\, \frac{\mathcal{B}_n \log(n)}{n}\right) \\
\end{equation}
provided that $\hat{f}_{\mathcal{A}_n}$ satisfies  $\| f^*-\hat{f}_{\mathcal{A}_n}\|_{ \lt}^2 \,=\, o_{\mathbb{P}}(r_n)$.
\end{theorem}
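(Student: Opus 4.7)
The plan is to realize Theorem \ref{thm3_v2} as the one-dimensional specialization of Theorem \ref{thm2_v2}, where the function class $\mathcal{G}$ collapses to the class of constants in $[0,\mathcal{B}_n]$, but to argue directly in order to exploit this simpler structure. First, as a reduction, $\hat{\sigma}^2$ is the Euclidean projection of the sample mean $\bar{\sigma}^2 := n^{-1}\sum_{i=1}^n \hat{\epsilon}_i^2$ onto $[0,\mathcal{B}_n]$; the hypothesis $\mathcal{B}_n \ge 8\mathcal{A}_n \sigma^2$ places $\sigma^2$ inside this interval, so projection can only shrink the distance to $\sigma^2$, yielding $|\hat{\sigma}^2 - \sigma^2|\le|\bar{\sigma}^2 - \sigma^2|$. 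It therefore suffices to bound $(\bar{\sigma}^2 - \sigma^2)^2$.

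Writing $y_i' = f^*(x_i') + \epsilon_i'$ and $\Delta_i := (f^* - \hat{f}_{\mathcal{A}_n})(x_i')$, I would decompose $\bar{\sigma}^2 - \sigma^2 = T_1 + T_2 + T_3$ with $T_1 := \tfrac{1}{n}\sum_i(\epsilon_i'^2 - \sigma^2)$, $T_2 := \tfrac{2}{n}\sum_i \epsilon_i'\Delta_i$, and $T_3 := \tfrac{1}{n}\sum_i \Delta_i^2$, then bound each term on the high-probability event $\mathcal{E}:=\{\max_i|\epsilon_i'|\le \mathcal{U}_n\}$ furnished by (\ref{eqn:error_cond}). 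For $T_1$, Bernstein applied to iid mean-zero summands of magnitude at most $\mathcal{U}_n^2$ with variance $O(\sigma^2\mathcal{U}_n^2)$ produces $T_1^2 = O_P(\mathcal{U}_n^4\log n/n)$. For $T_3$, I would condition on $\hat{f}_{\mathcal{A}_n}$ (which is independent of $\{x_i'\}_{i=1}^n$), so the summands $\Delta_i^2\in[0,4\mathcal{A}_n^2]$ are iid with mean $\|f^* - \hat{f}_{\mathcal{A}_n}\|_{\lt}^2 = o_P(r_n)$ by hypothesis; Markov's inequality then yields $T_3 = o_P(r_n)$ and hence $T_3^2 = o_P(\mathcal{B}_n r_n)$ because $r_n \lesssim \mathcal{B}_n$.

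The most delicate term is the cross term $T_2$. Conditionally on $(\hat{f}_{\mathcal{A}_n},\{x_i'\})$, its summands are independent and mean-zero, of magnitude $\le 2\mathcal{U}_n\mathcal{A}_n$ on $\mathcal{E}$, with conditional variance $\sigma^2\Delta_i^2$, so Bernstein produces $|T_2|\lesssim \sqrt{\sigma^2 T_3 \log n/n} + \mathcal{U}_n\mathcal{A}_n \log n/n$ with overwhelming conditional probability. Plugging in $T_3 = o_P(r_n)$ and $\sigma^2 \lesssim \mathcal{B}_n$ yields $T_2^2 = o_P(\mathcal{B}_n r_n + \mathcal{U}_n^2\mathcal{A}_n^2 \log n/n)$. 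Aggregating through $(\bar{\sigma}^2 - \sigma^2)^2 \le 3(T_1^2 + T_2^2 + T_3^2)$, together with the fact that $\mathbb{P}(\mathcal{E}^c)\to 0$ and everything is bounded by $\mathcal{B}_n^2$ off $\mathcal{E}$, delivers the stated rate; the extra $\mathcal{B}_n^2/n$ and $\mathcal{B}_n\log n/n$ slack in (\ref{eqn:claim3_v2}) can either be absorbed as overhead or recovered by directly invoking Theorem \ref{thm2_v2} with $\mathcal{G}$ equal to constants on $[0,\mathcal{B}_n]$, for which $\log\mathcal{N}(\delta,\mathcal{G}_{\mathcal{B}_n},\|\cdot\|_n)\le\log(1/\delta)$ and the critical radius satisfies $\delta_n'\asymp \sqrt{\log n/n}$, so that the generic penalty $\mathcal{B}_n(\delta_n')^2$ of Theorem \ref{thm2_v2} supplies exactly those summands.

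The main obstacle is the cross term $T_2$: a naive Cauchy--Schwarz bound would produce an extra factor of $\mathcal{U}_n$ (via $\sqrt{\tfrac{1}{n}\sum_i\epsilon_i'^2}\le \mathcal{U}_n$) that would blow up the target rate, so the conditional Bernstein argument---exploiting that $\mathrm{Var}(\epsilon_i'|x_i')=\sigma^2$ is $O(1)$ rather than of order $\mathcal{U}_n^2$---is essential. The accompanying bookkeeping for the truncation event $\mathcal{E}$ and for removal of the clipping at $\mathcal{B}_n$ is routine given $\mathbb{P}(\mathcal{E}^c)\to 0$ and the uniform boundedness, so no further subtlety is expected.
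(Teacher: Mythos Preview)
Your approach is correct but genuinely different from the paper's. The paper simply specializes Theorem~\ref{thm2_v2} to the constant class $\mathcal{G}=[0,\mathcal{B}_n]$: it records the metric-entropy bound $\log\mathcal{N}(\delta,\mathcal{G}_{\mathcal{B}_n},\|\cdot\|_n)\lesssim\log(\mathcal{B}_n/\delta)$, verifies the two summability conditions (\ref{eqn:entropy_condition})--(\ref{eqn:cond_sums2}), reads off the critical radius $\delta_n'\asymp\sqrt{\log n/n}$ from Lemma~\ref{lem9}, and plugs everything into (\ref{eqn:claim4_v2}) with $\psi_n=0$. No new decomposition or tail argument is carried out. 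By contrast, you bypass the general machinery entirely: the projection/contraction step reduces to bounding the unconstrained sample mean, and the three-term split $T_1+T_2+T_3$ handled by conditional Bernstein is both elementary and self-contained. What the paper's route buys is brevity and reuse of the heteroscedastic framework; what yours buys is transparency and a sharper conclusion---your bounds on $T_1,T_2,T_3$ show that the summands $\mathcal{B}_n^2/n$ and $\mathcal{B}_n\log n/n$ in (\ref{eqn:claim3_v2}) are artifacts of the general theorem rather than intrinsic, a point the paper does not extract. The one place where your sketch is slightly informal is the passage from $\|f^*-\hat f_{\mathcal A_n}\|_{\lt}^2=o_{\mathbb P}(r_n)$ to $T_3=o_{\mathbb P}(r_n)$: Markov alone does not convert convergence in probability to convergence of the conditional mean, so you should split on the event $\{\|f^*-\hat f_{\mathcal A_n}\|_{\lt}^2\le \epsilon r_n\}$ and apply a conditional Chebyshev (or Bernstein) on that event, but this is routine.
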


The upper bound in Theorem \ref{thm3_v2} is established for an arbitrary function class $\mathcal{F}$ employed in estimating the conditional mean. We next focus on applying this result specifically to ReLU neural networks.

\begin{corollary}
      \label{cor2_v2}
\textbf{[Dense ReLU estimation].}   Suppose that $\hat{f}$ is given as in (\ref{eqn:f_hat}) and  consider the notation and conditions of Theorem \ref{thm5_v2}. Suppose that $g^*(x) =\sigma^2 = O(1)$ for all $x$ and  let $\hat{\sigma}^2$ be the estimator defined (\ref{eqn:hom_estimator}) with $\mathcal{B}_n$ satisfying the conditions of Theorem \ref{thm3_v2}. Then 
\begin{equation}
    \label{eqn:var_rate_v2}
        (\sigma^2 - \hat{\sigma}^2)^2    \,=\, o_{\mathbb{P} }\left( \mathrm{poly}_1(\log n)\bigg[ \phi_n\,+\, \frac{1}{n}  \bigg]    \right) 
\end{equation}
provided that $  \max\{\|f^*\|_{\infty},\sigma^{2},\mathcal{U}_n  \}\,=\,  O( \mathrm{poly}_2(\log n) ) $, where $\mathrm{poly}_1(\cdot)$  and $\mathrm{poly}_2(\cdot)$ are polynomial functions.
\end{corollary}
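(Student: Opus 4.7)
The plan is to combine Theorem \ref{thm5_v2}, which governs the ReLU conditional-mean estimator $\hat{f}_{\mathcal{A}_n}$, with Theorem \ref{thm3_v2}, which controls the homoscedastic variance estimator $\hat{\sigma}^2$. Since $L$ and $\nu$ in (\ref{eqn:f_hat}) are chosen as in Theorem \ref{thm5_v2} and $f^* \in \mathcal{H}(l_1, \mathcal{P}_1)$, that theorem yields
\[
  \| f^* - \hat{f}_{\mathcal{A}_n}\|_{\mathcal{L}_2}^2 \,=\, o_{\mathbb{P}}(r_n),
\]
where $r_n$ is the rate given in (\ref{eqn:r_n_nn}). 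This is exactly the input required by Theorem \ref{thm3_v2} in order to conclude a bound on the variance estimator.

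Plugging $\hat{f}_{\mathcal{A}_n}$ into $\hat{\sigma}^2$ and invoking Theorem \ref{thm3_v2} gives
\[
  (\sigma^2 - \hat{\sigma}^2)^2 \,=\, o_{\mathbb{P}}\!\left( \mathcal{B}_n r_n \,+\, \frac{[\mathcal{U}_n^4 + \mathcal{A}_n^2 \mathcal{U}_n^2]\log n}{n} \,+\, \frac{\mathcal{B}_n^2 \,+\, \mathcal{B}_n \log n}{n} \right).
\]
It remains to show that this collapses to the desired form. Under the assumption $\max\{\|f^*\|_\infty, \sigma^2, \mathcal{U}_n\} = O(\mathrm{poly}_2(\log n))$, the lower-bound requirements on $\mathcal{A}_n$ (coming from Theorem \ref{thm1_v2}, once $\sqrt{\phi_n}$ is controlled) and on $\mathcal{B}_n$ (from the statement of Theorem \ref{thm3_v2}) can be met by polylog choices, namely $\mathcal{A}_n = O(\mathrm{poly}(\log n))$ and $\mathcal{B}_n = O(\mathrm{poly}(\log n))$. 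Substituting these into (\ref{eqn:r_n_nn}) gives $r_n = O(\mathrm{poly}(\log n)\,[\phi_n + 1/n])$, so that $\mathcal{B}_n r_n$ and the three remaining $1/n$ summands are all absorbed into a single $\mathrm{poly}_1(\log n)\cdot[\phi_n + 1/n]$ term, which is precisely (\ref{eqn:var_rate_v2}).

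The main obstacle, such as it is, is purely bookkeeping: tracking how the various polylog factors in $\mathcal{A}_n$, $\mathcal{B}_n$, $\mathcal{U}_n$ propagate through the three summands above, and verifying that no residual term of worse order than $\mathrm{poly}(\log n)\cdot[\phi_n + 1/n]$ emerges. Because Theorems \ref{thm5_v2} and \ref{thm3_v2} already encapsulate the empirical-process and approximation arguments, no fundamentally new estimate needs to be proved; the corollary reduces to plugging the ReLU architecture into the two preceding general results and simplifying.
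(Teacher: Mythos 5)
Your proposal is correct and follows exactly the route the paper intends: Corollary \ref{cor2_v2} is treated as a direct consequence of Theorem \ref{thm5_v2} (which supplies $\|f^*-\hat{f}_{\mathcal{A}_n}\|_{\mathcal{L}_2}^2 = o_{\mathbb{P}}(r_n)$ for the ReLU estimator) plugged into Theorem \ref{thm3_v2}, with the polylogarithmic bounds on $\mathcal{A}_n$, $\mathcal{B}_n$, $\mathcal{U}_n$ absorbing all remaining factors into $\mathrm{poly}_1(\log n)[\phi_n + 1/n]$. No gap; the bookkeeping you describe is all that the paper's argument requires.
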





\subsection{Confidence intervals}
\label{sec:confidence_int}

In this subsection, we demonstrate the practical utility of our theoretical results by applying them to the construction of confidence intervals. Specifically, our objective is to formalize the methodology introduced in Section \ref{sec:con_intervals}, providing a rigorous framework for deriving confidence intervals based on the principles established earlier in the paper.

A key component of this approach is the use of the bootstrap \cite{efron1994introduction,efron1992bootstrap}, which serves as a powerful tool for approximating the distribution of the prediction error in out-of-sample scenarios. By leveraging the bootstrap, we estimate the variability of predictions and account for uncertainties in the model. Unlike traditional methods that might rely on restrictive parametric assumptions, the bootstrap offers a flexible, data-driven mechanism for assessing the sampling distribution of the prediction error.

This subsection details how our theoretical bounds for conditional mean and variance estimators support the bootstrap-based construction of confidence intervals. By aligning the theoretical results with practical implementation, we aim to provide a robust method for inference that is particularly suited to nonparametric and high-dimensional settings. 

We now present the first result from this subsection.

\begin{theorem}
\label{thm6}
Suppose that $A_n$ satisfies  $A_n> \max\{\mathcal{A}_n, \mathcal{B}_n  \}  $,  $\mathbb{P}(\vert Y\vert \leq A_n)= 1$ and let
\begin{equation}
    \label{eqn:a02}
    a_0 \,:=\, \vert \mathbb{E}( g^*(X) )  \,-\, \mathbb{E}( \hat{g}_A^{(B+1)}(X) \,|\,  \hat{g}_A^{(B+1)} ) \vert
\end{equation}
and  for $\alpha\in (0,1)$ let 
\begin{equation}
    \label{eqn:balpha0}
    b(\alpha)  \,:=\,  \frac{32 [ \mathbb{E}((\hat{f}_{A_n}^{(B+1)}(X) - Y )^2  \,-\, \mathbb{E}(g^*(X))  ]^{1/2} }{ 5\alpha(1-0.58\alpha)    }
\end{equation}
and suppose that $B\alpha^2 /\widetilde{B} \rightarrow \infty$ fast enough.  Let  $a(\alpha) $ given as (\ref{eqn:a}),
and  $\Delta(\alpha)$ as in (\ref{eqn:delt}). Then  
\begin{equation}
    \label{eqn:coverage2}
    \mathbb{P}\bigg(   f^*(X) \in \bigg[ \frac{1}{\widetilde{B}} \sum_{j=B+1}^{B+\widetilde{B}} \hat{f}^{(j)}_{A_n}(X)  \,\pm\,  \Delta(\alpha)  \bigg] \bigg)\,\geq\, 1-\alpha.
\end{equation}
In addition,  if $\mathrm{support}(\hat{F}) \subset [-A_n^{\prime},A_n^{\prime }]$  with $\max\{A_n,A_n^{\prime}\} = O( \mathrm{poly}_1(\log n) )$, $f^*$ and $g^*$ satisfy the conditions in Theorem \ref{thm5_v2} and Corollary \ref{thm_var_v2}, then there exists $\widetilde{A}_n$ with $\widetilde{A}_n = O( \mathrm{poly}_2(\log n) )$ such that using $\widetilde{A}_n$ instead of $A_n$ in Steps 4-5 of our algorithm leads to 
\begin{equation}
    \label{eqn:lenght}
     \Delta(\alpha)\,:=\, o_{\mathbb{P}}\left( \mathrm{poly}_3(\log n) \bigg[ \frac{\phi_n^{1/4}+ \psi_n^{1/4} }{ \widetilde{B}^{1/2} \alpha^{1/2} }  \,+\, \frac{ \log^{1/4}(1/\alpha) }{  \widetilde{B}^{1/2} \alpha^{1/2} n^{1/4} }  \,+\,  \frac{\phi_n^{1/2} }{\alpha} \bigg] \right),
\end{equation}
provided that in Section \ref{sec:con_intervals} we set $L_j =L $, $\nu_j =\nu$, $L_j^{\prime} = L^{\prime}$, and $\nu^{\prime}_j = \nu^{\prime}$, with $(L,\nu,L^{\prime},\nu^{\prime})$ as in Theorem \ref{thm5_v2} and Corollary \ref{thm_var_v2}, and $B = O(n^q)$ for an appropriate $q>0$ such that 
$$  \underset{n \rightarrow \infty}{\lim } \,\, n^{q} \,\cdot\,\mathbb{P}(\| \epsilon\|_{\infty}  > \mathcal{U}_n )  = 0 .$$
Here, $\mathrm{poly}_j(\cdot)$ for $j=1,2,3$ are polynomial functions. 
\end{theorem}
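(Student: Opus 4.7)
The argument splits into (A) the coverage inequality \eqref{eqn:coverage2} and (B) the length bound \eqref{eqn:lenght}.

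For (A), let $\mathcal{D}_0$ denote the data in $I_1\cup I_2\cup I_3\cup I_4$, so that $\hat f,\hat g,\hat F$ are fixed given $\mathcal{D}_0$ and the bootstrap residuals $\tilde\epsilon_i^{(j)}$ are i.i.d.\ from $\hat F$ across $(i,j)$; in particular, the bootstrap fits $\hat f^{(1)},\ldots,\hat f^{(B+\widetilde B)}$ and the calibration statistics $V_j := |I_4|^{-1}\sum_{i\in I_4}(y_i-\hat f^{(j)}_{A_n}(x_i))^2$ are conditionally i.i.d. Writing $\mu(x):=\mathbb E[\hat f^{(B+1)}_{A_n}(x)\mid \mathcal{D}_0]$ and $\bar f(X):=\widetilde B^{-1}\sum_{j=B+1}^{B+\widetilde B}\hat f^{(j)}_{A_n}(X)$, I would decompose
\begin{equation*}
   f^*(X)-\bar f(X) \,=\, [\mu(X)-\bar f(X)] \,+\, [f^*(X)-\mu(X)]
\end{equation*}
and control the two summands, with failure probability at most $\alpha/2$ each, by $2\sqrt{a(\alpha)/(\alpha\widetilde B)}$ and $b(\alpha)$, respectively.

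The Monte Carlo term is handled by Markov's inequality applied to its square conditional on a $\sigma$-field $\mathcal{D}_1\supset\mathcal{D}_0$ that makes $a(\alpha)$ measurable (adjoin the first $B$ bootstrap fits together with $\hat g^{(B+1)}$). Since $\mathrm{Var}(\hat f^{(B+1)}_{A_n}(x)\mid\mathcal{D}_1)$ is dominated by $\mathbb{E}[(\hat f^{(B+1)}_{A_n}(x)-f^*(x))^2\mid\mathcal{D}_1]$, the task reduces to establishing
\begin{equation*}
    \|f^*-\hat f^{(j)}_{A_n}\|_{\lt}^2 \,\le\, a(\alpha) \quad \text{for every } j\in\{B+1,\ldots,B+\widetilde B\}
\end{equation*}
on an event of probability at least $1-\alpha/4$. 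This reduction combines three ingredients: (i) an exchangeability/rank argument for the $V_j$, which are conditionally i.i.d.\ given $\mathcal{D}_0$, plus a union bound, yielding $V_{B+j}\le a_1(\alpha)$ simultaneously in $j$ with probability $\ge 1-\alpha/4$; (ii) Hoeffding's inequality on $I_4$, which, under $|y_i|,|\hat f^{(j)}_{A_n}|,|\hat g^{(B+1)}_{A_n}|\le A_n$, reproduces exactly the correction terms $A_n^2\sqrt{32[\log(8/\alpha)+\log\widetilde B]/n}$ and $A_n\sqrt{8[\log(64/\alpha)+\log\widetilde B]/n}$ in \eqref{eqn:a}; and (iii) the identity $\mathbb E[(Y-\hat f^{(j)}_{A_n}(X))^2\mid\hat f^{(j)}]=\|f^*-\hat f^{(j)}_{A_n}\|_{\lt}^2+\mathbb E[g^*(X)]$ combined with $|\mathbb E[g^*(X)]-\mathbb E[\hat g^{(B+1)}_{A_n}(X)\mid\hat g^{(B+1)}]|\le a_0$, which absorbs the nuisance variance term into $a(\alpha)$. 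The bias $|f^*(X)-\mu(X)|$ is then controlled by Markov applied to the first moment: $\mathbb P(|f^*(X)-\mu(X)|>b(\alpha))\le \mathbb E|f^*(X)-\mu(X)|/b(\alpha)\le\|f^*-\hat f^{(B+1)}_{A_n}\|_{\lt}/b(\alpha)$ (via Jensen and Cauchy--Schwarz), and the identity $\|f^*-\hat f^{(B+1)}_{A_n}\|_{\lt}^2=\mathbb E[(\hat f^{(B+1)}_{A_n}(X)-Y)^2]-\mathbb E[g^*(X)]$ supplies the form \eqref{eqn:balpha0}, with the constant $32/(5\alpha(1-0.58\alpha))$ absorbing additional slack from the tightening of constants.

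For (B), the length bound, I would plug the rates from Theorem \ref{thm5_v2} and Corollary \ref{thm_var_v2} into $a(\alpha),a_0,b(\alpha)$: under the polylogarithmic growth of $A_n,\widetilde A_n,\mathcal U_n$ and the stated smoothness, one has $\|\hat f^{(B+1)}_{A_n}-f^*\|_{\lt}^2=o_{\mathbb P}(\mathrm{poly}(\log n)\,\phi_n)$ and $\|\hat g^{(B+1)}_{A_n}-g^*\|_{\lt}^2=o_{\mathbb P}(\mathrm{poly}(\log n)[\phi_n+\psi_n+n^{-1}])$. Hence $a_0=o_{\mathbb P}(\mathrm{poly}(\log n)(\phi_n+\psi_n+n^{-1})^{1/2})$ (Cauchy--Schwarz), $b(\alpha)=o_{\mathbb P}(\mathrm{poly}(\log n)\,\phi_n^{1/2}/\alpha)$, and the Hoeffding corrections in $a(\alpha)$ contribute $\mathrm{poly}(\log n)\sqrt{\log(1/\alpha)/n}$. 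Substituting these into \eqref{eqn:delt}, taking square roots, and dividing by $\sqrt{\alpha\widetilde B}$ produces \eqref{eqn:lenght} after collecting dominant terms. The replacement of $A_n$ by $\widetilde A_n=O(\mathrm{poly}(\log n))$ in Steps~4--5 is the standard boundedness substitution valid on the high-probability event of (A), and the tail hypothesis $n^q\,\mathbb P(\|\epsilon\|_\infty>\mathcal U_n)\to 0$ with $B=O(n^q)$ ensures that the exchangeability union bound over $B$ bootstrap samples does not spoil the probabilistic guarantees.

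The main obstacle is the joint bookkeeping in the $a(\alpha)$-reduction: aligning (i)--(iii) on a single event of probability at least $1-\alpha/2$ while carefully identifying the $\sigma$-algebra on which $a(\alpha)$ is measurable, so that the data-dependent Chebyshev threshold $2\sqrt{a(\alpha)/(\alpha\widetilde B)}$ is legitimate. A secondary subtlety is that $\hat g^{(B+1)}$ is constructed from the same bootstrap sample as $\hat f^{(B+1)}$, which introduces mild dependence between $a(\alpha)$ and one of the $\widetilde B$ fits being averaged; this is absorbed into the slack built into the constants, but demands care. Once these reductions are in place, the length bound reduces to a mechanical substitution of the rates from Theorem \ref{thm5_v2} and Corollary \ref{thm_var_v2}.
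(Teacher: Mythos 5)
Your proposal is correct and follows essentially the paper's own route: the same decomposition of $f^*(X)-\frac{1}{\widetilde B}\sum_j \hat f^{(j)}_{A_n}(X)$ into a Monte Carlo term controlled by Chebyshev/Markov at level $2\sqrt{a(\alpha)/(\alpha\widetilde B)}$ and a bias term controlled by a first-moment Markov bound at level $b(\alpha)$, the same Hoeffding corrections producing the $A_n^2\sqrt{32[\log(8/\alpha)+\log\widetilde B]/n}$ and $A_n\sqrt{8[\log(64/\alpha)+\log\widetilde B]/n}$ terms, the same identity $\mathbb E[(Y-\hat f^{(j)}_{A_n}(X))^2\mid \hat f^{(j)}]=\|f^*-\hat f^{(j)}_{A_n}\|_{\mathcal L_2}^2+\mathbb E[g^*(X)]$ together with the $a_0$ absorption, and the same mechanical substitution of the rates from Theorem \ref{thm5_v2} and Corollary \ref{thm_var_v2} for the length bound. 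The one genuine difference is the calibration step for $\mathbb P\bigl(|I_4|^{-1}\sum_{i\in I_4}(y_i-\hat f^{(B+j)}_{A_n}(x_i))^2>a_1(\alpha)\bigr)$: the paper applies the Dvoretzky--Kiefer--Wolfowitz inequality to the empirical distribution of the $B$ validation statistics and then uses the quantile definition of $a_1(\alpha)$, whereas you use a conformal-style exchangeability/rank argument; both rest on the conditional i.i.d.\ structure of the $V_j$ given the data, both require $B$ to dominate $\widetilde B/\alpha$ (the paper's $B\alpha^2/\widetilde B^2\to\infty$ versus your extra $O(\widetilde B/B)$ slack from the rank bound), and they are interchangeable here, with your version arguably more elementary. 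One small bookkeeping correction: the hypothesis $n^q\,\mathbb P(\|\epsilon\|_\infty>\mathcal U_n)\to 0$ with $B=O(n^q)$ is not what protects the coverage union bound (that only ranges over $\widetilde B$); in the paper it enters solely in the length bound, where a union bound over all $B$ bootstrap fits is needed to show every validation statistic, and hence the quantile $a_1(\alpha)$ itself, is close to $|I_4|^{-1}\sum_{i\in I_4}\hat g^{(B+1)}_{A_n}(x_i)$, which is what makes $a(\alpha)$, and thus $\Delta(\alpha)$, small.
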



Theorem \ref{thm6}  can be use to construct a confidence interval for $f^*(X)$, with vanishing length (\ref{eqn:lenght}), provided that we can accurately estimate the quantity $a_0$.
With that end mind,   the law of total variance,
\[
\mathbb{E}(g^*(X))\,=\, \text{Var}(Y) \,-\, \mathbb{E}( (f^*(X)  - \mathbb{E}(Y)    )^2 ),
\]
motivates the plugin approximation for $a_0$ given in the right hand side of (\ref{eqn:a0}). Moreover, (\ref{eqn:balpha0}) motivates the choice (\ref{eqn:balpha}).

\begin{remark}
    \label{remark2}
  In settings with  homoscedastic  errors, $g^*(x) = \sigma^2$ for all $x$, we can consider a plugin estimator of $a_0$ in (\ref{eqn:a02}) given by
   \begin{equation}
       \label{eqn:a03}
         \bigg\vert  \hat{\sigma}^2 \,-\,  \frac{1}{\vert I_4 \vert} \sum_{i\in I_4 }  \hat{g}_{A_n}^{(B+1)}(x_i)    \bigg\vert
   \end{equation}
    where $\hat{\sigma}^2$ is defined  in (\ref{eqn:hom_estimator}). Corollary \ref{cor2_v2} implies that, ignoring logarithmic factors,  (\ref{eqn:a03}) is within $O(\phi_n^{1/2} )$ of $a_0$ in (\ref{eqn:a02}) with high probability. 
\end{remark}

Next we present choices for  $a_0$ and $b(\alpha)$ that lead to valid confidence intervals.

\begin{corollary}
    \label{cor1_v2}
    With the conditions and notations of Theorem \ref{thm6}, suppose that 
    \begin{equation}
        \label{eqn:cond10}
             n^{q+1}\cdot\mathbb{P}(  \max\{\| g^*  \,-\,\hat{g}_{A_n}^{(B+1)}\|_{ \mathcal{L}_2}^2 ,  \| f^*  \,-\,\hat{f}_{A_n}^{(B+1)}\|_{ \mathcal{L}_2}^2 \}>  \varepsilon_n ) \,\rightarrow  \, 0.
    \end{equation}
    If for $s>0$ it holds that $\varepsilon_n \lesssim \alpha^2/\log^{2s} n$, then, for large enough $n$ (\ref{eqn:coverage2}) holds replacing $a_0$ with $\alpha/(100\log^s n )$, $b(\alpha)$ with $1/(100\log^s n )$, and the right hand side of (\ref{eqn:lenght}) would include the additional term $\sqrt{1/(\widetilde{B}\log^{s} n}) + 1/\log^s n$.   
\end{corollary}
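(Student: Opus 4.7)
The strategy is to verify that the deterministic choices $a_0 = \alpha/(100\log^s n)$ and $b(\alpha) = 1/(100\log^s n)$ dominate, with probability $1-o(1)$, the oracle quantities (\ref{eqn:a02}) and (\ref{eqn:balpha0}) of Theorem \ref{thm6}. Once this is in hand, the coverage bound (\ref{eqn:coverage2}) transfers from the oracle to the plug-in version by union-bounding against the failure event, and the length inflation follows by propagating the two new constants through (\ref{eqn:a}) and (\ref{eqn:delt}).

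The main algebraic step is to rewrite both oracles in terms of the $\mathcal{L}_2$-errors controlled by hypothesis (\ref{eqn:cond10}). For $a_0$, Jensen's inequality applied conditionally on $\hat g_{A_n}^{(B+1)}$ yields
\[
a_0 \;=\; \bigl| \mathbb{E}\bigl( g^*(X) - \hat g_{A_n}^{(B+1)}(X) \,\bigm|\, \hat g_{A_n}^{(B+1)}\bigr) \bigr| \;\leq\; \bigl\| g^* - \hat g_{A_n}^{(B+1)} \bigr\|_{\mathcal{L}_2}.
\]
For $b(\alpha)$, using $\mathbb{E}(Y\mid X)=f^*(X)$ and $\mathrm{Var}(Y\mid X)=g^*(X)$ one obtains the decomposition
\[
\mathbb{E}\bigl( (\hat f_{A_n}^{(B+1)}(X) - Y)^2 \,\bigm|\, \hat f_{A_n}^{(B+1)} \bigr) \;=\; \bigl\| \hat f_{A_n}^{(B+1)} - f^*\bigr\|_{\mathcal{L}_2}^2 \,+\, \mathbb{E}\bigl(g^*(X)\bigr),
\]
so the bracketed quantity in (\ref{eqn:balpha0}) equals $\|\hat f_{A_n}^{(B+1)} - f^*\|_{\mathcal{L}_2}^2$, and $b(\alpha) = 32\,\|\hat f_{A_n}^{(B+1)} - f^*\|_{\mathcal{L}_2}/[5\alpha(1-0.58\alpha)]$. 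Setting
\[
E_n \,:=\, \bigl\{ \max\{ \| g^* - \hat g_{A_n}^{(B+1)}\|_{\mathcal{L}_2}^2,\; \| f^* - \hat f_{A_n}^{(B+1)}\|_{\mathcal{L}_2}^2 \} \leq \varepsilon_n \bigr\},
\]
the hypothesis $\varepsilon_n \lesssim \alpha^2/\log^{2s} n$ then implies $a_0 \leq \alpha/(100\log^s n)$ and $b(\alpha) \leq 1/(100\log^s n)$ on $E_n$ for $n$ large enough, after absorbing hidden constants. Condition (\ref{eqn:cond10}) gives $\mathbb{P}(E_n^c) = o(n^{-q-1}) = o(1)$.

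Combining these, on $E_n$ the deterministic choices dominate the oracles, and the proof of Theorem \ref{thm6} applied with the larger (deterministic) constants delivers the coverage statement (\ref{eqn:coverage2}); off $E_n$ we lose at most $\mathbb{P}(E_n^c) = o(1)$, which is negligible compared to $\alpha$ for large $n$. For the length, plugging $a_0 = \alpha/(100\log^s n)$ into (\ref{eqn:a}) increases $a(\alpha)$ by at most this amount via the triangle inequality, so $2\sqrt{a(\alpha)/(\alpha \widetilde{B})}$ acquires an additive $O\bigl(1/\sqrt{\widetilde{B}\log^s n}\bigr)$ term while the new $b(\alpha)$ adds $O(1/\log^s n)$ directly, yielding the announced inflation of (\ref{eqn:lenght}). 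The main technical subtlety is the transfer step: ensuring that enlarging $a_0$ inside the absolute value in (\ref{eqn:a}) preserves the inequalities used in the proof of Theorem \ref{thm6}. On $E_n$ the target inequalities can be read without absolute value since the relevant sum is nonnegative, so monotonicity in $a_0$ is automatic and the plug-in bound remains valid.
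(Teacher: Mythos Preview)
Your treatment of $a_0$ is essentially correct and matches the paper's: both arguments bound the oracle $a_0$ by $\|g^*-\hat g_{A_n}^{(B+1)}\|_{\mathcal L_2}$ via Jensen, and then control this on a high-probability event coming from (\ref{eqn:cond10}). The paper injects this event as an additional $\widetilde B\cdot\mathbb{P}(\|g^*-\hat g_{A_n}^{(B+1)}\|_{\mathcal L_2}\geq \alpha/(100\log^s n))=o(1/n)$ term in the bound on $\mathbb{P}(\Omega_1^c)$, whereas you take a single global union bound against $E_n^c$; these are equivalent in effect.

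There is, however, a genuine gap in your handling of $b(\alpha)$. The expectation in (\ref{eqn:balpha0}) is \emph{unconditional}, so the bracketed quantity equals $\mathbb{E}\bigl[\|\hat f_{A_n}^{(B+1)}-f^*\|_{\mathcal L_2}^2\bigr]$, not $\|\hat f_{A_n}^{(B+1)}-f^*\|_{\mathcal L_2}^2$; consequently the oracle $b(\alpha)$ is a \emph{deterministic} number. Your statement ``on $E_n$, $b(\alpha)\leq 1/(100\log^s n)$'' is therefore ill-posed: a deterministic quantity either satisfies the inequality for all $\omega$ or for none. This matters operationally because Step~8 of the proof of Theorem~\ref{thm6} applies Markov's inequality with $b(\alpha)$ as a fixed threshold and arrives at the ratio $[\mathbb{E}|Z_{B+1}|^2]^{1/2}/[b(\alpha)(1-0.58\alpha)]$; no conditioning on $E_n$ enters there. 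The paper closes this by the truncation
\[
\mathbb{E}\bigl[\|\hat f_{A_n}^{(B+1)}-f^*\|_{\mathcal L_2}^2\bigr]\;\leq\; t^2 \,+\, 4A_n^2\,\mathbb{P}\bigl(\|\hat f_{A_n}^{(B+1)}-f^*\|_{\mathcal L_2}>t\bigr),
\]
with $t\asymp \alpha/\log^s n$, using (\ref{eqn:cond10}) together with $A_n=O(\mathrm{poly}(\log n))$ to make the second term $o(1/n)$. This is conceptually the same $E_n$ split you propose, but applied to bound an \emph{expectation} rather than to condition on an event; that distinction is what your argument is missing, and once you make it the rest of your outline goes through.
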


\begin{remark}
    \label{rem4}
  The value $100$ and the constant $s>0$ are  arbitrary as the rate of convergence of $\hat{g}_{A_n}^{(B+1)}$ will generally have polynomial decay. However, larger $s$ can mean that a larger $n$ is needed for the conclusion of Corollary \ref{cor1_v2} to hold. In practice, we  find that $s=2$ leads to reasonable performance.
    \end{remark}

\begin{remark}
    \label{rem3}
  Finally,   in Theorem \ref{thm6}, the condition $\mathbb{P}(\vert Y\vert \leq A_n) =1 $ is similar to Assumption 1 in \cite{farrell2021deep}    and the condition  in Theorem 7.1  from   \cite{gyorfi2006distribution}. This can be relaxed to allow for sub-Exponential and sub-Gaussian errors, but it would require knowledge of an upper bound for $\max\{ (\mathbb{E}(Y^2))^{1/2},(\mathbb{E}(Y^4))^{1/4} \}$ and the resulting theory would only be applicable to very large $n$. In contrast, the confidence intervals proposed in Section \ref{sec:con_intervals} perform well even in cases where the output $Y$ is unbounded, see Section \ref{sec:experiements_confidence_int}.
    \end{remark}

\section{Experiments}
\label{sec_exp}

\subsection{Conditional variance experiments}
\label{sec:sim}

For our experiments we will consider two classes of estimators. The first one is based on residuals, by replacing $\mathcal{F}(L,\nu)$ and $\mathcal{F}(L^{\prime},\nu^{\prime})$ with other function classes or estimation procedures. The second class of estimators are called direct estimators. These are based on the 
the formula
\[
     g^*(x_i)\,=\, \mathrm{Var}(y_i | x_i)\,=\,\mathbb{E}(y_i^2| x_i) \,-\, (\mathbb{E}(y_i| x_i) )^2 \,=\, h(x_i) \,-\, ( f^*(x_i) )^2
\]
with $h(x_i)\,=\,\mathbb{E}(y_i^2| x_i)$. This suggests  to consider the estimator
\begin{equation}
    \label{eqn:dir_est}
     \hat{g}_{\mathrm{dir}}(x_i)\,=\,  \hat{h}(x_i) \,-\, ( \hat{f}_{\mathcal{A}_n}(x_i) )^2,
\end{equation}
where $\hat{f}$ is an estimator for $\mathbb{E}(Y|X=x)$ and $\hat{h}$ an estimator for $\mathbb{E}(Y^2|X=x)$. In our experiments, we will use different mean estimators to induce conditional variance estimators by considering the corresponding residual-based and direct estimators. 

We evaluate the performance of variance estimation with ReLU networks on five different scenarios. Each scenario involves a set of covariates and a univariate response target. The variance estimation involves two stages: estimating the function $f^*(\cdot)$ in the first stage, followed by estimating the function $g^*(\cdot)$ in the second stage. With this setting, we also include the following two data splitting strategies for comparison: (1) we estimate both functions $f^*(\cdot)$ and $g^*(\cdot)$ using the full dataset, and (2) we estimate the function $f^*(\cdot)$ using first half of the dataset and then we estimate the function $g^*(\cdot)$ using the remaining half of the dataset. In this simulation study, the performance of variance estimation is evaluated by the mean squared error (MSE) between the estimated $\hat{g}(\cdot)$ from the second stage and the true $g^*(\cdot)$ values.

We compare the ReLU Networks with two other competing models: Random Forests \citep{breiman2001random, geurts2006extremely} and Multivariate Adaptive Regression Splines \citep{friedman1991multivariate, friedman1993fastmars}. For each model, we consider two different estimators: (1) the estimator based on residuals as described before, and (2) the direct estimator as in (\ref{eqn:dir_est}). Both estimators rely on the estimated function $\hat{f}(\cdot)$ from the first stage to estimate the conditional variance $\hat{g}(\cdot)$ in the second stage. For the ReLU Networks (NN), we employ a fully connected feedforward architecture comprising an input layer, two hidden layers, and an output layer. Each hidden layer consists of $64$ units and uses the Rectified Linear Unit (ReLU) activation function. The network parameters are updated using the Adam optimizer with a learning rate of $0.001$. For the Random Forests (RF) from the \texttt{sklearn.ensemble} package \citep{scikit-learn}, we use the default $100$ tree estimators and the maximum depth of tree is $10$. For the Multivariate Adaptive Regression Splines (MARS) from the \texttt{earth} package \citep{earthRpackage}, we use the default maximum number of model terms calculated by the package.

In each scenario, the simulated data is generated at different sample sizes $n$ for $n \in \{2000, 4000, 6000\}$. We generate $100$ datasets independently and we report the mean of $\frac{1}{n}\sum_{i=1}^n (\hat{g}(x_i) - g^*(x_i))^2$ over $100$ trials. Specifically, we generate data in the form:
\begin{align*}
y_i & = f^*(x_i) + \sqrt{g^*(x_i)} \cdot \epsilon_i,\ i = 1, \dots, n,\\
x_i & \overset{\text{ind}}{\sim} [0,1]^d
\end{align*}
where $\epsilon_i \sim G_i$ for a distribution $G_i$ in $\mathbb{R}$, with $f^*: [0,1]^d \rightarrow \mathbb{R}$ and $g^*: [0,1]^d \rightarrow \mathbb{R}_+$. The choice of 
$d$ depends on the specific scenario. For  $G_i$, we focus on sub-Gaussian errors, which ensure that the errors in variance estimation are sub-Exponential, enabling us to validate our results beyond the sub-Gaussian setting. We examine five distinct scenarios based on this framework:


\subsection*{Scenario 1}

In this scenario, we specify
\begin{align*}
f^*(q) & = h_2 \circ h_1(q), \forall q \in \mathbb{R}^2,\\
g^*(q) & = \|q - (1/2,1/2)\|_2, \forall q \in \mathbb{R}^2,
\end{align*}
with 
\begin{align*}
h_1(q) & = \big(\sqrt{q_1} + q_1 \cdot q_2,\ \cos(2 \pi \cdot q_2)\big),\\
h_2(q) & = \sqrt{q_1 + q_2^2} + q_1^2 \cdot q_2,
\end{align*}
and we generate $\epsilon_i \sim N(0,1)$ for $i = 1, \dots, n$.

\begin{figure}[t!]
\begin{center}
\includegraphics[width=\columnwidth]{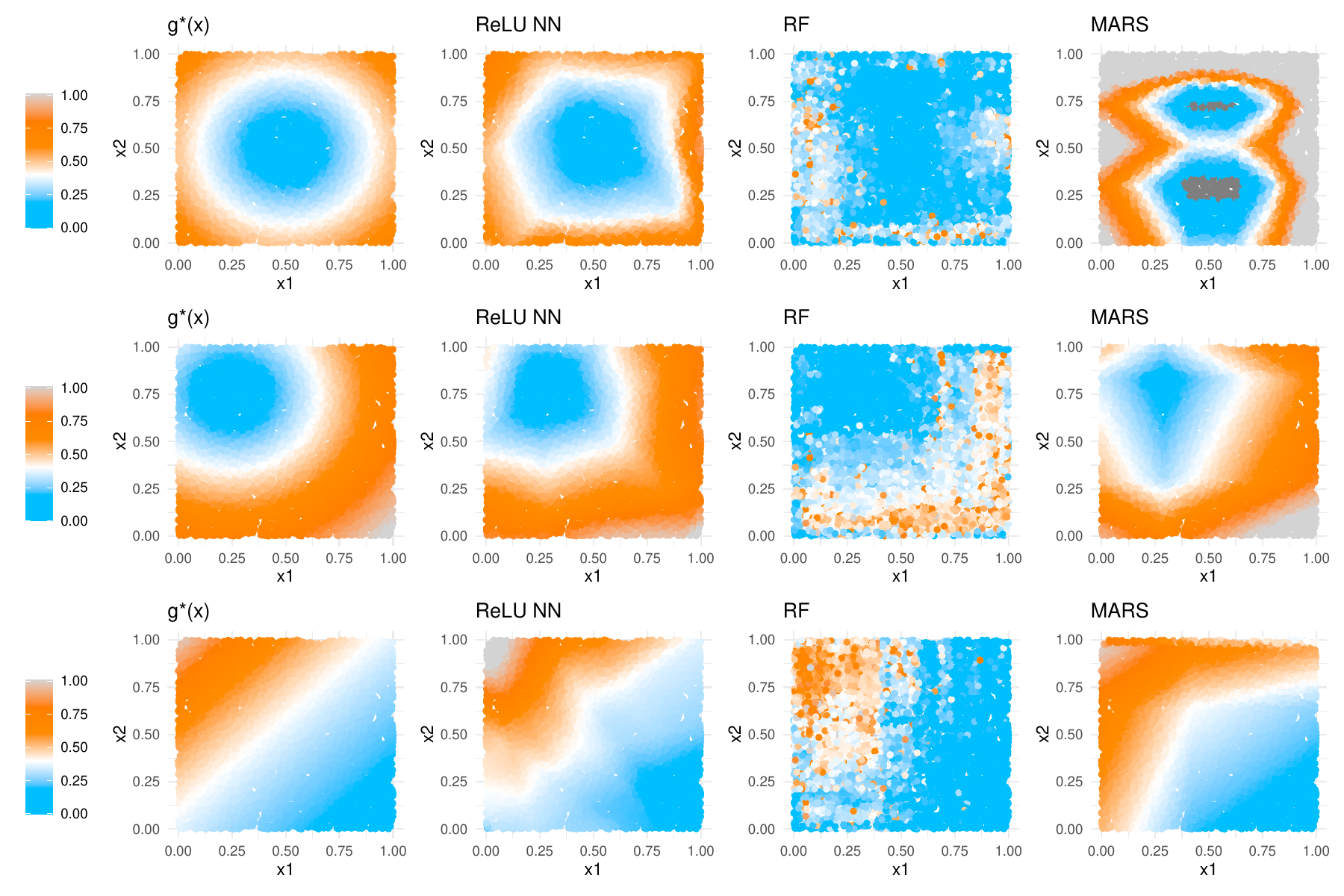}
\end{center}
\caption{The true variance  function $g^*(\cdot)$ and the predicted $\hat{g}(\cdot)$ from the estimator based on the residuals. The rows from top to bottom refer to Scenario 1,2, and 3 with $x_i \in \mathbb{R}^2$. For visualization purposes, regions with variance greater than $1$ are colored in grey.}
\label{figure_y_val}
\end{figure}



\begin{table}[!ht]
\centering
\caption{Mean (std) of $\frac{1}{n}\sum_{i=1}^n (\hat{g}(x_i) - g^*(x_i))^2$ for $n=2000$ over $100$ independent trials, using the full data estimation strategy. The best performance in terms of lower MSE is bolded.}
\begin{tabular}{|c|c|c|c|c|c|}
\hline
\textbf{Method}     & \textbf{Scenario 1} & \textbf{Scenario 2} & \textbf{Scenario 3} & \textbf{Scenario 4} & \textbf{Scenario 5}\\ 
\hline
$\text{NN}_{\text{res}}$ & $\bm{0.005}$ $(0.002)$ & $\bm{0.007}$ $(0.003)$ & $\bm{0.005}$ $(0.002)$ & $\bm{0.487}$ $(0.681)$ & $\bm{0.209}$ $(0.092)$\\ 
\hline
$\text{NN}_{\text{dir}}$   & $0.160$ $(0.057)$ & $0.020$ $(0.008)$ & $0.028$ $(0.019)$ & $783.006$ $(175.001)$ & $539.275$ $(183.769)$\\ 
\hline
$\text{RF}_{\text{res}}$   & $0.076$ $(0.003)$ & $0.111$ $(0.006)$ & $0.056$ $(0.004)$ & $2.254$ $(0.077)$ & $1.522$ $(0.038)$\\ 
\hline
$\text{RF}_{\text{dir}}$   & $0.083$ $(0.008)$ & $0.116$ $(0.011)$ & $0.064$ $(0.010)$ & $14.707$ $(1.688)$ & $20.641$ $(2.435)$\\ 
\hline
$\text{MARS}_{\text{res}}$ & $0.265$ $(0.043)$ & $0.012$ $(0.007)$ & $0.009$ $(0.007)$ & $1.424$ $(0.318)$ & $0.284$ $(0.097)$ \\ 
\hline
$\text{MARS}_{\text{dir}}$ & $1.438$ $(0.221)$ & $0.142$ $(0.052)$ & $0.039$ $(0.019)$ & $141.156$ $(21.381)$ & $487.455$ $(45.431)$ \\ 
\hline
\end{tabular}
\label{tbl_1_n2000}
\end{table}

\begin{table}[!ht]
\centering
\caption{Mean (std) of $\frac{1}{n}\sum_{i=1}^n (\hat{g}(x_i) - g^*(x_i))^2$ for $n=4000$ over $100$ independent trials, using the full data estimation strategy. The best performance in terms of lower MSE is bolded.}
\begin{tabular}{|c|c|c|c|c|c|}
\hline
\textbf{Method} & \textbf{Scenario 1} & \textbf{Scenario 2} & \textbf{Scenario 3} & \textbf{Scenario 4} & \textbf{Scenario 5}\\ 
\hline
$\text{NN}_{\text{res}}$   & $\bm{0.002}$ $(0.001)$ & $\bm{0.004}$ $(0.002)$ & $\bm{0.002}$ $(0.001)$ & $\bm{0.297}$ $(0.457)$ & $\bm{0.136}$ $(0.095)$\\ 
\hline
$\text{NN}_{\text{dir}}$   & $0.151$ $(0.055)$ & $0.014$ $(0.006)$ & $0.026$ $(0.019)$ & $801.229$ $(186.489)$ & $529.488$ $(180.445)$\\ 
\hline
$\text{RF}_{\text{res}}$   & $0.057$ $(0.003)$ & $0.077$ $(0.004)$ & $0.039$ $(0.003)$ & $1.646$ $(0.061)$ & $1.214$ $(0.030)$\\ 
\hline
$\text{RF}_{\text{dir}}$   & $0.082$ $(0.006)$ & $0.099$ $(0.009)$ & $0.054$ $(0.008)$ & $14.973$ $(1.546)$ & $19.511$ $(1.850)$\\ 
\hline
$\text{MARS}_{\text{res}}$ & $0.265$ $(0.038)$ & $0.007$ $(0.004)$ & $0.004$ $(0.003)$ & $1.349$ $(0.251)$ & $0.215$ $(0.059)$\\ 
\hline
$\text{MARS}_{\text{dir}}$ & $1.337$ $(0.186)$ & $0.097$ $(0.026)$ & $0.024$ $(0.011)$ & $138.628$ $(17.655)$ & $487.085$ $(32.186)$\\ 
\hline
\end{tabular}
\label{tbl_2_n4000}
\end{table}

\begin{table}[!ht]
\centering
\caption{Mean (std) of $\frac{1}{n}\sum_{i=1}^n (\hat{g}(x_i) - g^*(x_i))^2$ for $n=6000$ over $100$ independent trials, using the full data estimation strategy. The best performance in terms of lower MSE is bolded.}
\begin{tabular}{|c|c|c|c|c|c|}
\hline
\textbf{Method} & \textbf{Scenario 1} & \textbf{Scenario 2} & \textbf{Scenario 3} & \textbf{Scenario 4} & \textbf{Scenario 5}\\ 
\hline
$\text{NN}_{\text{res}}$   & $\bm{0.002}$ $(0.001)$ & $\bm{0.003}$ $(0.001)$ & $\bm{0.001}$ $(0.001)$ & $\bm{0.232}$ $(0.367)$ & $\bm{0.093}$ $(0.062)$\\ 
\hline
$\text{NN}_{\text{dir}}$   & $0.139$ $(0.053)$ & $0.012$ $(0.006)$ & $0.023$ $(0.015)$ & $827.884$ $(176.237)$ & $534.048$ $(160.156)$\\ 
\hline
$\text{RF}_{\text{res}}$   & $0.046$ $(0.002)$ & $0.061$ $(0.003)$ & $0.030$ $(0.002)$ & $1.307$ $(0.042)$ & $0.989$ $(0.023)$\\ 
\hline
$\text{RF}_{\text{dir}}$   & $0.082$ $(0.006)$ & $0.089$ $(0.007)$ & $0.048$ $(0.007)$ & $15.577$ $(1.454)$ & $20.331$ $(1.869)$\\ 
\hline
$\text{MARS}_{\text{res}}$ & $0.269$ $(0.028)$ & $0.006$ $(0.002)$ & $0.003$ $(0.002)$ & $1.353$ $(0.200)$ & $0.189$ $(0.046)$ \\ 
\hline
$\text{MARS}_{\text{dir}}$ & $1.299$ $(0.121)$ & $0.090$ $(0.015)$ & $0.022$ $(0.007)$ & $136.297$ $(16.438)$ & $485.846$ $(23.258)$ \\ 
\hline
\end{tabular}
\label{tbl_3_n6000}
\end{table}

\subsection*{Scenario 2}

In this scenario, we specify
\begin{align*}
f^*(q) & = h_2 \circ h_1(q), \forall q \in \mathbb{R}^2,\\
g^*(q) & = \|q - (1/4,3/4)\|_2, \forall q \in \mathbb{R}^2,
\end{align*}
with 
\begin{align*}
h_1(q) & = \big(\log(1 + q_1^2) + q_1 \cdot q_2,\ \sin(3 \pi \cdot q_2) \cdot \exp(-q_1) \big),\\
h_2(q) & = (q_1^2 + q_2^2)^{1/3} + q_1^2 / (1 + |q_2|),
\end{align*}
and we generate $\epsilon_i \sim N(0,1)$ for $i = 1, \dots, n$.

\subsection*{Scenario 3}

We set 
\begin{align*}
f^*(q) & = h_2 \circ h_1(q), \forall q \in \mathbb{R}^2,\\
g^*(q) & = \exp(-|q_1|-|q_2-1|), \forall q \in \mathbb{R}^2,
\end{align*}
where
\begin{align*}
h_1(q) & = (\tan(q_1) + q_1^2 \cdot q_2^2,\ q_2^3 - 2 \cdot q_1),\\
h_2(q) & = q_1 \cdot \sin(q_2) + \sqrt{|q_1 - q_2|} + (1+q_1^2)^{-1},
\end{align*}
with $\epsilon_i \sim N(0,1)$ for $i = 1, \dots, n$.

\subsection*{Scenario 4}

In this scenario, we specify
\begin{align*}
f^*(q) & = h_2 \circ h_1(q), \forall q \in \mathbb{R}^5,\\
g^*(q) & = h_4 \circ h_3(q), \forall q \in \mathbb{R}^5,
\end{align*}
with
\begin{align*}
h_1(q) & = \big(\sin(q_1) \cdot q_2^2 + \exp(q_3) - q_4 \cdot q_5,\ \cos(q_2) + q_3 \cdot \tanh(q_4) + q_5^3 \big),\\
h_2(q) & = \left[(q_1 + q_2)^2 + \sqrt{|q_1 \cdot q_2|}\right],\\
h_3(q) & = q - (-1/2,\ 1/2,\ -1/2,\ 1/2,\ -1/2),\\
h_4(q) & = \left[\exp(-||q||_2) + \sqrt{||q||_1 + 1}\right],
\end{align*}
and we generate $\epsilon_i \sim \text{Unif}(-\sqrt{3},\sqrt{3})$ for $i = 1, \dots, n$.



\subsection*{Scenario 5}

We define the functions \(f^*(q)\) and \(g^*(q)\) as follows:
\begin{align*}
f^*(q) & = h_2 \circ h_1(q), \quad \forall q \in \mathbb{R}^{10}, \\
g^*(q) & = \|q - 0.5 \cdot \bm{1}\|_2 + \sqrt{\|q - 0.5 \cdot \bm{1}\|_1}, \quad \forall q \in \mathbb{R}^{10},
\end{align*}
with
\begin{align*}
h_1(q) & = \left(\sum_{i=1}^4 q_i^2 + \sum_{j=5}^{10} q_j,\ \sum_{i=1}^4 q_i + \sum_{j=5}^{10} q_j^2\right), \\
h_2(q) & = q_1 + q_2 + q_1 \cdot q_2.
\end{align*}
and we let $\epsilon_i \sim \text{Unif}(-\sqrt{3},\sqrt{3})$ for $i = 1, \dots, n$

Table \ref{tbl_1_n2000} summarizes the performance of variance estimation, using the full data estimation strategy for $n=2000$. The corresponding results for $n=4000$ and $n=6000$ are summarized in Tables \ref{tbl_2_n4000} and \ref{tbl_3_n6000}, respectively. The ReLU Networks model with the estimator based on the residuals (res) achieves the lowest MSE, across different scenarios and different sample sizes. For each model, the estimator based on the residual (res) achieves lower MSE comparing to the direct estimator (dir). Furthermore, for Scenarios $4$ and $5$ with $x_i \in \mathbb{R}^5$, the performance of direct estimator (dir) is significantly worse than the estimator based on the residuals (res).

Figure \ref{figure_y_val} exhibits the true variance $g^*(x_i)$ and the predicted variance $\hat{g}(x_i)$ with the estimator based on the residuals, with $x_i \in \mathbb{R}^2$ in Scenarios $1$, $2$, and $3$. Based on the visualization, Random Forest tends to underestimate the variances in the region where the true variances are high, while the MARS tends to overestimate the variances in the region where the true variances are high. Among the three methods  with the estimator based on the residuals, the ReLU Networks provide reasonable estimates.

Table \ref{tbl_4_n2000_split} summarizes the performance of variance estimation, using the data splitting estimation strategy for $n=2000$. The corresponding results for $n=4000$ and $n=6000$ are summarized in Tables \ref{tbl_5_n4000_split} and \ref{tbl_6_n6000_split}, respectively. While the MSEs with the data splitting estimation strategy are slightly greater that the corresponding MSEs with the full data estimation strategy, the result that the ReLU Networks outperform the other two competitors remain the same. Similarly, the estimator based on the residuals outperforms the direct estimator for all three models under the data splitting estimation strategy.

\begin{table}[!ht]
\centering
\caption{Mean (std) of $\frac{1}{n}\sum_{i=1}^n (\hat{g}(x_i) - g^*(x_i))^2$ for $n=2000$ over $100$ independent trials, using the data splitting estimation strategy. The best performance in terms of lower MSE is bolded.}
\begin{tabular}{|c|c|c|c|c|c|}
\hline
\textbf{Method} & \textbf{Scenario 1} & \textbf{Scenario 2} & \textbf{Scenario 3} & \textbf{Scenario 4} & \textbf{Scenario 5}\\ 
\hline
$\text{NN}_{\text{res}}$   & $\bm{0.012}$ $(0.006)$ & $\bm{0.018}$ $(0.010)$ & $\bm{0.013}$ $(0.008)$ & $\bm{1.071}$ $(0.887)$ & $\bm{0.532}$ $(0.272)$\\ 
\hline
$\text{NN}_{\text{dir}}$   & $0.440$ $(0.199)$ & $0.127$ $(0.061)$ & $0.124$ $(0.055)$ & $795.194$ $(212.549)$ & $546.446$ $(203.746)$\\ 
\hline
$\text{RF}_{\text{res}}$   & $0.156$ $(0.036)$ & $0.250$ $(0.060)$ & $0.138$ $(0.037)$ & $3.326$ $(0.765)$ & $7.103$ $(1.984)$\\ 
\hline
$\text{RF}_{\text{dir}}$   & $3.033$ $(0.556)$ & $1.966$ $(0.248)$ & $0.975$ $(0.156)$ & $757.401$ $(66.586)$ & $760.954$ $(108.248)$\\ 
\hline
$\text{MARS}_{\text{res}}$ & $0.403$ $(0.140)$ & $0.030$ $(0.017)$ & $0.020$ $(0.015)$ & $2.024$ $(0.676)$ & $0.539$ $(0.260)$\\ 
\hline
$\text{MARS}_{\text{dir}}$ & $2.394$ $(0.950)$ & $0.350$ $(0.140)$ & $0.148$ $(0.077)$ & $234.738$ $(48.027)$ & $533.072$ $(61.728)$\\ 
\hline
\end{tabular}
\label{tbl_4_n2000_split}
\end{table}

\begin{table}[!ht]
\centering
\caption{Mean (std) of $\frac{1}{n}\sum_{i=1}^n (\hat{g}(x_i) - g^*(x_i))^2$ for $n=4000$ over $100$ independent trials, using the data splitting estimation strategy. The best performance in terms of lower MSE is bolded.}
\begin{tabular}{|c|c|c|c|c|c|}
\hline
\textbf{Method} & \textbf{Scenario 1} & \textbf{Scenario 2} & \textbf{Scenario 3} & \textbf{Scenario 4} & \textbf{Scenario 5}\\ 
\hline
$\text{NN}_{\text{res}}$   & $\bm{0.005}$ $(0.002)$ & $\bm{0.008}$ $(0.004)$ & $\bm{0.006}$ $(0.003)$ & $\bm{0.587}$ $(0.483)$ & $\bm{0.284}$ $(0.171)$\\ 
\hline
$\text{NN}_{\text{dir}}$   & $0.296$ $(0.121)$ & $0.065$ $(0.028)$ & $0.073$ $(0.032)$ & $790.833$ $(209.319)$ & $539.577$ $(184.690)$\\ 
\hline
$\text{RF}_{\text{res}}$   & $0.106$ $(0.017)$ & $0.160$ $(0.034)$ & $0.091$ $(0.024)$ & $1.867$ $(0.400)$ & $4.015$ $(1.263)$\\ 
\hline
$\text{RF}_{\text{dir}}$   & $2.615$ $(0.340)$ & $1.459$ $(0.174)$ & $0.640$ $(0.087)$ & $595.626$ $(42.258)$ & $619.797$ $(77.590)$\\ 
\hline
$\text{MARS}_{\text{res}}$ & $0.321$ $(0.080)$ & $0.014$ $(0.008)$ & $0.010$ $(0.007)$ & $1.629$ $(0.373)$ & $0.342$ $(0.130)$\\ 
\hline
$\text{MARS}_{\text{dir}}$ & $1.925$ $(0.559)$ & $0.198$ $(0.069)$ & $0.075$ $(0.037)$ & $184.599$ $(29.540)$ & $514.706$ $(40.821)$\\ 
\hline
\end{tabular}
\label{tbl_5_n4000_split}
\end{table}

\begin{table}[!ht]
\centering
\caption{Mean (std) of $\frac{1}{n}\sum_{i=1}^n (\hat{g}(x_i) - g^*(x_i))^2$ for $n=6000$ over $100$ independent trials, using the data splitting estimation strategy. The best performance in terms of lower MSE is bolded.}
\begin{tabular}{|c|c|c|c|c|c|}
\hline
\textbf{Method} & \textbf{Scenario 1} & \textbf{Scenario 2} & \textbf{Scenario 3} & \textbf{Scenario 4} & \textbf{Scenario 5}\\ 
\hline
$\text{NN}_{\text{res}}$   & $\bm{0.004}$ $(0.002)$ & $\bm{0.006}$ $(0.003)$ & $\bm{0.003}$ $(0.002)$ & $\bm{0.417}$ $(0.454)$ & $\bm{0.192}$ $(0.108)$\\ 
\hline
$\text{NN}_{\text{dir}}$   & $0.252$ $(0.099)$ & $0.054$ $(0.033)$ & $0.054$ $(0.027)$ & $810.553$ $(181.230)$ & $529.496$ $(158.183)$\\ 
\hline
$\text{RF}_{\text{res}}$   & $0.083$ $(0.013)$ & $0.123$ $(0.024)$ & $0.072$ $(0.014)$ & $1.287$ $(0.251)$ & $2.738$ $(0.628)$\\ 
\hline
$\text{RF}_{\text{dir}}$   & $2.344$ $(0.264)$ & $1.204$ $(0.135)$ & $0.503$ $(0.059)$ & $515.916$ $(34.059)$ & $538.583$ $(45.651)$\\ 
\hline
$\text{MARS}_{\text{res}}$ & $0.304$ $(0.059)$ & $0.011$ $(0.009)$ & $0.007$ $(0.005)$ & $1.538$ $(0.288)$ & $0.278$ $(0.086)$\\ 
\hline
$\text{MARS}_{\text{dir}}$ & $1.598$ $(0.389)$ & $0.158$ $(0.054)$ & $0.049$ $(0.022)$ & $165.279$ $(26.868)$ & $503.702$ $(31.397)$\\ 
\hline
\end{tabular}
\label{tbl_6_n6000_split}
\end{table}

\subsection{Confidence intervals comparisons}
\label{sec:experiements_confidence_int}

In this subsection, we evaluate the performance of the confidence interval construction proposed in Section \ref{sec:con_intervals}. To this end, we generate data following Scenarios 1-5 from Section \ref{sec:sim} and consider values of $\alpha\in \{0.01,0.05,0.1\}$ and $n \in \{20000,50000,100000\}$. For each scenario, we generate 100 datasets, and for each dataset, we sample 100 new covariates. We then calculate the proportion of times the confidence intervals contain $f^*$ evaluated at the new covariates. The average of these proportions across the 100 datasets is reported as the coverage (Cov). Additionally, we compute and report the average of the length of the constructed intervals, divided by the range of the true mean function, across the sampled covariates and datasets (PRange).

\begin{table}[t!]
	\centering
	\renewcommand{\arraystretch}{1.3} 
	\setlength{\tabcolsep}{12pt} 
	\caption{ 		Average Coverage (Cov) and Length of the confidence interval divided by the length of the range of $f^*$ (PRange) results for different competitors, scenarios, and values of $n$, with  $\alpha=0.1.$}
	\begin{adjustbox}{width=\textwidth}
		\begin{tabular}{|cc|cc|cc|cc|cc|cc|}
			\hline 
			\multirow{2}{*}{\textbf{Scenario}} & \multirow{2}{*}{$\textbf{n}$} & \multicolumn{2}{c|}{NN} & \multicolumn{2}{c|}{$\text{NN}_{\text{Emp}}$ } & \multicolumn{2}{c|}{Naive Bootstrap} & \multicolumn{2}{c|}{Standard Bootstrap} & \multicolumn{2}{c|}{NeuBoots} \\
			\cline{3-12} 
			& & \textbf{Cov} & \textbf{PRange} & \textbf{Cov} & \textbf{PRange} & \textbf{Cov} & \textbf{PRange} & \textbf{Cov} & \textbf{PRange} & \textbf{Cov} & \textbf{PRange} \\
			\hline
			\multirow{3}{*}{Scenario 1} & $20000$ & $\bm{91}\%$ & $\bm{0.1824}$ & $90\%$ & $0.2621$ & $33\%$& $0.0563$ &$82\%$& $0.2265$&$83\%$ & $0.2155$\\
			& $50000$ &$\bm{91}\%$ & $\bm{0.1372}$ & $90\%$ & $0.1881$ & $36\%$ & $0.0573$ & $84\%$ & $0.1575$ & $84\%$ & $0.1576$ \\
			& $100000$ & $\bm{90}\%$& $\bm{0.1007}$& $90\%$ & $0.1645$& $34\%$ & $0.0496$& $85\%$ &$0.1707$ & $83\%$& $0.1518$\\
			\hdashline
			\multirow{3}{*}{Scenario 2} & $20000$ & $\bm{91}\%$ & $\bm{0.2714}$& $90\%$ & $0.3123$ & $42\%$ &$0.1269$ & $85\%$ & $0.3424$&$86\%$ & $0.3403$\\
			& $50000$ & $\bm{90\%}$ & $\bm{0.2172}$& $90\%$ & $0.2498$ & $43\%$ &$0.0958$ & $84\%$ & $0.2251$& $85\%$ & $0.2271$\\
			& $100000$ & $\bm{90\%}$ & $\bm{0.1617}$ & $90\%$ & $0.2087$ & $41\%$ & $0.0850$& $84\%$& $0.1855$ & $82\%$& $0.1783$\\
			\hdashline
			\multirow{3}{*}{Scenario 3} & $20000$ & $\bm{90\%}$ &$\bm{0.3148}$  & $91\%$ &$0.3568$ & $53\%$ & $0.1240$ & $89\%$&$0.2894$ & $89\%$& $0.2860$\\
			& $50000$ & $\bm{90\%}$ &$\bm{0.2519}$  & $89\%$ &$0.3072$ & $50\%$ & $0.0792$ & $88\%$&$0.2755$ & $87\%$& $0.2799$\\
			& $100000$ & $\bm{92\%}$ & $\bm{0.2162}$& $91\%$& $0.2658$ & $51\%$ & $0.0659$ & $89\%$ & $0.2207$& $89\%$ & $0.2283$\\
			\hdashline
			\multirow{3}{*}{Scenario 4} & $20000$ & $\bm{90\%}$ & $\bm{0.1385}$ & $90\%$ & $0.1873$ & $44\%$ & $0.0289$& $78\%$ & $0.1892$ & $80\%$ & $0.1758$\\
			& $50000$ & $\bm{91\%}$ & $\bm{0.1214}$  & $91\%$ & $0.1791$ & $40\%$ & $0.0269$ & $77\%$ & $0.1746$& $78\%$&$0.1847$ \\
			& $100000$ & $\bm{91\%}$ &$\bm{0.0984}$ & $90\%$ & $0.1227$ & $42\%$ & $0.0327$ & $85\%$ & $0.1606$ & $83\%$ & $0.1693$\\
			\hdashline
			\multirow{3}{*}{Scenario 5} & $20000$ &$\bm{91\%}$ & $\bm{0.1437}$  & $90\%$ & $0.1778$& $35\%$ &$0.0112$ & $80\%$& $0.1162$ &$82\%$ & $0.1243$\\
			& $50000$ &$\bm{90\%}$ & $\bm{0.1157}$  & $91\%$ & $0.1484$ & $35\%$& $0.0102$& $83\%$& $0.1540$& $82\%$ & $0.1099$ \\
			& $100000$ & $\bm{90\%}$ & $\bm{0.0814}$ & $91\%$ & $0.1123$ & $33\%$ & $0.0096$ & $97\%$ & $0.1484$ & $89\%$ & $0.1002$ \\
			\hline 
		\end{tabular}
	\end{adjustbox}
	\label{tab:compact_results}
\end{table}

\begin{table}[t!]
	\centering
	\renewcommand{\arraystretch}{1.3} 
	\setlength{\tabcolsep}{12pt} 
	\caption{Average Coverage (Cov) and Length of the confidence interval divided by the length of the range of $f^*$ (PRange) results for different competitors, scenarios, and values of $n$, with $\alpha=0.05.$}
	\begin{adjustbox}{width=\textwidth}
		\begin{tabular}{|cc|cc|cc|cc|cc|cc|}
			\hline 
			\multirow{2}{*}{\textbf{Scenario}} & \multirow{2}{*}{$\textbf{n}$} & \multicolumn{2}{c|}{NN} & \multicolumn{2}{c|}{$\text{NN}_{\text{Emp}}$ } & \multicolumn{2}{c|}{Naive Bootstrap} & \multicolumn{2}{c|}{Standard Bootstrap} & \multicolumn{2}{c|}{NeuBoots} \\
			\cline{3-12} 
			& & \textbf{Cov} & \textbf{PRange} & \textbf{Cov} & \textbf{PRange} & \textbf{Cov} & \textbf{PRange} & \textbf{Cov} & \textbf{PRange} & \textbf{Cov} & \textbf{PRange} \\
			\hline
			\multirow{3}{*}{Scenario 1} & $20000$ & $\bm{95\%}$ & $\bm{0.2651}$ & $95\%$ & $0.3593$ & $16\%$ & $0.0017$ & $69\%$ & $0.3226$ & $69\%$ & $0.3124$\\
			& $50000$ & $\bm{96\%}$ & $\bm{0.1941}$ & $96\%$ & $0.2573$ & $14\%$ & $0.0015$ & $73\%$ & $0.2366$ & $75\%$ & $0.2289$ \\
			& $100000$ & $\bm{96\%}$ & $\bm{0.1374}$ & $95\%$ & $0.2008$ & $25\%$ & $0.0014$ & $94\%$ & $0.2049$ & $95\%$ & $0.1883$\\
			\hdashline
			\multirow{3}{*}{Scenario 2} & $20000$ & $\bm{95\%}$ & $\bm{0.2948}$ & $94\%$ & $0.3791$ & $24\%$ & $0.0059$ & $85\%$ & $0.4905$ & $87\%$ & $0.4718$\\
			& $50000$ & $\bm{96\%}$ & $\bm{0.2323}$ & $96\%$ & $0.3365$ & $30\%$ & $0.0089$ & $89\%$ & $0.3765$ & $89\%$ & $0.3731$\\
			& $100000$ & $\bm{96\%}$ & $\bm{0.1812}$ & $96\%$ & $0.2927$ & $34\%$ & $0.0058$ & $92\%$ & $0.2646$ & $90\%$ & $0.2619$\\
			\hdashline
			\multirow{3}{*}{Scenario 3} & $20000$ & $\bm{95\%}$ & $\bm{0.3342}$ & $96\%$ & $0.4232$ & $13\%$ & $0.1019$ & $54\%$ & $0.2777$ & $50\%$ & $0.3327$\\
			& $50000$ & $\bm{96\%}$ & $\bm{0.2873}$ & $97\%$ & $0.3827$ & $21\%$ & $0.0167$ & $61\%$ & $0.3133$ & $65\%$ & $0.3219$\\
			& $100000$ & $\bm{96\%}$ & $\bm{0.2523}$ & $95\%$ & $0.3346$ & $39\%$ & $0.0114$ & $91\%$ & $0.2947$ & $95\%$ & $0.3019$\\
			\hdashline
			\multirow{3}{*}{Scenario 4} & $20000$ & $\bm{95\%}$ & $\bm{0.2813}$ & $96\%$ & $0.3342$ & $20\%$ & $0.0836$ & $67\%$ & $0.1662$ & $85\%$ & $0.2862$\\
			& $50000$ & $\bm{95\%}$ & $\bm{0.2232}$ & $95\%$ & $0.2472$ & $26\%$ & $0.0658$ & $81\%$ & $0.1816$ & $91\%$ & $0.2036$\\
			& $100000$ & $\bm{96\%}$ & $\bm{0.1576}$ & $97\%$ & $0.2147$ & $28\%$ & $0.0255$ & $96\%$ & $0.2219$ & $95\%$ & $0.2077$\\
			\hdashline
			\multirow{3}{*}{Scenario 5} & $20000$ & $\bm{95\%}$ & $\bm{0.2614}$ & $95\%$ & $0.3442$ & $28\%$ & $0.0330$ & $63\%$ & $0.2282$ & $68\%$ & $0.2573$\\
			& $50000$ & $\bm{96\%}$ & $\bm{0.2227}$ & $95\%$ & $0.3142$ & $30\%$ & $0.0156$ & $53\%$ & $0.2101$ & $80\%$ & $0.2322$\\
			& $100000$ & $\bm{95\%}$ & $\bm{0.1651}$ & $96\%$ & $0.2453$ & $33\%$ & $0.0099$ & $91\%$ & $0.1902$ & $95\%$ & $0.2079$\\
			\hline 
		\end{tabular}
	\end{adjustbox}
	\label{tab:compact_results2}
\end{table}

\begin{table}[t!]
	\centering
	\renewcommand{\arraystretch}{1.3} 
	\setlength{\tabcolsep}{12pt} 
	\caption{Average Coverage (Cov) and Length of the confidence interval divided by the length of the range of $f^*$ (PRange) results for different competitors, scenarios, and values of $n$, with $\alpha=0.01.$}
	\begin{adjustbox}{width=\textwidth}
		\begin{tabular}{|cc|cc|cc|cc|cc|cc|}
			\hline 
			\multirow{2}{*}{\textbf{Scenario}} & \multirow{2}{*}{$\textbf{n}$} & \multicolumn{2}{c|}{NN} & \multicolumn{2}{c|}{$\text{NN}_{\text{Emp}}$ } & \multicolumn{2}{c|}{Naive Bootstrap} & \multicolumn{2}{c|}{Standard Bootstrap} & \multicolumn{2}{c|}{NeuBoots} \\
			\cline{3-12} 
			& & \textbf{Cov} & \textbf{PRange} & \textbf{Cov} & \textbf{PRange} & \textbf{Cov} & \textbf{PRange} & \textbf{Cov} & \textbf{PRange} & \textbf{Cov} & \textbf{PRange} \\
			\hline
			\multirow{3}{*}{Scenario 1} & $20000$ & $\bm{99\%}$ & $\bm{0.3521}$ & $99\%$ & $0.4516$ & $25\%$ & $0.0029$ & $62\%$ & $0.3729$ & $60\%$ & $0.3885$\\
			& $50000$ & $\bm{100\%}$ & $\bm{0.3143}$ & $99\%$ & $0.4027$ & $36\%$ & $0.0212$ & $67\%$ & $0.3744$ & $67\%$ & $0.3856$ \\
			& $100000$ & $\bm{99\%}$ & $\bm{0.2752}$ & $99\%$ & $0.3539$ & $45\%$ & $0.0879$ & $64\%$ & $0.3350$ & $65\%$ & $0.3108$\\
			\hdashline
			\multirow{3}{*}{Scenario 2} & $20000$ & $\bm{100\%}$ & $\bm{0.3958}$ & $99\%$ & $0.4789$ & $27\%$ & $0.0102$ & $74\%$ & $0.4654$ & $74\%$ & $0.4792$\\
			& $50000$ & $\bm{100\%}$ & $\bm{0.3459}$ & $100\%$ & $0.4223$ & $32\%$ & $0.0100$ & $75\%$ & $0.6133$ & $75\%$ & $0.6346$\\
			& $100000$ & $\bm{99\%}$ & $\bm{0.3042}$ & $99\%$ & $0.3868$ & $38\%$ & $0.0092$ & $75\%$ & $0.4621$ & $79\%$ & $0.4880$\\
			\hdashline
			\multirow{3}{*}{Scenario 3} & $20000$ & $\bm{100\%}$ & $\bm{0.4372}$ & $99\%$ & $0.5068$ & $21\%$ & $0.3074$ & $45\%$ & $0.2397$ & $48\%$ & $0.2211$\\
			& $50000$ & $\bm{99\%}$ & $\bm{0.3924}$ & $99\%$ & $0.4658$ & $30\%$ & $0.1294$ & $47\%$ & $0.1824$ & $46\%$ & $0.1750$\\
			& $100000$ & $\bm{100\%}$ & $\bm{0.3405}$ & $99\%$ & $0.4371$ & $33\%$ & $0.1132$ & $95\%$ & $0.3427$ & $95\%$ & $0.3559$\\
			\hdashline
			\multirow{3}{*}{Scenario 4} & $20000$ & $\bm{99\%}$ & $\bm{0.4003}$ & $99\%$ & $0.4832$ & $15\%$ & $0.0755$ & $70\%$ & $0.4698$ & $72\%$ & $0.4608$\\
			& $50000$ & $\bm{99\%}$ & $\bm{0.3647}$ & $99\%$ & $0.4692$ & $35\%$ & $0.1846$ & $71\%$ & $0.4396$ & $71\%$ & $0.4065$ \\
			& $100000$ & $\bm{99\%}$ & $\bm{0.3412}$ & $99\%$ & $0.4487$ & $30\%$ & $0.2418$ & $92\%$ & $0.3911$ & $93\%$ & $0.3994$\\
			\hdashline
			\multirow{3}{*}{Scenario 5} & $20000$ & $\bm{99\%}$ & $\bm{0.3085}$ & $99\%$ & $0.4142$ & $33\%$ & $0.1406$ & $75\%$ & $0.3950$ & $70\%$ & $0.3710$\\
			& $50000$ & $\bm{100\%}$ & $\bm{0.2714}$ & $99\%$ & $0.3728$ & $28\%$ & $0.1228$ & $73\%$ & $0.2969$ & $74\%$ & $0.3356$ \\
			& $100000$ & $\bm{100\%}$ & $\bm{0.2112}$ & $99\%$ & $0.2813$ & $27\%$ & $0.0995$ & $97\%$ & $0.2477$ & $98\%$ & $0.2349$ \\
			\hline 
		\end{tabular}
	\end{adjustbox}
	\label{tab:compact_results3}
\end{table}

For competitors, we evaluate several methods alongside our proposed approach from Section \ref{sec:con_intervals}. These include: our method with $a_0\,=\, \alpha /(100\log^2 n)$ and $b(\alpha) = 1/(100 \log^2 n)$, referred to as NN; 
our approach using  $a_0 $ and $b(\alpha)$ as defined in (\ref{eqn:a0}) and (\ref{eqn:balpha}), respectively, denoted as  $\mathrm{NN}_{\text{Emp}}$; a naive bootstrap method that follows Steps 1-3 from Section \ref{sec:con_intervals} and constructs confidence intervals using the empirical quantiles of $\{\hat{f}_{A_n}^{(j)} \}$;  a standard bootstrap approach that resamples with replacement from $\{(x_i,y_i\}_{i=1}^n$, refits a ReLU network to estimate the mean, and uses the quantiles of the mean estimates for confidence interval construction; and finally, the Neu Boots method proposed by \cite{shin2021neural}. This set of methods provides a comprehensive comparison for evaluating confidence interval performance. 

As for the tuning parameters, $A_n$ is chosen as the maximum value of the responses among all observations for the methods NN, $\mathrm{NN}_{\text{Emp}}$, and the naive bootstrap, which depend on this parameter. The neural network structure used across all methods follows the specifications outlined in Section \ref{sec:sim}, ensuring consistency in the number of layers, neurons, learning rates, and other architectural details. For the bootstrap parameters, $B$ and $\widetilde{B}$ are set to $1500$ and $1000$, respectively, for NN, $\mathrm{NN}_{\text{Emp}}$, and the naive bootstrap method. For the standard bootstrap and NeuBoots methods, we use $2500$ bootstrap samples. The NeuBoots method incorporates several additional tuning parameters: the number of blocks is set to $100$, and the Dirichlet distribution parameters for sampling bootstrap weights $\alpha$ is fixed to $\text{Dirichlet}(1, \ldots, 1)$. The training process for NeuBoots employs stochastic gradient descent (SGD) with a momentum of $0.9$, an initial learning rate of $0.001$, weight decay of $0.0005$, and a mini-batch size of $128$.

The results in Tables \ref{tab:compact_results}--\ref{tab:compact_results3} demonstrate that our proposed approach outperforms the competitors in terms of both coverage and the length of the constructed confidence intervals. This outcome is unsurprising, as our method explicitly accounts for the bias inherent in neural network estimates, whereas the competing methods rely on heuristics lacking theoretical justification.

\subsection{Real Data Experiments}

In this section, we conduct an experiment on predicting house values in California using median income, average occupancy, and population, following a similar configuration to \cite{petersen2016convex}. The dataset, which includes $20,640$ observations, was originally introduced by \cite{pace1997sparse} and is publicly accessible through the Carnegie Mellon StatLib repository (\texttt{lib.stat.cmu.edu}). The features are normalized to be in  $[0,1]$ range using min-max scaling, while the response variable $y$, representing the house values, is log-transformed as $\log (y)$ to reduce skewness and stabilize variance.

We perform $100$ random train-test splits with training sizes of $n =5000$ and $n=10000$ observations. For each split, the remaining data serves as the test set. We estimate the functions $f^*(\cdot)$ and $g^*(\cdot)$ with the training data. As we do not have the true variance for the individual house value, to evaluate the performance, we construct the $95\%$ and $90\%$ prediction intervals and calculate the proportion of the individual house value in the test set that falls within the intervals. Specifically, the prediction  interval for the testing data is constructed in the form: $[\hat{f}(x_i^{\text{test}}) - \hat{q}_{\alpha/2} \times \sqrt{\hat{g}(x_i^{\text{test}})},\hat{f}(x_i^{\text{test}}) + \hat{q}_{1-\alpha/2} \times \sqrt{\hat{g}(x_i^{\text{test}})}]$.  The quantile value $\hat{q}_{a}$ is calculated as the $a$-quantile from the data: $(y_i^{\text{train}} - \hat{f}(x_i^{\text{train}})) / \sqrt{ \hat{g}(x_i^{\text{train}}) }$. We report the mean proportions that are averaged over $100$ repetitions for each model. The results with $n=5000$ and $n=10000$ are reported in Tables \ref{tbl_real_n5000} and \ref{tbl_real_n10000}, respectively.

The ReLU Networks with the estimator based on the residuals (res) have better performance among the three models, as the proportion of the testing data that fall within the prediction intervals is closer to the pre-specified coverage. For each of the three models, the estimator based on the residuals (res) also outperforms the direct estimator (dir), aligned to the evaluation from the simulation study. When the training sample size increases, the performance of different models with the estimator based on the residuals has slightly improved, with better coverage of the testing data within the prediction intervals.

To further explore the confidence interval estimation, described in Section \ref{sec:con_intervals}, we also consider constructing confidence intervals for the same dataset (house values in California) and summarize the results in Table \ref{Realdata-confidence-interval}. We perform $100$ data splits of the dataset observations partitioning into $19,640$ used for the training set, and the remaining $1000$ values for testing  to compute the length of the confidence intervals. This process was conducted for different values of $\alpha$: $0.01$, $0.05$, and $0.1$, and the average length of the intervals were reported.

From  Table \ref{Realdata-confidence-interval}, it is clear  that the Neural Network (NN) methods achieve the smallest average interval lengths for all confidence levels, with the   NeuBoots method closely following. These results highlight the potential of NN-based methods for constructing compact and reliable confidence intervals in this context.

\begin{table}[t!]
\centering
\renewcommand{\arraystretch}{1} 
\setlength{\tabcolsep}{5.5pt} 
\caption{Average lengths of the confidence intervals for the log-transformed house values in the California dataset, calculated over $1000$ test points, for various confidence levels ($\alpha$).}
\begin{scriptsize}
\begin{tabular}{|c|c|c|c|c|c|}
\hline 
\textbf{{\textbf{$\alpha$}}-value} & \textbf{NN} & \textbf{$\text{NN}_{\text{Emp}}$ } & \textbf{Naive B.} & \textbf{Standard B.} & \textbf{NeuBoots} \\
\hline
$0.01$ & $\bm{0.8530}$ & $1.2192$ & $2.9395$ & $1.2759$ & $1.0391$ \\
\hline
$0.05$ & $\bm{0.5273}$ & $0.7963$ & $2.1069$ & $0.9641$ & $0.8906$ \\
\hline
$0.1$ & $\bm{0.3894}$ & $0.5103$ & $1.4196$ & $0.5344$ & $0.4894$ \\
\hline
\end{tabular}
\end{scriptsize}
\label{Realdata-confidence-interval}
\end{table}

\begin{table}[!ht]
\centering
\caption{Averaged proportion of the test data located in the prediction interval for $n_\text{train} = 5000$.}
\begin{tabular}{|c|c|c|c|c|c|}
\hline
$n_{\text{train}}$ & \textbf{Method} &  \textbf{$95\%$ Interval} & \textbf{$90\%$ Interval} \\ 
\hline
\multirow{6}{2.5em}{$5000$} & 
$\text{NN}_{\text{res}}$ & $\bm{94.94}\%$ $(0.004)$ & $\bm{89.93}\%$ $(0.005)$ \\ 
& $\text{NN}_{\text{dir}}$   & $65.11\%$ $(0.037)$ & $61.70\%$ $(0.036)$\\ 
& $\text{RF}_{\text{res}}$   & $93.21\%$ $(0.004)$ & $87.92\%$ $(0.006)$\\ 
& $\text{RF}_{\text{dir}}$   & $72.65\%$ $(0.034)$ & $68.61\%$ $(0.033)$\\ 
& $\text{MARS}_{\text{res}}$ & $94.33\%$ $(0.006)$ & $89.37\%$ $(0.006)$\\
& $\text{MARS}_{\text{dir}}$ & $79.01\%$ $(0.132)$ & $74.86\%$ $(0.125)$\\ 
\hline
\end{tabular}
\label{tbl_real_n5000}
\end{table}

\begin{table}[!ht]
\centering
\caption{Averaged proportion of the test data located in the prediction interval for $n_\text{train} = 10000$.}
\begin{tabular}{|c|c|c|c|c|c|}
\hline
$n_{\text{train}}$ & \textbf{Method} &  \textbf{$95\%$ Interval} & \textbf{$90\%$ Interval} \\ 
\hline
\multirow{6}{2.5em}{$10000$} & 
$\text{NN}_{\text{res}}$   & $\bm{94.97}\%$ $(0.003)$ & $\bm{89.93}\%$ $(0.004)$\\ 
& $\text{NN}_{\text{dir}}$   & $65.06\%$ $(0.037)$ & $61.62\%$ $(0.035)$\\ 
& $\text{RF}_{\text{res}}$   & $94.07\%$ $(0.004)$ & $88.85\%$ $(0.005)$ \\ 
& $\text{RF}_{\text{dir}}$   & $71.40\%$ $(0.040)$ & $67.55\%$ $(0.038)$\\ 
& $\text{MARS}_{\text{res}}$ & $94.40\%$ $(0.004)$ & $89.40\%$ $(0.005)$\\ 
& $\text{MARS}_{\text{dir}}$ & $83.71\%$ $(0.103)$ & $79.33\%$ $(0.098)$\\ 
\hline
\end{tabular}
\label{tbl_real_n10000}
\end{table}


\section{Conclusion}
\label{sec:conclusion}

In this paper, we have developed theoretically driven confidence intervals for $f^*(X)$  for newly sampled covariate vectors. Our approach leverages a bootstrap method based on estimates of both the conditional mean and variance of the response given the covariates, accommodating heteroscedastic errors.

A potential future research direction involves constructing confidence intervals that hold uniformly for $f^*(x)$ across all 
$x$ in the domain of the covariates. This would provide more robust inference applicable to the entire feature space.

Another avenue for exploration is applying the ideas developed in this paper to construct confidence intervals in fixed-design nonparametric regression. An example of this is trend filtering, as studied in \cite{rudin1992nonlinear,mammen1997locally,tibshirani2014adaptive,padilla2020adaptive}. However, this would require the development of alternative proof techniques to address the intrinsic challenges associated with functions of bounded variation, which may include discontinuities.

\bibliography{references}

\appendix



\section{Proofs}
\label{sec_proofs}
\subsection{Auxiliary lemmas}

We will study the theoretical properties of the estimator $\hat{g}_{\mathcal{A}_n}$ based on $\hat{g}$ defined in (\ref{eqn:g_hat}).

Let  $\bar{f} \in \mathcal{F}$ 
 and  $\bar{g} \in \mathcal{G}$  such that
 \[
  \| \bar{f} -f^*\|_{\infty} \,\leq\,\sqrt{\phi_n},\,\,\,\,\,\,\,  \| \bar{g} -g^*\|_{\infty} \,\leq\,\sqrt{\psi_n}
 \]
 so that $\sqrt{\phi_n}$.

\begin{lemma}
    \label{lem1}

    Suppose that 
    \begin{equation}
        \label{eqn:upper}
        \mathcal{A}_n \geq 8\max\{  \| f^* \|_{\infty}, \sqrt{\phi_n}\}, 
    \end{equation}
    
    then $\bar{f}_{ \mathcal{A}_n/4 } = \bar{f}$.

\end{lemma}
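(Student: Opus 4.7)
The plan is to show $\|\bar{f}\|_\infty \leq \mathcal{A}_n/4$, since by the definition of the clipped version, whenever $\|h\|_\infty \leq A$ we have $h_A = h$ pointwise, so the conclusion $\bar{f}_{\mathcal{A}_n/4} = \bar{f}$ will follow immediately.

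First, I would apply the triangle inequality on the supremum norm together with the approximation hypothesis:
\[
\|\bar{f}\|_\infty \,\leq\, \|\bar{f} - f^*\|_\infty \,+\, \|f^*\|_\infty \,\leq\, \sqrt{\phi_n} \,+\, \|f^*\|_\infty.
\]
Next, I would use assumption \eqref{eqn:upper}. Since $\mathcal{A}_n \geq 8\max\{\|f^*\|_\infty, \sqrt{\phi_n}\}$, dividing by $4$ gives $\mathcal{A}_n/4 \geq 2\max\{\|f^*\|_\infty, \sqrt{\phi_n}\}$, and since the sum of two nonnegative numbers is at most twice their maximum, this yields
\[
\|f^*\|_\infty + \sqrt{\phi_n} \,\leq\, 2\max\{\|f^*\|_\infty, \sqrt{\phi_n}\} \,\leq\, \mathcal{A}_n/4.
\]
Combining the two displays gives $\|\bar{f}\|_\infty \leq \mathcal{A}_n/4$, and then the definition of the clipping operator at threshold $\mathcal{A}_n/4$ applied pointwise shows $\bar{f}_{\mathcal{A}_n/4}(x) = \bar{f}(x)$ for every $x$.

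There is no real obstacle here; the lemma is essentially a bookkeeping statement ensuring that the approximating function $\bar{f}$ lies inside the clipping window so that clipping at level $\mathcal{A}_n/4$ leaves it unchanged. The only thing to be careful about is that the factor of $8$ in \eqref{eqn:upper} is what makes $\mathcal{A}_n/4$ rather than $\mathcal{A}_n$ itself sufficient; this slack will presumably be exploited later (e.g., when controlling the residuals or when applying a peeling argument to $\hat{f}_{\mathcal{A}_n}$ against $\bar{f}$).
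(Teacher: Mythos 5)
Your proof is correct and follows essentially the same route as the paper: a triangle inequality bound $\|\bar{f}\|_\infty \leq \|f^*\|_\infty + \sqrt{\phi_n}$ followed by the observation that assumption (\ref{eqn:upper}) forces this sum to be at most $\mathcal{A}_n/4$, so clipping at $\mathcal{A}_n/4$ leaves $\bar{f}$ unchanged. The only cosmetic difference is that you bound the sum by twice the maximum, whereas the paper bounds each term by $\mathcal{A}_n/8$ separately; the arithmetic is identical.
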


\begin{proof}
    Notice that
    \[
    \| \bar{f} \|_{\infty} \,\leq\,   \| f^* \|_{\infty} \,+\,  \| f^* - \bar{f} \|_{\infty} \,\leq \,  \frac{\mathcal{A}_n}{ 8}  \,+\, \sqrt{\phi_n } \,\leq\, \frac{\mathcal{A}_n}{4} 
    \]
    and the claim follows. 
\end{proof}

\begin{lemma}
   \label{lem2}
   Suppose that (\ref{eqn:upper}) and the event
   \[
     \Omega_1 \,:=\, \{   \| \epsilon \|_{\infty} \,\leq\,  \mathcal{U}_n   \}
   \]
   both hold. Then the event $\Omega_2$ given as 
   \[
    \Omega_2\,:=\,    \left\{  \| \hat{\epsilon}^2\|_{\infty} \,\leq \,   \mathcal{U}_n^2  +  \frac{9 \mathcal{U}_n \mathcal{A}_n }{4}  + \frac{65  \mathcal{A}_n^2   }{32}   \right\}
   \]
      holds.
\end{lemma}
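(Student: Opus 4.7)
The plan is a direct triangle-inequality bound. Writing the residual as
\[
\hat{\epsilon}_i \;=\; y_i' - \hat{f}_{\mathcal{A}_n}(x_i') \;=\; \epsilon_i \,+\, f^*(x_i') \,-\, \hat{f}_{\mathcal{A}_n}(x_i'),
\]
I would bound each of the three pieces separately. First, on the event $\Omega_1$ we have $|\epsilon_i| \le \mathcal{U}_n$ by definition. Second, the clipping operator forces $|\hat{f}_{\mathcal{A}_n}(x_i')| \le \mathcal{A}_n$ unconditionally. Third, the hypothesis (\ref{eqn:upper}) gives $\|f^*\|_\infty \le \mathcal{A}_n/8$, so $|f^*(x_i')| \le \mathcal{A}_n/8$.

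Combining the three bounds by the triangle inequality yields
\[
|\hat{\epsilon}_i| \;\le\; \mathcal{U}_n \,+\, \frac{\mathcal{A}_n}{8} \,+\, \mathcal{A}_n \;=\; \mathcal{U}_n \,+\, \frac{9\mathcal{A}_n}{8}.
\]
Squaring and expanding gives $\hat{\epsilon}_i^2 \le \mathcal{U}_n^2 + \tfrac{9}{4}\mathcal{U}_n \mathcal{A}_n + \tfrac{81}{64}\mathcal{A}_n^2$. The final step is the arithmetic comparison $\tfrac{81}{64} \le \tfrac{130}{64} = \tfrac{65}{32}$, which absorbs the quadratic term into the slightly looser constant stated in the definition of $\Omega_2$. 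Taking the supremum over $i$ then yields the claim $\|\hat{\epsilon}^2\|_\infty \le \mathcal{U}_n^2 + \tfrac{9\mathcal{U}_n \mathcal{A}_n}{4} + \tfrac{65\mathcal{A}_n^2}{32}$.

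There is no serious obstacle here: the argument is entirely a triangle inequality applied to $\hat{\epsilon}_i$, followed by squaring. The only subtle point is remembering to split the bound on $f^*$ away from the bound on $\hat{f}_{\mathcal{A}_n}$ rather than lumping them together as $|f^* - \hat{f}_{\mathcal{A}_n}| \le 2\mathcal{A}_n$; keeping them separate is what lets the constant stay at $9/8$ rather than $2$ on the $\mathcal{A}_n$ term, which in turn is what makes the final constant $65/32$ correct. The $65/32$ constant is simply a convenient slight enlargement of $81/64$ that also accommodates the approximation-error slack $\sqrt{\phi_n}$ implicit in (\ref{eqn:upper}).
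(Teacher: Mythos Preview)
Your proof is correct and follows essentially the same route as the paper: decompose $\hat{\epsilon}_i = \epsilon_i + f^*(x_i') - \hat{f}_{\mathcal{A}_n}(x_i')$, bound the three pieces by $\mathcal{U}_n$, $\mathcal{A}_n/8$, and $\mathcal{A}_n$ respectively, and collect terms. The only cosmetic difference is that you bound $|\hat{\epsilon}_i|$ first and then square (yielding the sharper $81/64$ before relaxing), whereas the paper expands the square first and then applies $(a+b)^2 \le 2(a^2+b^2)$ to the $(f^*-\hat{f}_{\mathcal{A}_n})^2$ term, which is precisely what produces $2(\mathcal{A}_n^2/64 + \mathcal{A}_n^2) = 65\mathcal{A}_n^2/32$; so your closing remark attributing the $65/32$ to ``approximation-error slack $\sqrt{\phi_n}$'' is not the actual reason, but this does not affect the validity of your argument.
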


\begin{proof}
    Notice that
    \[
    \arraycolsep=1.4pt\def\arraystretch{1.6}
    \begin{array}{lll}
         \vert \hat{\epsilon}_{j} \vert^2  & =& \vert (y_j - f^*(x_j) ) +  (f^*(x_j) -\hat{f}_{\mathcal{A}_n}(x_j)) \vert^2 \\
          & = &  \epsilon_j^2 \,+\,  2\epsilon_j  (f^*(x_j) -\hat{f}_{\mathcal{A}_n}(x_j))\,+\, (f^*(x_j) -\hat{f}_{\mathcal{A}_n}(x_j))^2\\
           & \leq & \underset{i =1,\ldots,n}{\max}\,   \epsilon_i^2 \,+\,  2  \left[ \underset{i =1,\ldots,n}{\max} \vert \epsilon_i\vert \right]\cdot \left[  \underset{i =1,\ldots,n}{\max} \vert f^*(x_i) -\hat{f}_{\mathcal{A}_n}(x_i) \vert \right] \,+\,  \\
            &&  \underset{i =1,\ldots,n}{\max} \vert f^*(x_i) -\hat{f}_{\mathcal{A}_n}(x_i) \vert^2\\
             & \leq&    \|\epsilon\|_{\infty}^2 \,+\,      2  \|\epsilon \|_{\infty} ( \|f^*\|_{\infty} +  \| \hat{f}_{ \mathcal{A}_n } \|_{\infty} ) \,+\,      2   ( \|f^*\|_{\infty}^2 +  \| \hat{f}_{ \mathcal{A}_n } \|_{\infty}^2 ) \\
              & \leq&    \|\epsilon\|_{\infty}^2 \,+\,      2  \|\epsilon \|_{\infty} ( \frac{ \mathcal{A}_n}{8} +  \mathcal{A}_n) \,+\,      2   ( \frac{ \mathcal{A}_n^2}{64} +  \mathcal{A}_n^2).
    \end{array}
    \]
\end{proof}

\subsection{Deviation bounds}


Let 
$$ 
\left\|g\right\|_n^2 =\frac{1}{n}\sum_{i=1}^n g^2 \left(x_i\right),
$$

and define 
$$ 
D_n\left(g,\epsilon\right) =  \frac{1}{n}\sum_{i=1}^n g \left(x_i\right) \epsilon_i. 
$$ 
Furthermore, for $\xi_1,\ldots,\xi_n$ independent Radamacher variables independent of $\{\epsilon_i\}_{i=1}^n$ we write
$$ 
D_n(g,\epsilon,\xi ) =  \frac{1}{n}\sum_{i=1}^n g \left(x_i\right) \epsilon_i  \xi_i. 
$$ 
In this section, the analysis is made conditioning on  $ \left\{x_i \right\}_{i=1}^n$. 

For a function $g \,:\,[0,1]^d \,\rightarrow \mathbb{R} $, We denote $\mathcal{B}_n(g,\lambda)  \,=\, \{ f  \,:\,[0,1]^d \,\rightarrow \mathbb{R}\,\,:\,\, \|f-g\|_n \leq \lambda  \}$. And we write $\mathcal{B}_n(\lambda) = \mathcal{B}_n(0,\lambda)$. We also set $B_{\infty}(\lambda) \,:=\, \{ f  \,:\,[0,1]^d \,\rightarrow \mathbb{R}\,\,:\,\, \|f\|_{\infty} \leq \lambda  \}$.

\begin{lemma} 
\label{lem6} 
Let $\mathcal F$ be  a class of functions such that
\begin{align}\label{eq:metric entropy assumption in epsilon 1}
    \log \mathcal N \left(\delta,  \mathcal F,  \left\| \cdot\right\|_n \right) \le \eta_n(\delta), \,\,\,\forall \delta \in (0,1),
\end{align}
for a decreasing function  $\eta_n \,:\, (0,1)  \,\rightarrow \mathbb{R}_{+} $.

For  $\lambda \in (0,1)$ and 
$$
t\ge \max\left\{C \mathcal{U}_n\lambda \sqrt {  \frac{\eta_n(\lambda)}{n} },  \sup_{ g \in \mathcal{F} }   \frac{ 2\sqrt{8} \|g\|_n }{\sqrt{n}}\right\},
$$
it holds that 
$$ 
\mathbb{P}\left(\sup_{g \in \mathcal F  \cap \mathcal{B}_n \left(\lambda\right) } D(g,\epsilon) \ge  t  \,\bigg|\, \{x_i\}_{i=1}^n  \right)\le  5  \sum_{l=1}^{\infty}   \exp\left(-\frac{ C_1^2  t^2 n\,  \eta_n( 2^{-l}\lambda )   }{ 2^{11} \mathcal{U}_n^2 \lambda^2 \eta_n(\lambda )  }\right)  + 8 \mathbb{P}( \|\epsilon\|_{\infty}>\mathcal{U}_n \,|\, \left\{x_i \right\}_{i=1}^n    ) 
$$
for a positive constant  $c>0$. 
\end{lemma}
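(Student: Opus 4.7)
The plan is a standard three-step argument: truncation of the unbounded noise, symmetrization against Rademachers, and a dyadic chaining bound exploiting the metric entropy hypothesis. Throughout, I would condition on $\{x_i\}_{i=1}^n$.

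\textbf{Step 1 (truncation + symmetrization).} Split on the event $\Omega := \{\|\epsilon\|_\infty \leq \mathcal{U}_n\}$; outside $\Omega$ we collect an additive $\mathbb{P}(\Omega^c)$ contribution. On $\Omega$ the multipliers $\xi_i \epsilon_i/\mathcal{U}_n$ have modulus at most one. The variance precondition for the symmetrization lemma (Lemma 2.3.7 of van der Vaart \& Wellner) reads $\sup_{g\in\mathcal{F}} \mathrm{Var}(D_n(g,\epsilon)\mid\{x_i\}) = \sup_g \|g\|_n^2/n \leq t^2/32$, which is exactly what the hypothesis $t \geq 2\sqrt 8 \sup_g \|g\|_n/\sqrt n$ secures. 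This yields
\[
\mathbb{P}\!\left(\sup_{g\in\mathcal{F}\cap\mathcal{B}_n(\lambda)} D_n(g,\epsilon) > t \,\Big|\, \{x_i\}\right) \leq 4\,\mathbb{P}\!\left(\sup_{g\in\mathcal{F}\cap\mathcal{B}_n(\lambda)} D_n\!\left(g,\tfrac{\xi\circ\epsilon}{\mathcal{U}_n}\right) > \tfrac{t}{4\mathcal{U}_n}\right) + 4\,\mathbb{P}(\Omega^c \mid \{x_i\}).
\]

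\textbf{Step 2 (dyadic chaining).} For each $l\geq 1$ pick a minimal $\delta_l$-net $\mathcal{N}_l$ of $\mathcal{F}\cap\mathcal{B}_n(\lambda)$ in $\|\cdot\|_n$ with $\delta_l = 2^{-l}\lambda$, so that $|\mathcal{N}_l|\leq \exp(\eta_n(\delta_l))$ by assumption. Let $\pi_l(g)\in\mathcal{N}_l$ denote a nearest point and set $\pi_0(g)\equiv 0$. The telescoping identity
\[
D_n\!\left(g,\tfrac{\xi\circ\epsilon}{\mathcal{U}_n}\right) = \sum_{l=1}^{\infty} D_n\!\left(\pi_l(g)-\pi_{l-1}(g),\tfrac{\xi\circ\epsilon}{\mathcal{U}_n}\right), \qquad \|\pi_l(g)-\pi_{l-1}(g)\|_n \leq 3\delta_l,
\]
reduces bounding the supremum to bounding each dyadic increment, of which there are at most $|\mathcal{N}_l|\cdot|\mathcal{N}_{l-1}|\leq \exp(2\eta_n(\delta_l))$ choices at level $l$.

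\textbf{Step 3 (Hoeffding per level + union bound).} Conditionally on $\{x_i\}$ and on $\epsilon$ with $\|\epsilon\|_\infty\leq\mathcal{U}_n$, each $D_n(h,\xi\circ\epsilon/\mathcal{U}_n)$ is a sum of independent variables bounded by $|h(x_i)|/n$, so Hoeffding gives $\mathbb{P}(|D_n(h,\xi\circ\epsilon/\mathcal{U}_n)|>u_l) \leq 2\exp(-n u_l^2/(2\|h\|_n^2))$. Applied with $\|h\|_n\leq 3\delta_l$ and chaining thresholds $u_l = (t/(4\mathcal{U}_n))\,w_l$ satisfying $\sum_l w_l \leq 1$ and $w_l^2/\delta_l^2 \propto \eta_n(\delta_l)/(\lambda^2\eta_n(\lambda))$, the union bound over the $\leq\exp(2\eta_n(\delta_l))$ pairs produces a term of the form $\exp(2\eta_n(\delta_l) - c\, n t^2 \eta_n(\delta_l)/(\mathcal{U}_n^2\lambda^2\eta_n(\lambda)))$ at level $l$. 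The lower bound $t \geq C\mathcal{U}_n\lambda\sqrt{\eta_n(\lambda)/n}$ with $C$ large enough forces the quadratic term to dominate the linear entropy cost, collapsing the $l$-th contribution to the stated $\exp(-C_1^2 t^2 n \eta_n(2^{-l}\lambda)/(2^{11}\mathcal{U}_n^2\lambda^2\eta_n(\lambda)))$, after which the leading constants $5$ and $8$ absorb the symmetrization factor $4$ and the Hoeffding factor $2$.

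\textbf{Main obstacle.} The delicate book-keeping is the simultaneous calibration of the weights $w_l$: they must be summable (so that the chaining supremum is controlled by $t/(4\mathcal{U}_n)$) while also pushing each per-level exponent to carry precisely the factor $\eta_n(2^{-l}\lambda)/\eta_n(\lambda)$ and absorb the entropic cost $2\eta_n(\delta_l)$ from the union bound. The cleanest way to carry out this calibration is to set $w_l \propto 2^{-l}\sqrt{\eta_n(\delta_l)/\eta_n(\lambda)}$ and verify summability via a Cauchy--Schwarz application, with the dominated-by-the-second-term step using the lower bound on $t$; matching the numerical constants $2^{11}$, $5$, and $8$ is routine after that.
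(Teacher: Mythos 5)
Your proposal is correct and follows essentially the same route as the paper's proof: truncation on $\{\|\epsilon\|_\infty\le\mathcal{U}_n\}$, symmetrization via Lemma 2.3.7 of van der Vaart and Wellner (with the variance condition supplied by $t\ge 2\sqrt{8}\sup_g\|g\|_n/\sqrt n$), and dyadic chaining over $2^{-l}\lambda$-nets with weights $a_l\propto 2^{-l}\sqrt{\eta_n(2^{-l}\lambda)/\eta_n(\lambda)}$, using the lower bound $t\gtrsim \mathcal{U}_n\lambda\sqrt{\eta_n(\lambda)/n}$ to absorb the entropy cost. The only difference is cosmetic: you fold the level-zero approximation into the chain by setting $\pi_0\equiv 0$, whereas the paper bounds $D(\psi_0(g),\cdot)$ separately by a direct union bound over $\mathcal{C}_0$ before chaining the remainder.
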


\begin{proof}
For $ l\in \mathbb{Z}^+$, let $\mathcal{C}_l $ be the $2^{-l} \lambda  $ covering set of $ \mathcal{F} \cap \mathcal{B}_n \left(\lambda\right)$. Then, letting $\delta=2^{-l}\lambda$ in \eqref{eq:metric entropy assumption in epsilon 1},  we obtain that  
$$ \left|\mathcal{C}_l \right| \le \exp\left( \eta_n(2^{-l} \lambda ) \right).$$
Next, denote $ \psi_l (g) \in \mathcal C_l$  such that 
$$\left\| g-\psi_l(g) \right\|_n \leq 2^{-l} \lambda . $$
\
\\
Now, we observe that

\begin{equation}
\label{eqn:e1}
\begin{array}{l}
      \mathbb{P}\left( \sup_{ g \in\mathcal F \cap \mathcal{B}_n \left(\lambda\right) }  D\left(g,\epsilon\right) \ge  t \,\bigg|\, \{x_i\}_{i=1}^n \right)\\
    \le    \mathbb{P}\left( \sup_{ g\in \mathcal F \cap \mathcal{B}_n\left(\lambda\right) }  D \left( \psi_0 \left(g\right),\epsilon\right) \ge  t/2  \,\bigg|\, \{x_i\}_{i=1}^n\right)\\
    +    \mathbb{P}\left(  \sup_{g\in\mathcal F \cap \mathcal{B}_n(\lambda) }  \{ D(g -\psi_0(g),\epsilon)\} \ge  t/2 \,\bigg|\, \{x_i\}_{i=1}^n \right)\\
    \leq    4  \mathbb{P}\left( \sup_{ g\in \mathcal F \cap \mathcal{B}_n\left(\lambda\right) }  D \left( \psi_0 \left(g\right),  \frac{\xi\circ \epsilon}{\mathcal{U}_n} \right) \ge  \frac{t}{8 \mathcal{U}_n}  \,\bigg|\, \{x_i\}_{i=1}^n  ,\, \|\epsilon\|_{\infty} \leq  \mathcal{U}_n \right) \\
    +    4  \mathbb{P}\left( \sup_{g\in\mathcal F \cap \mathcal{B}_n(\lambda) }  \{ D\left(g -\psi_0(g),\frac{\xi\circ \epsilon}{\mathcal{U}_n} \right)\} \ge  \frac{t}{8 \mathcal{U}_n}  \,\bigg|\, \{x_i\}_{i=1}^n,\, 
    \|\epsilon\|_{\infty} \leq  \mathcal{U}_n \right)\\
     + 8 \mathbb{P}( \|\epsilon\|_{\infty}>\mathcal{U}_n   \,|\, \left\{x_i \right\}_{i=1}^n  ) \\
\end{array}
\end{equation}

where the second inequality follows from Lemma \ref{lem5}.

Next, we observe that for some constant $c>0$, by the fact that $D(\psi_0(g),\xi\circ \epsilon/\mathcal{U}_n)$ is sub-Gaussian with parameter $\left\|\psi_0(g)\right\|_n/\sqrt{n} \le \lambda/\sqrt{n} $ given $\{x_i\}_{i=1}^n$ and $\|\epsilon\|_{\infty} \leq \mathcal{U}_n$, 
\begin{equation}
    \label{eqn:e2}
      \begin{array}{l}
  \displaystyle     \mathbb{P}\left( \sup_{ g\in \mathcal F \cap \mathcal{B}_n\left(\lambda\right) }  D \left( \psi_0 \left(g\right), \frac{\xi \circ \epsilon}{ \mathcal{U}_n} \right) \ge  \frac{t}{8 \mathcal{U}_n}  \,\bigg|\, \{x_i\}_{i=1}^n ,\|\epsilon\|_{\infty}\leq \mathcal{U}_n \right)  \\
  =     \displaystyle   \mathbb{P}\left( \sup_{g\in \mathcal{C}_0 }  D \left( g,\frac{\xi \circ \epsilon}{ \mathcal{U}_n} \right) \ge  \frac{t}{8 \mathcal{U}_n}  \,\bigg|\, \{x_i\}_{i=1}^n,\|\epsilon\|_{\infty}\leq \mathcal{U}_n \right)\\
    \leq  \displaystyle \sum_{g\in \mathcal{C}_0 }  \mathbb{P}\left(  D \left( g,\frac{\xi \circ \epsilon}{ \mathcal{U}_n} \right) \ge  \frac{t}{8 \mathcal{U}_n}  \,\bigg|\, \{x_i\}_{i=1}^n,\|\epsilon\|_{\infty}\leq \mathcal{U}_n \right)\\
     \leq  \displaystyle \sum_{g\in \mathcal{C}_0 }   \exp\left(- \frac{ct^2 n}{ \mathcal{U}_n^2  \lambda^2} \right) \\
        \leq \displaystyle \vert \mathcal{C}_0  \vert \cdot \exp\left(- \frac{ct^2 n}{ \mathcal{U}_n^2 \lambda^2} \right) \\
    \leq  \displaystyle \exp \left(  \eta_n(\lambda) \right) \exp\left(- \frac{ct^2n}{\mathcal{U}_n^2\lambda^2} \right)\\
               \leq    \displaystyle\exp\left( - \frac{ct^2 n}{2 \mathcal{U}_n^2 \lambda^2} \right),\\
  \end{array}
\end{equation}
where the last inequality follows if $t\ge C \lambda \mathcal{U}_n \sqrt{ \eta_n(\lambda)/n }$ for an appropriate constant $C>0$, and where $c>0$.

Note that 
\begin{align*}
   D(\psi_0(g) -g, \xi)  =   D(\psi_0(g),\xi )-D(g,\xi) = \sum_{l=1}^\infty \{D( \psi  _{l-1}(g),\xi ) - D(\psi  _l (g,\xi) )\}.
\end{align*}

Then, for $a_l\ge 0 $ and $\sum_{l=1}^\infty a_l\le 1 $, 
\begin{equation}
    \label{eqn:e4}
       \begin{array}{l}
       \displaystyle   \mathbb{P}\left( \underset{g\in\mathcal F \cap \mathcal{B}_n(\lambda)}{\sup}  D\left(g -\psi_0(g),\frac{\xi\circ\epsilon}{\mathcal{U}_n} \right) \ge  \frac{t}{8 \mathcal{U}_n}  \,\bigg|\, \{x_i\}_{i=1}^n, 
 \|\epsilon\|_{\infty} \leq \mathcal{U}_n \right)  \\
       \leq               \displaystyle   \mathbb{P}\left( \bigcup_{l=1}^{\infty}\left\{\sup_{g\in\mathcal F \cap \mathcal{B}_n(\lambda) }   \left\{D\left(\psi_{l} (g),\frac{\xi\circ\epsilon}{\mathcal{U}_n}\right)  -D \left( \psi_{l-1}(g),\frac{\xi\circ\epsilon}{\mathcal{U}_n}\right)\right\}  \ge \frac{a_l  \cdot  t}{8 \mathcal{U}_n} \right\} \,\bigg|\, \{x_i\}_{i=1}^n, \|\epsilon\|_{\infty} \leq \mathcal{U}_n \right)  \\
       =              \displaystyle   \mathbb{P}\left( \bigcup_{l=1}^{\infty}\left\{\sup_{g\in\mathcal F \cap \mathcal{B}_n(\lambda) }   D\left(\psi_{l} (g) - \psi_{l-1}(g), \frac{\xi \circ \epsilon }{\mathcal{U}_n}\right)  \ge \frac{a_l  \cdot  t}{8 \mathcal{U}_n} \right\} \,\bigg|\, \{x_i\}_{i=1}^n,\|\epsilon\|_{\infty} \leq \mathcal{U}_n \right)  \\
       \leq  \displaystyle \sum_{l=1}^{\infty}\mathbb{P}\left( \sup_{g\in\mathcal F \cap \mathcal{B}_n(\lambda) }   D\left(\psi_{l} (g) - \psi_{l-1}(g),\frac{\xi\circ \epsilon }{\mathcal{U}_n }   \right)  \ge \frac{a_l  \cdot  t}{8 \mathcal{U}_n} \,\bigg|\, \{x_i\}_{i=1}^n,\|\epsilon\|_{\infty} \leq \mathcal{U}_n\right) \\
       =  \displaystyle \sum_{l=1}^{\infty}\mathbb{P}\left( \sup_{g\in \mathcal{C}_l, g^{\prime} \in \mathcal{C}_{l-1}  }   D\left(\psi_{l} (g) - \psi_{l-1}(g) ,\frac{\xi \circ \epsilon }{\mathcal{U}_n} \right)  \ge \frac{a_l  \cdot  t}{8 \mathcal{U}_n} \,\bigg|\, \{x_i\}_{i=1}^n,\|\epsilon\|_{\infty} \leq \mathcal{U}_n\right) \\
        \leq   \displaystyle \sum_{l=1}^{\infty}  \sum_{  \psi_{l} (g) \in \mathcal{C}_l, \psi_{l-1}(g) \in \mathcal{C}_{l-1} } \mathbb{P}\left(   D\left(\psi_{l} (g) - \psi_{l-1}(g),\frac{\xi \circ \epsilon }{\mathcal{U}_n} \right)  \ge \frac{a_l  \cdot  t}{8 \mathcal{U}_n} \,\bigg|\, \{x_i\}_{i=1}^n,\|\epsilon\|_{\infty} \leq \mathcal{U}_n\right). 
   \end{array}
\end{equation}

However,
$$ 
\|\psi_{l-1} (g) -\psi_l (g) \|_n \le \| \psi_{l-1} (g)- g \|_n + \| \psi_l (g) -g \|_n \le 2^{-l+1} \lambda,  
$$
and
\[
\left|\mathcal{T}_{l-1}\right| \le \left|\mathcal{T}_{l}\right| \le \exp(\eta_n(2^{-l} \lambda )). 
\]
Hence, by the sub-Gaussian tail inequality, 
\begin{equation}
    \label{eqn:e5}
    \arraycolsep=1.4pt\def\arraystretch{1.6}
       \begin{array}{l}
       \displaystyle   \mathbb{P}\left( \underset{g\in\mathcal F \cap \mathcal{B}_n(\lambda)}{\sup}  D\left(g -\psi_0(g),\frac{\xi \circ \epsilon}{\mathcal{U}_n} \right) \ge  \frac{t}{8 \mathcal{U}_n}  \,\bigg|\, \{x_i\}_{i=1}^n\right)  \\
       \leq     \displaystyle \sum_{l=1}^{\infty}  \sum_{  \psi_{l} (g) \in \mathcal{C}_l, \psi_{l-1}(g) \in \mathcal{C}_{l-1} }  \exp\left(-\frac{  a_l^2 t^2 n }{ 128 \mathcal{U}_n^2 (2^{-l+1} \lambda )^2  }\right)  \\    \leq  \displaystyle \sum_{l=1}^{\infty}  \exp \left(2 \eta_n(2^{-l} \lambda )\right) \exp\left(-\frac{ a_l^2 t^2 2^{2l} n }{ 2^{10} \mathcal{U}_n^2 \lambda^2  }\right)   \\
        \leq  \displaystyle \sum_{l=1}^{\infty}  \exp\left(-\frac{ a_l^2 t^2 2^{2l} n }{ 2^{11} \mathcal{U}_n^2 \lambda^2  }\right),   \\
   \end{array}
\end{equation}
provide that 
\[
2   \eta_n(2^{-l}\lambda) \,\leq \, \frac{ a_l^2 t^2 2^{2l} n }{ 2^{11} \mathcal{U}_n^2 \lambda^2 }
\]
which holds if 
\[
a_l^2 \,\geq\,  \frac{2^{12}  \mathcal{U}_n^2 \lambda^2 \eta_n(2^{-l}\lambda) }{  t^2 2^{2l}n }.
\]
However, $t  \geq  C \mathcal{U}_n\lambda \sqrt{\eta_n(\lambda)/n} $. Hence,  we can take 
$a_l^2 = C_1  2^{-2l} \eta_n(2^{-l}\lambda)/ \eta_n(\lambda)$ for some constant $C_1>0$,  and thus
\begin{equation}
    \label{eqn:e3}
\begin{array}{lll}
     \mathbb{P}\left( \underset{g\in\mathcal F \cap \mathcal{B}_n(\lambda)}{\sup} \{ D(g -\psi_0(g),\frac{\xi \circ \epsilon}{  \mathcal{U}_n })\} \ge \frac{t}{8 \mathcal{U}_n}  \,\bigg|\, \{x_i\}_{i=1}^n, \|\epsilon\|_{\infty} \leq \mathcal{U}_n \right) 
     &\,\leq\,&\displaystyle  \sum_{l=1}^{\infty}   \exp\left(-\frac{ C_1^2  t^2 n\, \eta_n( 2^{-l}\lambda )   }{ 2^{11} \mathcal{U}_n^2 \lambda^2 \eta_n(\lambda )  }\right)  \\
\end{array}
\end{equation}
for a positive constant $c^{\prime}>0$.

\end{proof}

\begin{lemma}
\label{lem3}
Let $\mathcal F$ be  a class of functions such that
\begin{align}\label{eq:metric entropy assumption in epsilon 1}
    \log \mathcal N \left(\delta,  \mathcal F,  \left\| \cdot\right\|_n \right) \le \eta_n(\delta), \,\,\,\forall \delta \in (0,1),
\end{align}
for a decreasing function  $\eta_n \,:\, (0,1)  \,\rightarrow \mathbb{R}_{+} $.

Let $a \in (0,\tau]$  and $s>0$  be such that 
\[
C^2 s^2 \eta_n( a )  \,\geq \, 8.
\]
 Suppose that $\mathcal{F}  \subset B_{\infty}(\tau) $  for $\tau \in (0,1)$. Then
\[
\begin{array}{lll}
\displaystyle    \mathbb{P}\left( \underset{g \in \mathcal{F} }{\sup} \,\frac{D(g,\epsilon) }{ \|g\|_n \sqrt{ \eta_n( \tau^{-1}\|g\|_n a ) /n   }}   \geq Cs \,\bigg|\,\{x_i\}_{i=1}^n \right)    
    & \leq & \displaystyle4\sum_{l=0}^{\infty} \sum_{l^{\prime}=1 }^{\infty} \exp\left( -  C_1  \eta_n( 2^{-l-l^{\prime}  } \tau  )\right) \,+\,\\
     & &\displaystyle 4\sum_{l=0}^{\infty}    \exp\left(- C_2 \eta_n(2^{-l}a) \right) \,+\,\mathbb{P}(\|\epsilon\|_{\infty} > s  \,|\, \left\{x_i \right\}_{i=1}^n  ) .
\end{array}
\]
for positive constants $C_1$ and $C_2$,
provided that 
\[
 \sum_{l^{\prime}=1 }^{\infty} \frac{ \eta_n(2^{-l-l^{\prime}}\tau   )  }{   2^{2 l^{\prime}   }  \eta_n( 2^{-l}a )  } \,\leq\, 1
\]
for all $l \in \mathbb{N}$.
\end{lemma}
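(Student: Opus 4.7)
My plan is to carry out a peeling (or slicing) decomposition in $\|g\|_n$ and apply Lemma \ref{lem6} inside each shell. Since $\mathcal{F} \subset B_\infty(\tau)$ implies $\|g\|_n \leq \tau$ for all $g \in \mathcal{F}$, I partition $\mathcal{F} = \bigcup_{l \geq 0} \mathcal{F}_l$ with $\mathcal{F}_l := \{g \in \mathcal{F} : 2^{-l-1}\tau < \|g\|_n \leq 2^{-l}\tau\}$. Monotonicity of $\eta_n$ together with $\|g\|_n \leq 2^{-l}\tau$ and $a \leq \tau$ yields, on $\mathcal{F}_l$,
\[
\|g\|_n\sqrt{\eta_n(\tau^{-1}\|g\|_n a)/n} \;\geq\; 2^{-l-1}\tau\sqrt{\eta_n(2^{-l}a)/n},
\]
so the event that the normalized supremum exceeds $Cs$ on $\mathcal{F}_l$ is contained in
\[
E_l := \Bigl\{\sup_{g \in \mathcal{F} \cap \mathcal{B}_n(2^{-l}\tau)} D(g,\epsilon) \;\geq\; Cs \cdot 2^{-l-1}\tau\sqrt{\eta_n(2^{-l}a)/n}\Bigr\}.
\]

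Next I apply Lemma \ref{lem6} to each $E_l$ with $\lambda \leftarrow \lambda_l := 2^{-l}\tau$, $t \leftarrow t_l := Cs \cdot 2^{-l-1}\tau\sqrt{\eta_n(2^{-l}a)/n}$ and $\mathcal{U}_n \leftarrow s$. Two threshold conditions must be verified uniformly in $l$: first, $t_l \geq C' s \lambda_l \sqrt{\eta_n(\lambda_l)/n}$, which reduces to $\eta_n(2^{-l}a) \geq c\,\eta_n(2^{-l}\tau)$ and holds by monotonicity since $a \leq \tau$; second, $t_l \geq 2\sqrt{8}\lambda_l/\sqrt{n}$, which after squaring reduces to $C^2 s^2 \eta_n(2^{-l}a) \geq 128$, and the standing assumption $C^2 s^2 \eta_n(a) \geq 8$ combined with $\eta_n(2^{-l}a) \geq \eta_n(a)$ suffices after suitably enlarging $C$. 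Plugging $(\lambda_l,t_l,s)$ into the bound of Lemma \ref{lem6}, and isolating the contribution of the initial covering step $\psi_0$ (the estimate \eqref{eqn:e2} in its proof, which is absorbed into the constant $5$ there) from the chaining sum, I obtain
\[
\mathbb{P}(E_l \mid \{x_i\}) \;\leq\; 4\exp\!\bigl(-C_2\,\eta_n(2^{-l}a)\bigr) + 4\sum_{l'=1}^\infty \exp\!\Bigl(-\frac{C_1^2 C^2\, \eta_n(2^{-l}a)\, \eta_n(2^{-l-l'}\tau)}{2^{13}\, \eta_n(2^{-l}\tau)}\Bigr) + 8\,\mathbb{P}(\|\epsilon\|_\infty > s \mid \{x_i\}).
\]
Using $\eta_n(2^{-l}a) \geq \eta_n(2^{-l}\tau)$ once more simplifies the inner exponent to a constant multiple of $\eta_n(2^{-l-l'}\tau)$.

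Finally, I union-bound over $l \geq 0$ together with the event $\{\|\epsilon\|_\infty > s\}$. To prevent the divergence $\sum_l \mathbb{P}(\|\epsilon\|_\infty > s)$, I first intersect each $E_l$ with $\{\|\epsilon\|_\infty \leq s\}$ and apply Lemma \ref{lem6} only on that event, so the probability of large errors is charged exactly once. The summability hypothesis $\sup_l \sum_{l'\geq 1}\eta_n(2^{-l-l'}\tau)/(2^{2l'}\eta_n(2^{-l}a)) \leq 1$ plays a dual role: inside Lemma \ref{lem6} it legitimizes the chaining weights so that $\sum_{l'} a_{l'} \leq 1$ and the accompanying entropy factors are absorbed, and in the outer bookkeeping it guarantees convergence of the double series to exactly the form claimed. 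The main obstacle is the constant management, namely verifying the two threshold conditions of Lemma \ref{lem6} uniformly across all shells; this reduces to the single worst case $l=0$, where $\eta_n(a)$ is smallest, and is handled precisely by the standing assumption $C^2 s^2 \eta_n(a) \geq 8$.
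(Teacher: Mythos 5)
Your overall strategy is the paper's: peel $\mathcal{F}$ into shells $2^{-l-1}\tau<\|g\|_n\le 2^{-l}\tau$, lower bound the denominator on each shell by $2^{-l-1}\tau\sqrt{\eta_n(2^{-l}a)/n}$ using monotonicity of $\eta_n$ and $a\le\tau$, and then run the dyadic chaining machinery of Lemma \ref{lem6} on each shell with weights $b_{l'}^2\propto \eta_n(2^{-l-l'}\tau)/(2^{2l'}\eta_n(2^{-l}a))$, the stated summability hypothesis guaranteeing $\sum_{l'}b_{l'}\le 1$; your verification of the two threshold conditions (via $\eta_n(2^{-l}a)\ge\eta_n(2^{-l}\tau)$ and via $C^2s^2\eta_n(a)\ge 8$ after enlarging $C$) and your simplification of the inner exponent to a multiple of $\eta_n(2^{-l-l'}\tau)$ match the paper's constant management.

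The one step that does not go through as written is the handling of the noise tail. Lemma \ref{lem6} is an unconditional bound whose conclusion already carries the term $8\,\mathbb{P}(\|\epsilon\|_{\infty}>\mathcal{U}_n\mid\{x_i\}_{i=1}^n)$, so invoking it shell by shell and union bounding over $l\ge 0$ makes that term appear infinitely often. You noticed this, but your repair --- intersect each $E_l$ with $\{\|\epsilon\|_{\infty}\le s\}$ and ``apply Lemma \ref{lem6} only on that event'' --- is not licensed by the statement of that lemma, and it cannot be salvaged by conditioning first: the symmetrization step buried inside the proof of Lemma \ref{lem6} (Lemma 2.3.7 of van der Vaart and Wellner) must be applied to the full probability of the shell event, and splitting on $\{\|\epsilon\|_{\infty}\le s\}$ only after symmetrizing re-introduces the tail term in every shell. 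The paper resolves this by reversing your order of operations: in display \eqref{eqn:e10} it performs the pointwise Chebyshev bound (this is where $C^2s^2\eta_n(a)\ge 8$ enters, licensing the symmetrization), the symmetrization, the single conditioning on $\{\|\epsilon\|_{\infty}\le s\}$, and the rescaling $\epsilon\mapsto\xi\circ\epsilon/s$ once, at the level of the whole normalized supremum, and only then peels; each shell is subsequently controlled by the purely Rademacher, conditionally sub-Gaussian chaining estimates \eqref{eqn:e2}, \eqref{eqn:e5}, \eqref{eqn:e3} from the proof of Lemma \ref{lem6}, which are already conditional on $\{\|\epsilon\|_{\infty}\le s\}$ and carry no tail term. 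To make your argument rigorous you must either restructure it in that order or state and prove a conditional variant of Lemma \ref{lem6} (symmetrize first, then condition); as currently phrased, the per-shell black-box invocation of the lemma is a genuine gap, though the intended repair is clear and everything else coincides with the paper's proof.
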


\begin{proof}

Let  $g \in \mathcal{F}  \subset B_{\infty}(\tau) \subset \mathcal{B}_n(\tau)$. Then 
\[
\begin{array}{lll}
   \displaystyle\mathbb{P}\left(  \frac{D(g,\epsilon) }{ \|g\|_n \sqrt{ \eta_n( \tau^{-1}\|g\|_na ) /n   }}   \leq \frac{ Cs}{2} \,\bigg|\,\{x_i\}_{i=1}^n \right)    &  = & \displaystyle  1- \mathbb{P}\left(  \frac{D(g,\epsilon) }{ \|g\|_n \sqrt{ \eta_n(\tau^{-1}\|g\|_na  )/n    }} >\frac{ Cs}{2} \,\bigg|\,\{x_i\}_{i=1}^n \right)  \\
     &\geq &\displaystyle 1 - \frac{ 4  \text{var}\left(     \frac{D(g,\epsilon) }{ \|g\|_n \sqrt{ \eta_n( \tau^{-1}\|g\|_na  )/n    }}  \,\bigg|\,  \{ x_i\}_{i=1}^n\right) }{ C^2 s^2}  \\
      & = & \displaystyle 1 - \frac{ 4  } {  C^2 s^2 \eta_n( \tau^{-1}\|g\|_na  ) }\\
          & \geq & \displaystyle 1 - \frac{ 4  } {  C^2 s^2 \eta_n( 1 \cdot a )}\\
       & \geq& \displaystyle \frac{1}{2}
\end{array}
\]
provided that 
\[
 C^2 s^2 \eta_n( a )  \,\geq \, 8.
\]
Therefore,
\begin{equation}
    \label{eqn:e10}
    \begin{array}{l}
   \displaystyle\mathbb{P}\left( \underset{g \in \mathcal{F} }{\sup} \,\frac{D(g,\epsilon) }{ \|g\|_n \sqrt{ \eta_n( \tau^{-1}\|g\|_n a ) /n   }}   \geq Cs \,\bigg|\,\{x_i\}_{i=1}^n \right)    \\
     \leq   \displaystyle
   4 \mathbb{P}\left( \underset{g \in \mathcal{F} }{\sup} \,\frac{D(g,\epsilon,\xi) }{ \|g\|_n \sqrt{ \eta_n( \tau^{-1}\|g\|_na ) /n   }}   \geq \frac{C s}{4}\,\bigg|\,\{x_i\}_{i=1}^n \right)\\
   \leq \displaystyle  4 \mathbb{P}\left( \underset{g \in \mathcal{F} }{\sup} \,\frac{D(g,\epsilon,\xi) }{ \|g\|_n \sqrt{ \eta_n( \tau^{-1}\|g\|_na ) /n   }}   \geq \frac{C s}{4}\,\bigg|\,\{x_i\}_{i=1}^n , \|\epsilon\|_{\infty}\leq s\right) \,+\, \mathbb{P}(\|\epsilon\|_{\infty} > s \,|\, \left\{x_i \right\}_{i=1}^n ) \\
      \leq \displaystyle  4 \mathbb{P}\left( \underset{g \in \mathcal{F} }{\sup} \,\frac{D(g,\frac{\xi\circ \epsilon }{s}  ) }{ \|g\|_n \sqrt{ \eta_n( \tau^{-1}\|g\|_na ) /n   }}   \geq \frac{C }{4}\,\bigg|\,\{x_i\}_{i=1}^n , \|\epsilon\|_{\infty}\leq s\right) \,+\, \mathbb{P}(\|\epsilon\|_{\infty} > s\,|\, \left\{x_i \right\}_{i=1}^n ) \\
        \leq \displaystyle  4 \mathbb{P}\left( \exists l \in \mathbb{N} \,\,\,\text{s.t}\,\,\, \underset{ g \in \mathcal{F}\, \,:\, 2^{-l-1}\tau \leq \|g\|_n \leq 2^{-l} \tau } {\sup} \,\frac{D(g,\frac{\xi\circ \epsilon }{s}  ) }{ \|g\|_n \sqrt{ \eta_n( \tau^{-1}\|g\|_n a ) /n   }}   \geq \frac{C }{4 }    \,\bigg|\,\{x_i\}_{i=1}^n , \|\epsilon\|_{\infty}\leq s\right) \\
        \,+\, \mathbb{P}(\|\epsilon\|_{\infty} > s\,|\, \left\{x_i \right\}_{i=1}^n ) \\
        \leq \displaystyle\sum_{l=0}^{\infty}  4 \mathbb{P}\left( \underset{ g \in \mathcal{F}\, \,:\, 2^{-l-1}\tau \leq \|g\|_n \leq 2^{-l} \tau } {\sup} \,\frac{D(g,\frac{\xi\circ \epsilon }{s}  ) }{ \|g\|_n \sqrt{ \eta_n( \tau^{-1}\|g\|_na ) /n   }}   \geq \frac{C }{4 }    \,\bigg|\,\{x_i\}_{i=1}^n , \|\epsilon\|_{\infty}\leq s\right)  \,+\, \\
         \,\,\,\,\,\,\,\mathbb{P}(\|\epsilon\|_{\infty} > s\,|\, \left\{x_i \right\}_{i=1}^n ) \\
          \leq \displaystyle\sum_{l=0}^{\infty}  4 \mathbb{P}\left( \underset{ g \in \mathcal{F}\, \,:\, 2^{-l-1}\tau \leq \|g\|_n \leq 2^{-l} \tau } {\sup} \,\frac{D(g,\frac{\xi\circ \epsilon }{s}  ) }{ \tau 2^{-l-1} \sqrt{ \eta_n( 2^{-l}a ) /n   }}   \geq \frac{C }{4 }    \,\bigg|\,\{x_i\}_{i=1}^n , \|\epsilon\|_{\infty}\leq s\right)  \,+\,\\
           \,\,\,\,\,\,\,\mathbb{P}(\|\epsilon\|_{\infty} > s\,|\, \left\{x_i \right\}_{i=1}^n ) \\
         \leq \displaystyle\sum_{l=0}^{\infty}  4 \mathbb{P}\left( \underset{ g \in \mathcal{F}\, \,:\, \|g\|_n \leq 2^{-l} \tau } {\sup} \,D(g,\frac{\xi\circ \epsilon }{s}  )  \geq C \tau 2^{-l-3} \sqrt{ \eta_n( 2^{-l}a ) /n   } \,\bigg|\,\{x_i\}_{i=1}^n , \|\epsilon\|_{\infty}\leq s\right)  \,+\, \\
           \,\,\,\,\,\,\,\mathbb{P}(\|\epsilon\|_{\infty} > s\,|\, \left\{x_i \right\}_{i=1}^n ) \\  
\end{array}
\end{equation}
where the first inequality follows by   Lemma 2.3.7 in \cite{van1996weak}. 

Then, take   $t  =  C_2  \cdot (2^{-l}\tau) \cdot\sqrt{ \eta_n(2^{-l}a ) /n} $ for $C_2 = C/8$. Hence, proceedding as in Equation (\ref{eqn:e2}), (\ref{eqn:e5}) and (\ref{eqn:e3}), 
\begin{equation}
    \label{eqn:e6}
    \begin{array}{l}
 \displaystyle   \mathbb{P}\left( \underset{g\in\mathcal F \cap \mathcal{B}_n(2^{-l }\tau )}{\sup}  D(g ,\frac{\xi \circ \epsilon}{ s }) \ge C   \tau 2^{-l-3} \sqrt{ \eta_n( 2^{-l}a ) /n   }  \,\bigg|\, \{x_i\}_{i=1}^n, \|\epsilon\|_{\infty} \leq s \right) \\
 \displaystyle    \,\leq\,\exp \left(  \eta_n(2^{-l}\tau) \right) \exp\left(- \frac{ct^2n}{ (2^{-l} \tau )^2} \right) \,+\, \sum_{l^{\prime}=1 }^{\infty}  \exp \left(2 \eta_n(2^{-l^{\prime} } 2^{-l}\tau )\right)  \exp\left(-\frac{  C_3 b_{l^{\prime} }^2 t^2 2^{2l^{\prime} } n }{  (2^{-l}\tau )^2  }\right)    \\
\end{array}
\end{equation}
for a sequence   $b_{l^{\prime}} \ge 0 $ and $\sum_{l^{\prime}=1}^\infty b_{l^{\prime} }\le 1 $,  and for a constant $C_3>0$. Hence, for $C_2 >0$ is large enough then
\begin{equation}
    \label{eqn:e7}
    \begin{array}{lll}
 \displaystyle    \exp \left(  \eta_n(2^{-l}\tau) \right) \exp\left(- \frac{ct^2n}{(2^{-l} \tau )^2} \right) &\leq&\displaystyle   \exp \left(  \eta_n(2^{-l}a) \right) \exp\left(- \frac{ct^2n}{ (2^{-l} a )^2} \right)\\
   & \leq&\displaystyle  \exp\left(- C_4 \eta_n(2^{-l}a) \right)\\
\end{array}
\end{equation}
for $C_4>0$. Furthermore,
\[
\begin{array}{lll}
  \displaystyle   \exp \left(2 \eta_n(2^{-l^{\prime} } 2^{-l}\tau)\right)  \exp\left(-\frac{  C_3 b_{l^{\prime} }^2 t^2 2^{2l^{\prime} } n }{  (2^{-l}\tau )^2  }\right)  & \leq&
  \displaystyle \exp\left(-\frac{  C_3 b_{l^{\prime} }^2 t^2 2^{2l^{\prime} } n }{ 2 (2^{-l}\tau )^2  }\right) \\ 
\end{array}
\]
which holds provided that 
\begin{equation}
    \label{eqn:e8}
    2 \eta_n(2^{-l^{\prime} } 2^{-l}\tau)  \,\leq \, \frac{  C_3 b_{l^{\prime} }^2 t^2 2^{2l^{\prime} } n }{ 2 (2^{-l}\tau )^2  }
\end{equation}
Hence, we can choose
\[
b_{ l^{\prime} }\,=\, \sqrt{ C_4 \frac{ \eta_n(2^{-l-l^{\prime}}\tau   )  }{   2^{2 l^{\prime}   }  \eta_n( 2^{-l}a )  }  }  
\]
so that (\ref{eqn:e8}) holds and 
\[
 \begin{array}{lll}
  \displaystyle    \frac{  C_3 b_{l^{\prime} }^2 t^2 2^{2l^{\prime} } n }{ 2 (2^{-l}\tau )^2  }  &=        &  C_5  \eta_n( 2^{-l-l^{\prime} } \tau  )
 \end{array}
\]
for some constant $C_5>0$. Therefore,
\begin{equation}
    \label{eqn:e9}
     \begin{array}{l}
     \displaystyle   \mathbb{P}\left( \underset{g\in\mathcal F \cap \mathcal{B}_n(2^{-l }\tau )}{\sup}  D(g ,\frac{\xi \circ \epsilon}{ s }) \ge C  \tau 2^{-l-3} \sqrt{ \eta_n( 2^{-l}a ) /n   }  \,\bigg|\, \{x_i\}_{i=1}^n, \|\epsilon\|_{\infty} \leq s\right) \\
 \displaystyle     \leq \,   \exp\left(- C_4 \tau^2 \eta_n(2^{-l}a) \right) \,+\,\sum_{l^{\prime}=1 }^{\infty} \exp\left( -  C_5  \eta_n( 2^{-l-l^{\prime}  } \tau  )\right).
 \end{array}
\end{equation}

The claim follows combining (\ref{eqn:e10}) with (\ref{eqn:e9}).

\end{proof}

\begin{lemma}
\label{lem5}
Suppose that $\mathcal{F}  \subset B_{\infty}(\tau) $  for $\tau \in (0,1)$. Let $a\in (0,\tau^2]$ and $s\geq 1$. Then letting $\xi$ be a vector whose entries are independent Radamacher variables, we have that
\[
\begin{array}{lll}
\displaystyle    \mathbb{P}\left( \underset{g \in \mathcal{F} }{\sup} \,\frac{D(g^2,\xi) }{ \|g^2\|_n \sqrt{ \eta_n( \tau^{-2}\|g^2\|_n a ) /n   }}   \geq Cs \,\bigg|\,\{x_i\}_{i=1}^n \right)    
    & \leq & \displaystyle4\sum_{l=0}^{\infty} \sum_{l^{\prime}=1 }^{\infty} \exp\left( -  C_1  s^2 \eta_n( 2^{-l-l^{\prime}  } \tau^2  )\right) \,+\,\\
     & &\displaystyle 4\sum_{l=0}^{\infty}    \exp\left(- C_2  s^2 \eta_n(2^{-l} a) \right) 
\end{array}
\]
for positive constants $C$, $C_1$ and $C_2$, 
provided that
\[
 \sum_{l^{\prime}=1 }^{\infty} \frac{ \eta_n(2^{-l-l^{\prime}}\tau   )  }{   2^{2 l^{\prime}   }  \eta_n( 2^{-l}a )  } \,\leq\, 1
\]
for all $l \in \mathbb{N}$.
\end{lemma}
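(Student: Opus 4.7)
The plan is to mimic closely the proof of Lemma \ref{lem3}, with two main adaptations: replace the noise $\epsilon$ by the bounded Rademacher vector $\xi$ (so the truncation step and the $\mathbb{P}(\|\epsilon\|_\infty > \mathcal{U}_n)$ term disappear), and work with the squared function class $\mathcal{F}^2 := \{g^2 : g \in \mathcal{F}\}$ in place of $\mathcal{F}$. First I would observe that since $\mathcal{F}\subset B_\infty(\tau)$ we have $\mathcal{F}^2 \subset B_{\infty}(\tau^2)$, and the Lipschitz identity $|g_1^2 - g_2^2| \leq 2\tau\,|g_1-g_2|$ implies $\log \mathcal{N}(\delta, \mathcal{F}^2, \|\cdot\|_n) \leq \eta_n(\delta/(2\tau))$, so the entropy of the squared class is controlled by $\eta_n$ evaluated at a scale smaller by a factor of $2\tau$. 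This is what explains the appearance of $\tau$ (rather than $\tau^2$) in the hypothesized summability condition.

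Next I would execute a second-moment plus symmetrization step, analogous to the opening of the proof of Lemma \ref{lem3}. For any fixed $g\in\mathcal{F}$, $D(g^2,\xi)$ has conditional variance $\|g^2\|_n^2/n$, and since $a\leq \tau^2$ we have $\tau^{-2}\|g^2\|_n a\leq a$, so Chebyshev's inequality gives $\mathbb{P}(D(g^2,\xi)/[\|g^2\|_n\sqrt{\eta_n(\tau^{-2}\|g^2\|_n a)/n}] \leq Cs/2 \mid \{x_i\})\geq 1/2$ as soon as $C^2 s^2 \eta_n(a)\geq 8$, which is automatic for $s\geq 1$ and $C$ large. Lemma 2.3.7 of \cite{van1996weak} then upgrades the supremum bound to a Rademacher-symmetrized one at the cost of a factor $4$; because $\xi$ is itself Rademacher and almost surely bounded by $1$, the two ancillary terms in Lemma \ref{lem3} — the truncation factor $\mathcal{U}_n$ and the tail $\mathbb{P}(\|\epsilon\|_\infty > s)$ — both vanish, which is exactly why the conclusion contains no $\mathcal{U}_n$ and no additive error term.

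The core step is a peeling plus chaining argument. I would decompose the supremum over $\mathcal{F}$ according to the dyadic level $2^{-l-1}\tau^2 \leq \|g^2\|_n \leq 2^{-l}\tau^2$ for $l\geq 0$. On the $l$-th shell, the problem reduces to bounding $\sup_{h\in \mathcal{F}^2\cap \mathcal{B}_n(2^{-l}\tau^2)} D(h,\xi)$ above a threshold of order $C\tau^2 2^{-l-3}\sqrt{\eta_n(2^{-l}a)/n}$. Using nets $\mathcal{C}_{l'}$ of $\mathcal{F}^2$ at scale $2^{-l-l'}\tau^2$ with $\log|\mathcal{C}_{l'}|\lesssim \eta_n(2^{-l-l'}\tau)$ (by the entropy control above), a generic chaining decomposition combined with the sub-Gaussian tail of $D(h,\xi)$ and weights $a_{l'}$ chosen exactly as in the proof of Lemma \ref{lem3} produces the two exponential sums appearing in the conclusion. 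The summability hypothesis $\sum_{l'}\eta_n(2^{-l-l'}\tau)/(2^{2l'}\eta_n(2^{-l}a))\leq 1$ is precisely what is needed to make the chosen $a_{l'}$ satisfy $\sum_{l'} a_{l'}^2$-type constraints so that the covering log-cardinalities are absorbed into the exponents with room to spare.

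The main obstacle will be bookkeeping rather than a deep technical point: carefully tracking the factor $2\tau$ between the covering numbers of $\mathcal{F}^2$ and $\mathcal{F}$, and making sure the $s^2$ factor appears inside the exponents. The $s^2$ will come from scaling the target threshold by $s$ inside the sub-Gaussian tail: a bound of the shape $\exp(-ct^2 n/\text{var})$ applied to $t$ proportional to $s$ times the base threshold yields the $s^2$ multiplier on $\eta_n(2^{-l-l'}\tau^2)$ and $\eta_n(2^{-l}a)$ automatically, provided the constants $C, C_1, C_2$ are absorbed appropriately.
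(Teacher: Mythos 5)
Your proposal is correct and follows essentially the same route as the paper's proof: reduce to the squared class $\{g^2 : g\in\mathcal{F}\}\subset B_\infty(\tau^2)$ whose entropy is controlled by $\eta_n$ at scale rescaled by the Lipschitz factor of order $\tau$, then peel over dyadic shells of $\|g^2\|_n$ and chain with Hoeffding-type sub-Gaussian tails for the Rademacher sums, choosing the weights exactly as in Lemma~\ref{lem3} so that the stated summability condition absorbs the covering log-cardinalities and the $s^2$ factor enters through the scaled threshold. Your observation that the truncation term and the $\mathbb{P}(\|\epsilon\|_\infty>\mathcal{U}_n)$ tail disappear because $\xi$ is bounded matches the paper, and your bookkeeping of the $2\tau$ factor is, if anything, slightly more careful than the paper's.
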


\begin{proof}
    First, notice that $g \in \mathcal{F}$ implies that $\|g^2\|_{\infty}\leq  \tau^2 $. Hence, $\mathcal{G} \,:=\, \{ g^2 \,:\, g\in \mathcal{F}\} \subset \mathcal{F} $ satisfies that $\mathcal{G}\subset B_{\infty}(\tau^2)$. Moreover, if $g,f \in \mathcal{G}$, then
    \[
   \begin{array}{lll}
          \| f-g \|_n &\,=\,& \displaystyle \sqrt{\frac{1}{n} \sum_{i=1}^n  (f(x_i) - g(x_i))^2 } \\
          &\,=\,&  \displaystyle  \sqrt{ \frac{1}{n} \sum_{i=1}^n  (\sqrt{f}(x_i) - \sqrt{g}(x_i))^2(\sqrt{f(x_i)}+ \sqrt{g(x_i)}  )^2 } \\
          &\,\leq \,& \displaystyle  \sqrt{ \frac{1}{n} \sum_{i=1}^n  (\sqrt{f}(x_i) - \sqrt{g}(x_i))^2( \tau + \tau  )^2 } \\
           & =& 2\tau  \| \sqrt{f} - \sqrt{g}\|_n.
    \end{array}
    \]
    Therefore, 
    \[
 \log \mathcal N \left(\delta,  \mathcal G,  \left\| \cdot\right\|_n \right) \,\leq \, \log \mathcal N \left(\delta/\tau,  \mathcal F,  \left\| \cdot\right\|_n \right). 
    \]
    Next, notice that proceeding as in Equation (\ref{eqn:e10}), with a peeling argument, we obtain that 

\begin{equation}
    \label{eqn:e14}
    \begin{array}{l}
   \displaystyle\mathbb{P}\left( \underset{g \in \mathcal{F} }{\sup} \,\frac{D(g^2,\xi) }{  \|g^2\|_n \sqrt{ \eta_n( \tau^{-2}\|g^2\|_na ) /n   }}   \geq s \,\bigg|\,\{x_i\}_{i=1}^n \right)    \\
           \leq \displaystyle\sum_{l=0}^{\infty}  4 \mathbb{P}\left( \underset{ g \in \mathcal{F}\, \,:\, \|g^2\|_n \leq 2^{-l} \tau^2 } {\sup} \,D(g,\xi )  \geq C s\tau^2 2^{-l-3} \sqrt{ \eta_n( 2^{-l}a ) /n   } \,\bigg|\,\{x_i\}_{i=1}^n \right). \\  
\end{array}
\end{equation}

Then, taking   $t  =  C_2  \cdot s (2^{-l}\tau^2) \cdot\sqrt{ \eta_n(2^{-l}a ) /n} $ for $C_2= C/8$, as in Equation (\ref{eqn:e6}), we obtain
\[
\begin{array}{l}
\displaystyle    \mathbb{P}\left( \underset{ g \in \mathcal{F}\, \,:\, \|g^2\|_n \leq 2^{-l} \tau^2 } {\sup} \,D(g,\xi )  \geq C s\tau 2^{-l-3} \sqrt{ \eta_n( 2^{-l}a ) /n   } \,\bigg|\,\{x_i\}_{i=1}^n \right) \\
 \displaystyle    \leq \exp \left(  \eta_n(2^{-l}\tau^2) \right) \exp\left(- \frac{ct^2n}{ (2^{-l} \tau^2 )^2} \right) \,+\, \sum_{l^{\prime}=1 }^{\infty}  \exp \left(2 \eta_n(2^{-l^{\prime} } 2^{-l}\tau^2 )\right)  \exp\left(-\frac{  C_3 b_{l^{\prime} }^2 t^2 2^{2l^{\prime} } n }{  (2^{-l}\tau^2 )^2  }\right)    \\
\end{array}
\]
for a constant $C_3>0$, and for arbitrary  $b_{l^{\prime}} \ge 0 $ and $\sum_{l^{\prime}=1}^\infty b_{l^{\prime} }\le 1 $. Then if $C_2>0$ is large enough, we obtain that 
    \[
  \exp \left(  \eta_n(2^{-l}\tau) \right) \exp\left(- \frac{ct^2n}{ (2^{-l} \tau^2 )^2} \right) \,\leq\, \exp\left( -C_4 s^2 \eta_n(2^{-l}a  ) \right)
    \]
    for a constant $C_4>0$.   
    
    Moreover,
    \[
     \exp \left(2 \eta_n(2^{-l^{\prime} } 2^{-l}\tau^2)\right)  \exp\left(-\frac{  C_3 b_{l^{\prime} }^2 t^2 2^{2l^{\prime} } n }{  (2^{-l}\tau^2 )^2  }\right)  \leq
  \displaystyle \exp\left(-\frac{  C_3 b_{l^{\prime} }^2 t^2 2^{2l^{\prime} } n }{ 2 (2^{-l}\tau^2 )^2  }\right) 
    \]
provided that 
    \[
b_{ l^{\prime} }\,=\, \sqrt{ C_4 \frac{ \eta_n(2^{-l-l^{\prime}}\tau^2   )  }{   2^{2 l^{\prime}   }  \eta_n( 2^{-l}a )  }  }  
\]
so that
\[
 \begin{array}{lll}
  \displaystyle    \frac{  C_3 b_{l^{\prime} }^2 t^2 2^{2l^{\prime} } n }{ 2 (2^{-l}\tau^2 )^2  }  &=        &  C_5 s^2  \eta_n( 2^{-l-l^{\prime} } \tau^2  )
 \end{array}
\]
for a constant $C_5>0$.  The claim then follows combining all the pieces. 

\end{proof}

\begin{definition}
    \label{def1}
    Given a function class $\mathcal{F}$ with $\mathcal{F} \subset   B_{\infty}(1)$, we call $\delta_n >0$ a critical radius for $\mathcal{F}$ if 
    \[
      \mathbb{E}\left( \underset{f \in \mathrm{star}( \mathcal{F})\,:\, \|f\|_n \leq \delta_n   } {\sup } \,  \frac{1}{n}\sum_{i=1}^n \xi_i f(x_i) \,\bigg|\, \{ x_i\}_{i=1}^n  \right) \,\leq\, \delta_n^2,
    \]
    where $\xi_1,\ldots,\xi_n$ are independent Radamacher variables independent of $\{x_i\}_{i=1}^n$ and where 
    $\mathrm{star}( \mathcal{F})$  is  defined as 
    \[
      \mathrm{star}( \mathcal{F}) \,:=\, \{  \lambda f \,:\, \lambda \in [0,1], f\in \mathcal{F}\}.
    \]
\end{definition}

\begin{lemma}
    \label{lem9}
    Let $\mathcal{F}$ be class of functions such that $f\in \mathcal{F} \subset B_{\infty}(1)$. Then, if $\delta_n>0$ is a critical radius for $\mathcal{F}$, for all $t \geq \delta_n$ it holds that 
    \[
     \mathbb{P}\left(   \underset{ f \in \mathcal{F}  }{\sup} \,\frac{ \vert   \|f\|_n^2 - \|f\|_{ \lt }^2  \vert }{  \frac{1}{2} \|f\|_{ \lt } + \frac{t^2}{2}  }  \geq 1\right) \,\leq\, c_2 \exp\left( - c_3  t^2 n  \right)
    \]
    for positive constants $c_2$ and $c_3$. Moreover, $\delta_n$ is a critical radius if 
       \begin{equation}
    \label{eqn:crit0}
  \frac{12}{\sqrt{n}}\int_{ \delta_n^2/48 }^{\delta_n}    \sqrt{ \log \mathcal N \left(t/2,  \mathcal{F} ,\left\| \cdot\right\|_n \right)   }    dt
 \,+\,   \frac{12 \delta_n }{\sqrt{n}}\sqrt{ c_1 \log( 48/\delta_n^2 )}    \,\leq\, \frac{3\delta_n^2}{4},
    \end{equation}
    or if 
    \begin{equation}
    \label{eqn:crit}
    \frac{12 \delta_n }{\sqrt{n}} \sqrt{    \log \mathcal N \left(\delta_n^2/96,   \mathcal{F} ,\left\| \cdot\right\|_n \right) } \,+\, \frac{12 \delta_n}{\sqrt{n} } \sqrt{ c_1 \log( 48/\delta_n^2 )  } \,\leq\, \frac{3 \delta_n^2}{4}.
    \end{equation}
        In particular, there exists $c_1 >0$ constant such that
    \[
    \delta\,\geq\, c_1 \left[ \sqrt{ \frac{ \log n}{n} }  \,+\,  \sqrt{   \frac{ \log \mathcal N \left(1/(24n),   \mathcal{F} ,\left\| \cdot\right\|_n \right) }{n}} \right] 
    \]
   is a critical radius for $\mathcal{F}$.
    
\end{lemma}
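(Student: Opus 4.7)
The plan is to prove the three parts of the lemma via standard tools from empirical process theory, following the template of localized Rademacher complexities (e.g., the approach in Bartlett--Bousquet--Mendelson and Wainwright's treatment in his 2019 book).

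For the first claim, the concentration inequality, I would use a peeling argument over shells $\mathcal{F}_k \,=\, \{f \in \mathcal{F} : 2^{k-1}\delta_n \leq \|f\|_{\mathcal{L}_2} \leq 2^k \delta_n\}$ together with the star-shaped property of $\mathrm{star}(\mathcal{F})$. On each shell, I would normalize $f$ to rescale it into $\{g \in \mathrm{star}(\mathcal{F}) : \|g\|_n \lesssim \delta_n\}$, where the critical-radius bound applies. Combining symmetrization with Talagrand's concentration inequality for empirical processes controlled by $\delta_n$ (for the mean) and by the uniform bound $\|f\|_\infty \leq 1$ (for the variance term), I obtain a Bernstein-type tail inequality. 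Summing over $k$ via a union bound gives the ratio-form inequality with a $\|f\|_{\mathcal{L}_2} + t^2$ denominator and a $\exp(-c_3 t^2 n)$ tail.

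For the second claim, I would verify the Rademacher complexity inequality in Definition \ref{def1} directly by Dudley's entropy integral applied to the star hull intersected with the empirical ball of radius $\delta_n$. Because the class is uniformly bounded by $1$ and star-shaped, the chaining can be carried out in the range $[\delta_n^2/48, \delta_n]$, yielding the integral term in \eqref{eqn:crit0}, with a residual controlled by a Bernstein-type bound on the smallest scale — this is where the $\sqrt{c_1 \log(48/\delta_n^2)}$ term appears, from a union bound over Rademacher sums of bounded functions at the truncation scale. The alternative condition \eqref{eqn:crit} comes from replacing the Dudley integral by its trivial (constant $\eta$) upper bound. For the final specialization, I plug $\delta_n^2 \asymp \log n / n$ into \eqref{eqn:crit}: the term $\sqrt{\log(48/\delta_n^2)} \asymp \sqrt{\log n}$ matches the first summand, and if $\delta_n^2/96 \geq 1/(24n)$ (which holds once $\delta_n \geq c_1\sqrt{\log n/n}$), then $\mathcal{N}(\delta_n^2/96, \mathcal{F}, \|\cdot\|_n) \leq \mathcal{N}(1/(24n), \mathcal{F}, \|\cdot\|_n)$, producing the stated explicit radius.

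The main obstacle will be carefully handling the star-hull rescaling in the concentration step: one has to ensure that when $f \in \mathcal{F}$ lies in a shell far from the origin, the rescaled function $(\delta_n/\|f\|_{\mathcal{L}_2}) f$ still lies in $\mathrm{star}(\mathcal{F})$ and that its empirical norm is comparable to $\delta_n$ with high probability. This requires iterating the concentration bound (a self-bounding argument): one first shows $\|f\|_n$ and $\|f\|_{\mathcal{L}_2}$ are equivalent uniformly up to the critical radius scale, then promotes this to the full ratio-form bound. A secondary subtlety is that the $\log(48/\delta_n^2)$ correction in \eqref{eqn:crit0}--\eqref{eqn:crit} must be tracked through the small-ball truncation — handled by a covering at scale $\delta_n^2/48$ and a Bernstein bound on $n^{-1}\sum \xi_i f(x_i)$ for $\|f\|_n \leq \delta_n^2/48$.
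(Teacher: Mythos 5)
Your plan is correct, and for the critical-radius verification (conditions \eqref{eqn:crit0}, \eqref{eqn:crit} and the explicit radius) it is essentially the paper's argument: bound the covering number of $\mathrm{star}(\mathcal{F})$, apply Dudley's entropy integral to $\{f\in\mathrm{star}(\mathcal{F}):\|f\|_n\le\delta_n\}$ with lower truncation at $\delta_n^2/48$, replace the integral by its single-scale bound to get \eqref{eqn:crit}, and then use monotonicity of the covering number together with $\delta_n^2\gtrsim \log n/n$ to land on the scale $1/(24n)$. The main difference is in the first claim: the paper does not re-prove the localized ratio-type concentration at all — after observing $\sup_{f\in\mathcal{F}}\le\sup_{f\in\mathrm{star}(\mathcal{F})}$ it simply invokes Theorem 14.1 and Proposition 14.25 of \cite{wainwright2019high}, whereas you propose to rebuild that result from scratch via shell peeling, symmetrization, and Talagrand's inequality, including the self-bounding step needed to transfer between $\|\cdot\|_n$ and $\|\cdot\|_{\mathcal{L}_2}$ localization. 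That route is exactly how the cited theorem is proved, so it works and is more self-contained, at the cost of considerably more bookkeeping than the citation the paper uses. One small misattribution worth fixing in a write-up: the $\sqrt{c_1\log(48/\delta_n^2)}$ term in \eqref{eqn:crit0}--\eqref{eqn:crit} does not come from a Bernstein or union bound at the truncation scale; it is the integrated contribution of the extra $c_1\log(1/t)$ entropy overhead incurred when covering the star hull (i.e., covering the scalar multiplier $\lambda\in[0,1]$), so you need the star-hull covering bound $\log\mathcal{N}(t,\mathrm{star}(\mathcal{F}),\|\cdot\|_n)\le\log\mathcal{N}(t/2,\mathcal{F},\|\cdot\|_n)+c_1\log(1/t)$ explicitly before chaining. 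This does not affect the validity of your plan.
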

\begin{proof}
    Let  $\delta >0$ and let 
    $f_1,\ldots, f_N $ be a $\delta/2$-net of $\mathcal{F}$ and $a_1,\ldots, a_M$ be a $\delta/(2\tau)$-net of $[0,1]$. Let $g \in  \mathrm{star}( \mathcal{F}) $. Then there exists $f \in \mathcal{F}$ and $\lambda \in [0,1]$ such that  $g = \lambda f$. Let $i\in \{1,\ldots, N\}$ and $j \in \{1,\ldots,M\}$ be such that  
    \[
       \| f - f_i\|_n \,\leq \, \frac{\delta}{2},\,\,\,\,\text{and}\,\,\,\,\vert \lambda - \lambda_j \vert  \,\leq\, \frac{\delta}{2 \tau}.
    \]
    Then,
    \[
     \| g - \lambda_j f_i\|_n \, =\, \| \lambda f  - \lambda_j f_i\|_n \leq  \| \lambda f  - \lambda f_i\|_n  \,+\,  \| \lambda f_i  - \lambda_j f_i\|_n \,\leq\,\| f  - f_i\|_n \,+\,   \vert \lambda -\lambda_j\vert \cdot\|   f_i\|_n.
    \]
    Hence,
    \[
        \| g - \lambda_j f_i\|_n \,\leq\, \frac{\delta}{2} \,+\, \vert \lambda -\lambda_j\vert \cdot\|   f_i\|_{\infty} \,\leq\, \frac{\delta}{2} \,+\, \vert \lambda -\lambda_j\vert \cdot\tau \,\leq\, \delta.
    \]
    Therefore, 
    \[
    \mathcal N \left(\delta, \mathrm{star}( \mathcal{F}), \left\| \cdot\right\|_n \right) \,\leq\,    \mathcal N \left(\delta/2,  \mathcal{F}, \left\| \cdot\right\|_n \right) \cdot    \mathcal N \left(\delta,  [0,1], \vert  \cdot\vert \right)
    \]
    and so 
    \begin{equation}
        \label{eqn:entropy}
        \log  \mathcal N \left(\delta,  \mathrm{star}( \mathcal{F}) ,\left\| \cdot\right\|_n \right) \,\leq\,  \log  \mathcal N \left(\delta/2,\mathcal{F}, \left\| \cdot\right\|_n \right)\,+\, c_1\log(\tau /\delta )
    \end{equation}
    for a positive constant $c_1$.

Next notice that by Dudley's inequality, for $\xi_1,\ldots,\xi_n$ independent Rademacher random variables, we have that 
\[
\arraycolsep=1.4pt\def\arraystretch{1.6}
     \begin{array}{lll}
       \displaystyle    \mathbb{E}\left( \underset{f \in \mathrm{star}( \mathcal{F})\,:\, \|f\|_n \leq \delta   } {\sup } \,  \frac{1}{n}\sum_{i=1}^n \xi_i f(x_i) \,|\, \{ x_i\}_{i=1}^n  \right) & \leq&  \displaystyle  12\underset{ 0<\alpha< \delta }{\inf}\left[  \alpha +  \frac{1}{\sqrt{n} }  \int_{\alpha}^{ \delta } \sqrt{ \log \mathcal N \left(t,  \mathrm{star}( \mathcal{F}) ,\left\| \cdot\right\|_n \right)   }    dt \right]\\
        & \leq&  \displaystyle  12\underset{ 0<\alpha< \delta }{\inf}\bigg[  \alpha +  \frac{1}{\sqrt{n} }  \int_{\alpha}^{ \delta } \sqrt{ \log \mathcal N \left(t/2,   \mathcal{F} ,\left\| \cdot\right\|_n \right)   }    dt \,+\,\\
         & &\displaystyle \,\,\,\,\,\,\,\,\,\,\,\,\,\,\,\,\,\,\,\,\,\,\,\,\,\,\,\frac{1}{\sqrt{n}}\int_{\alpha}^{ \delta}  \sqrt{ c_1 \log(1/t) }  dt \bigg]\\
     \end{array}
\]
where the second inequality follows from (\ref{eqn:entropy}). Hence, by taking $\alpha = \delta^2/48$, we obtain that 
\begin{equation}
    \label{eqn:e30}
    \arraycolsep=1.4pt\def\arraystretch{1.6}
      \begin{array}{lll}
       \displaystyle    \mathbb{E}\left( \underset{f \in \mathrm{star}( \mathcal{F})\,:\, \|f\|_n \leq \delta   } {\sup } \,  \frac{1}{n}\sum_{i=1}^n \xi_i f(x_i) \,\bigg|\, \{ x_i\}_{i=1}^n  \right) & \leq&   \displaystyle\frac{\delta^2}{4} \,+\, \frac{12}{\sqrt{n}}\int_{ \delta^2/48 }^{\delta}    \sqrt{ \log \mathcal N \left(t/2,  \mathcal{F} ,\left\| \cdot\right\|_n \right)   }    dt
 \,+\,  \\
  & &\displaystyle\frac{12 \delta }{\sqrt{n}}\sqrt{ c_1 \log( 48/\delta^2 )}\\
        &\leq  &\displaystyle  \frac{\delta^2}{4} \,+\,  \frac{12 \delta }{\sqrt{n}} \sqrt{    \log \mathcal N \left(\delta^2/96,   \mathcal{F} ,\left\| \cdot\right\|_n \right) } \,+\, \frac{12 \delta}{\sqrt{n} } \sqrt{ c_1 \log( 48/\delta^2 )  }. 
     \end{array}
\end{equation}
Therefore,  if $\delta_n$ satisfies (\ref{eqn:crit0}) or (\ref{eqn:crit}), we obtain that 
\[
 \mathbb{E}\left( \underset{f \in \mathrm{star}( \mathcal{F})\,:\, \|f\|_n \leq \delta_n   } {\sup } \,  \frac{1}{n}\sum_{i=1}^n \xi_i f(x_i) \,\bigg|\, \{ x_i\}_{i=1}^n  \right) \,\leq\, \delta_n^2,
\]
and so $\delta_n$ is a critical radius for $\mathcal{F}$.
Moreover, if 
\begin{equation}
    \label{eqn:cond}
    \delta/2 \geq \sqrt{ \frac{C \log n}{n} }  \,+\,  \frac{24 }{\sqrt{n}} \sqrt{    \log \mathcal N \left(1/(24n),   \mathcal{F} ,\left\| \cdot\right\|_n \right) } 
\end{equation}
for a large enough constant $C>0$, then $\delta^2/4 \geq \frac{1}{n}$ and so 
 \begin{equation}
     \label{eqn:e31}
      \frac{12 \delta }{\sqrt{n}} \sqrt{    \log \mathcal N \left(\delta^2/96,   \mathcal{F} ,\left\| \cdot\right\|_n \right) } \,\leq\,      \frac{12 \delta }{\sqrt{n}} \sqrt{    \log \mathcal N \left(1/(24n),   \mathcal{F} ,\left\| \cdot\right\|_n \right) }\,\leq \,\frac{\delta^2}{4}.
 \end{equation}
     Furthermore,  if (\ref{eqn:cond}) holds then 
     \begin{equation}
         \label{eqn:e32}
\arraycolsep=1.4pt\def\arraystretch{1.6}
          \begin{array}{lll}
    \displaystyle      \frac{12 \delta}{\sqrt{n} } \sqrt{ c_1 \log( 48/\delta^2 )  } &\leq& \displaystyle \frac{12 \delta}{\sqrt{n} } \sqrt{ c_1 \log( 48n/(4C \log n)))  } \\
         &\leq&\displaystyle    \frac{12 \delta}{\sqrt{n} } \sqrt{ c_1 \log(12 n )  } \\
         &\leq&  \displaystyle  \frac{\delta^2}{4}\\      
    \end{array}
     \end{equation}
     where the last inequality holds if $C$  is large enough.  Therefore, combining (\ref{eqn:e30}), (\ref{eqn:e31}) and (\ref{eqn:e32}), we obtain that 
     \[
     \mathbb{E}\left( \underset{f \in \mathrm{star}( \mathcal{F})\,:\, \|f\|_n \leq \delta   } {\sup } \,  \frac{1}{n}\sum_{i=1}^n \xi_i f(x_i) \,\bigg|\, \{ x_i\}_{i=1}^n  \right)  \,\leq\, \delta^2.
     \]
     Finally, for any ciritical radius $\delta>0$, from Theorem 14.1 and Proposition 14.25 in \cite{wainwright2019high},
     \[
     \arraycolsep=1.4pt\def\arraystretch{1.6}
      \begin{array}{lll}
        \mathbb{P}\left(  \underset{f \in  \mathcal{F}  } {\sup } \,   \frac{ \vert \|f\|_{n}^2 - \|f\|_{\lt}^2 \vert }{ \frac{1}{2}\|f\|_{\lt}^2   + \frac{t^2}{2}  } \,\geq \, 1    \right )    & \leq  &\mathbb{P}\left(  \underset{f \in \mathrm{star}( \mathcal{F})   } {\sup } \,   \frac{ \vert \|f\|_{n}^2 - \|f\|_{\lt}^2 \vert }{ \frac{1}{2}\|f\|_{\lt}^2   + \frac{t^2}{2}  } \,\geq \, 1    \right )\\
         & \leq& c_2 \exp\left( - c_3  t^2 n  \right)
       \end{array}
     \]
     for positive constants $c_2$ and $c_3$ and any $t$ satisfying $t\geq \delta$.
\end{proof}

\section{Proof of Theorem \ref{thm1_v2} }
\label{proof_mean_upper}




\begin{theorem}
    \label{thm1}
Suppose that   $\bar{f} \in \mathcal{F}$ is such that 
 \[
  \| \bar{f} -f^*\|_{\infty} \,\leq\,\sqrt{\phi_n},
 \]
 so that $\phi_n$ is the approximating error. Suppose that $\mathcal{A}_n$ is chosen to satisfy
 \[
 \mathcal{A}_n \geq 8\max\{  \| f^* \|_{\infty} +  \mathcal{U}_n , 8\|f^*\|_{\infty} ,8\sqrt{\phi_n}\}.
 \]
 Moreover, let $\mathcal{F}_{\mathcal{A}_n}   \,:=\, \{ f_{\mathcal{A}_n}/\mathcal{A}_n \,:\,  f \in \mathcal{F}\}$ and  assume that
 \[
 \log \mathcal N \left( \delta,  \mathcal{F}_{\mathcal{A}_n},  \left\| \cdot\right\|_n \right)\,\leq\, \eta_n(\delta)
 \]
 for some decreasing function $\eta_n \,:\, (0,1) \rightarrow \mathbb{R}_{\geq 0}$.  If 
\begin{equation}
    \label{eqn:entropy_0}
    \underset{n
 \rightarrow \infty}{\lim} \,\left[  \sum_{l=0}^{\infty} \sum_{l^{\prime}=1 }^{\infty} \exp\left( -  C_1  \eta_n( 2^{-l-l^{\prime} -1}  )\right) \,+\, \sum_{l=0}^{\infty}    \exp\left(- C_2 \eta_n(2^{-l-1}) \right) \,+\,\mathbb{P}(\|\epsilon\|_{\infty} > \mathcal{U}_n)    \right]\,=\,0,
\end{equation}
for some constants $C_1,C_2>0$ and
\begin{equation}
    \label{eqn:cond_sum}
   \underset{l  \in \mathbb{N} }{\sup}  \sum_{l^{\prime}=1 }^{\infty} \frac{ \eta_n(2^{-l-l^{\prime}}   )  }{   2^{2 l^{\prime}   }  \eta_n( 2^{-l} )  } \,\leq\, 1,
\end{equation}
then
\begin{equation}
    \label{eqn:claim1}
    \| f^*-  \hat{f}_{ \mathcal{A}_n } \|_n^2    \,=\, o_{ \mathbb{P} }\left(   \phi_n\,+\, \frac{ \mathcal{U}_n^2 \eta_n( \delta_n )    }{n}  \,+\, \mathcal{A}_n \delta_n^2   \right), \\
\end{equation}
where $\delta_n$ is a critical radius of $\mathcal{F}_{\mathcal{A}_n}$.
Moreover, 
\begin{equation}
    \label{eqn:claim2}
    \| f^*-  \hat{f}_{ \mathcal{A}_n } \|_{\lt}^2    \,=\, o_{ \mathbb{P} }\left( \phi_n\,+\, \frac{ \mathcal{U}_n^2 \eta_n( \delta_n )  }{n}   \,+\, \mathcal{A}_n \delta_n^2  \right). \\
\end{equation}
\end{theorem}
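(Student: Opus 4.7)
The plan is to follow the classical two-step template for least-squares M-estimators: a basic inequality from the optimality of $\hat f$, followed by a localized empirical-process bound for the noise cross-term, and finally a transfer from the empirical norm $\|\cdot\|_n$ to $\|\cdot\|_{\mathcal L_2}$.

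\textbf{Step 1: Basic inequality after clipping.} Under $\mathcal A_n \geq 8(\|f^*\|_\infty + \mathcal U_n)$, on the event $\Omega_1 := \{\|\epsilon\|_\infty \leq \mathcal U_n\}$ every $|y_i|\leq \mathcal A_n/2$, so replacing $\hat f$ with $\hat f_{\mathcal A_n}$ only decreases the empirical squared loss, and by Lemma \ref{lem1} $\bar f_{\mathcal A_n}=\bar f$. The optimality of $\hat f$ together with $y_i=f^*(x_i)+\epsilon_i$ then gives, after rearrangement,
\[
\|\hat f_{\mathcal A_n}-f^*\|_n^2 \;\le\; \|\bar f-f^*\|_n^2 \;+\; \frac{2}{n}\sum_{i=1}^n \epsilon_i\bigl(\hat f_{\mathcal A_n}(x_i)-\bar f(x_i)\bigr).
\]
The first term is at most $\phi_n$ by the approximation hypothesis, and by \eqref{eqn:entropy_0_v2} we may condition on $\Omega_1$ at the cost of a $o(1)$ probability.

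\textbf{Step 2: Localized empirical process bound.} Rescale by $\mathcal A_n$ so that $g:=(\hat f_{\mathcal A_n}-\bar f)/\mathcal A_n$ lives in the difference class of $\mathcal F_{\mathcal A_n}\subset B_\infty(1)$, whose entropy is controlled by $\eta_n$. Conditioning on $\Omega_1$ and symmetrizing as in Lemmas \ref{lem3} and \ref{lem6}, I replace $\epsilon$ by the bounded proxy $\xi\circ\epsilon/\mathcal U_n$ (which is sub-Gaussian with a constant parameter), and then run a dyadic peeling argument over the shells $\{g:2^{-l-1}\le \|g\|_n\le 2^{-l}\}$. The summability hypotheses \eqref{eqn:entropy_0_v2} and \eqref{eqn:cond_sum_v2} are precisely what is needed to keep the union bound over shells convergent; this yields, in probability, a ratio-type bound
\[
|D_n(g,\epsilon)| \;\lesssim\; \mathcal U_n\,\|g\|_n\sqrt{\eta_n(\|g\|_n)/n}.
\]
I then split into two cases: if $\|g\|_n\le \delta_n$, the critical radius property applied to $\mathrm{star}(\mathcal F_{\mathcal A_n})$ gives an outright bound of order $\delta_n^2$; if $\|g\|_n>\delta_n$, star-convexity and the critical-radius fixed-point equation upgrade this to a linear-in-$\|g\|_n$ bound. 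Unscaling by $\mathcal A_n$ and applying AM--GM to absorb the quadratic piece back into the left-hand side produces
\[
\tfrac{2}{n}\sum_{i=1}^n \epsilon_i(\hat f_{\mathcal A_n}-\bar f)(x_i) \;=\; \tfrac{1}{2}\|\hat f_{\mathcal A_n}-\bar f\|_n^2 \;+\; o_{\mathbb P}\!\Bigl(\tfrac{\mathcal U_n^2\,\eta_n(\delta_n)}{n}+\mathcal A_n\,\delta_n^2\Bigr),
\]
and the triangle inequality combined with $\|\bar f-f^*\|_n^2\le \phi_n$ proves the empirical-norm half of \eqref{eqn:claim1_v2}.

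\textbf{Step 3: Transfer to $\mathcal L_2$.} I would invoke Lemma \ref{lem9} with the class $\mathcal F_{\mathcal A_n}-\mathcal F_{\mathcal A_n}$: since $\delta_n$ is a critical radius for its star-hull, with probability $1-o(1)$ one has $\|h\|_{\mathcal L_2}^2 \le 2\|h\|_n^2 + \delta_n^2$ uniformly over this difference class. Applying this to $h=(\hat f_{\mathcal A_n}-f^*)/\mathcal A_n$ and unscaling by $\mathcal A_n^2$ converts the empirical-norm conclusion into the same $\mathcal L_2$ rate (the extra $\mathcal A_n^2\delta_n^2$ term is dominated by $\mathcal A_n\delta_n^2$ once one tracks that Lemma \ref{lem9} really yields an additive error of $\mathcal A_n\delta_n^2$ after the appropriate normalization).

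\textbf{Main obstacle.} The delicate step is the peeling in Step 2: I must control the noise simultaneously over all shells, keep the dependence on $\mathcal U_n$ explicit (since $\epsilon$ is only bounded on an event rather than almost surely), and verify that \eqref{eqn:cond_sum_v2} is exactly the condition needed to make the sum of shell-level Gaussian tail bounds remain summable. Moreover, obtaining the rate in the clean split form $\mathcal U_n^2\eta_n(\delta_n)/n+\mathcal A_n\delta_n^2$, rather than a coarser single expression, requires the localized two-case decomposition against $\delta_n$ and careful bookkeeping of the rescaling factors of $\mathcal A_n$ when returning from $\mathcal F_{\mathcal A_n}$ to $\mathcal F$.
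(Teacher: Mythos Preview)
Your proposal is correct and follows the same route as the paper: basic inequality after clipping on $\Omega_1$, the ratio-type peeling bound of Lemma~\ref{lem3} on the rescaled difference class with a two-case split against $\delta_n$, and Lemma~\ref{lem9} for the empirical-to-$\mathcal L_2$ transfer. The only slip is in Step~3, where you apply Lemma~\ref{lem9} to $(\hat f_{\mathcal A_n}-f^*)/\mathcal A_n$ rather than to $(\hat f_{\mathcal A_n}-\bar f_{\mathcal A_n})/\mathcal A_n$; since $f^*\notin\mathcal F$ the former need not lie in the difference class, but the fix via the triangle inequality with $\|\bar f-f^*\|_\infty\le\sqrt{\phi_n}$ is immediate and is exactly what the paper does.
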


\begin{proof}
    First, notice that by Lemma \ref{lem1}, we have that $\bar{f} = \bar{f}_{ \mathcal{A}_n}$. Then,
    \[
   \| \hat{f}_{ \mathcal{A}_n  }  - f^* \|_{ \lt }^2\,\leq \, 2\| \hat{f}_{ \mathcal{A}_n  }  - \bar{f}_{ \mathcal{A}n } \|_{ \lt }^2\,+\,2\|  \bar{f}_{ \mathcal{A}n }-f^* \|_{ \lt }^2\,\leq \, 2\| \hat{f}_{ \mathcal{A}_n  }  - \bar{f}_{ \mathcal{A}n } \|_{ \lt }^2\,+\, \phi_n
    \]
    and similarly,
      \[
   \| \hat{f}_{ \mathcal{A}_n  }  - f^* \|_{ n}^2\\,\leq \, 2\| \hat{f}_{ \mathcal{A}_n  }  - \bar{f}_{ \mathcal{A}n } \|_{ n }^2\,+\, \phi_n.
    \]
    Next, suppose that the event
    \[
    \Omega_1\,:=\, \{ \|\epsilon\|_{\infty} \leq \mathcal{U}_n \},
    \]
    holds. Then
    \[
    \|y\|_{\infty} \,\leq\, \|f^*\|_{\infty} +  \|\epsilon\|_{\infty} \,\leq\,  \|f^*\|_{\infty} +  \mathcal{U}_n\,\leq\,  \mathcal{A}_n.
    \]
    Hence,
    \[
      \sum_{i=1}^n  (y_i - \hat{f}_{ \mathcal{A}_n  }(x_i))^2\,\leq\,  \sum_{i=1}^n  (y_i - \hat{f}(x_i))^2
     \]
     and by the basic inequality, 
     \[
   \sum_{i=1}^n  (y_i - \hat{f}(x_i))^2   \,\leq\,  \sum_{i=1}^n  (y_i - \bar{f}(x_i))^2 \,=\,  \sum_{i=1}^n  (y_i - \bar{f}_{ \mathcal{A}_n  } (x_i))^2
     \]
     Therefore,
     \[
      \sum_{i=1}^n  (y_i - \hat{f}_{ \mathcal{A}_n  }(x_i))^2\,\leq\, \sum_{i=1}^n  (y_i - \bar{f}_{ \mathcal{A}_n  } (x_i))^2
     \]
     which implies
      \begin{equation}
          \label{eqn:e25}
             \begin{array}{lll}
   \displaystyle      \frac{1}{2} \| \bar{f}_{ \mathcal{A}_n }-  \hat{f}_{ \mathcal{A}_n } \|_n^2 &       \leq     & \displaystyle \frac{1}{n} \sum_{i=1}^n ( \hat{f}_{  \mathcal{A}_n }(x_i)  -  \bar{f}_{  \mathcal{A}_n }(x_i)   )(f^*(x_i) + \epsilon_i   -  \bar{f}_{ \mathcal{A}_n }(x_i) )  \\
    & \leq& \displaystyle \| \hat{f}_{\mathcal{A}_n } -\bar{f}_{ \mathcal{A}_n } \|_n \cdot \sqrt{\phi_n}\,+\,\frac{1}{n} \sum_{i=1}^n ( \hat{f}_{  \mathcal{A}_n }(x_i)  -  \bar{f}_{  \mathcal{A}_n }(x_i)   )\epsilon_i \\
     & \leq& \displaystyle  \frac{1}{32}\| \hat{f}_{\mathcal{A}_n } -\bar{f}_{ \mathcal{A}_n } \|_n^2   \,+\, 8\phi_n\,+\, \frac{ 2\mathcal{A}_n   }{n} \sum_{i=1}^n   \frac{( \hat{f}_{  \mathcal{A}_n }(x_i)  -  \bar{f}_{  \mathcal{A}_n }(x_i)   )}{2\mathcal{A}_n }\epsilon_i, 
     \end{array}
      \end{equation}
     where the second inequality follows from Cauchy–Schwarz inequality. Next, let
     \[
        \mathcal{H}\,:=\,  \{  ( f_{ \mathcal{A}_n }  - g_{\mathcal{A}_n})/(2\mathcal{A}_n ) \,:\, f,g \in \mathcal{F}  \}. 
     \]
     Then $f\in \mathcal{H}$ implies $\|f\|_{\infty} \leq 1$ and 
     \[
        \log \mathcal N \left(\delta,  \mathcal H,  \left\| \cdot\right\|_n \right)\,\leq\,   2\log \mathcal N \left( \delta/2,  \mathcal{F}_{A_n},  \left\| \cdot\right\|_n \right) \,\leq\,  2\eta_n( \delta/2 ).
     \]
Hence, by Lemma \ref{lem3}, 
\begin{equation}
    \label{eqn:e24}
    \begin{array}{lll}
\displaystyle   \frac{ 2\mathcal{A}_n   }{n} \sum_{i=1}^n   \frac{( \hat{f}_{  \mathcal{A}_n }(x_i)  -  \bar{f}_{  \mathcal{A}_n }(x_i)   )}{2\mathcal{A}_n }\epsilon_i &\leq&  2 C \mathcal{A}_n \mathcal{U}_n  \| (\hat{f}_{\mathcal{A}_n} -  \bar{f}_{  \mathcal{A}_n }   )/(2\mathcal{A}_n) \|_n \cdot\\
&&\sqrt{ \eta_n( \| (\hat{f}_{\mathcal{A}_n}  -  \bar{f}_{  \mathcal{A}_n }   )/(4\mathcal{A}_n)\|_n  ) /n   }\\
 & \leq&  C \mathcal{U}_n\| \hat{f}_{\mathcal{A}_n} -  \bar{f}_{  \mathcal{A}_n }  \|_n \cdot\sqrt{ \eta_n( \| (\hat{f}_{\mathcal{A}_n}  -  \bar{f}_{  \mathcal{A}_n }   )/(4\mathcal{A}_n)\|_n  ) /n   }\\
\end{array}
\end{equation}
with probability at least
\[
 1\,-\, 4\sum_{l=0}^{\infty} \sum_{l^{\prime}=1 }^{\infty} \exp\left( -  C_1  \eta_n( 2^{-l-l^{\prime}-1  }  )\right) \,-\, 4\sum_{l=0}^{\infty}    \exp\left(- C_2 \eta_n(2^{-l-1}) \right) \,-\,\mathbb{P}(\|\epsilon\|_{\infty} > \mathcal{U}_n ).
\]
Let $\Omega_2$ be the event such that (\ref{eqn:e24}) holds. From now on suppose that $\Omega_2$ holds. Then, if 
\[
\| (\hat{f}_{\mathcal{A}_n}  -  \bar{f}_{  \mathcal{A}_n }   )/(4\mathcal{A}_n)\|_n \,\leq \, \delta_n
\]
we obtain 
\[
\| (\hat{f}_{\mathcal{A}_n} -  \bar{f}_{  \mathcal{A}_n }   )\|_n^2 \,\leq \, 16\mathcal{A}_n^2 \delta_n^2
\]
Suppose now that 
\[
\| (\hat{f}_{\mathcal{A}_n}  -  \bar{f}_{  \mathcal{A}_n }   )/(4\mathcal{A}_n)\|_n \,>\, \delta_n.
\]
Then (\ref{eqn:e25}) and (\ref{eqn:e24}) imply
\begin{equation}
    \label{eqn:e26}
            \begin{array}{lll}
   \displaystyle      \frac{15}{32} \| \bar{f}_{ \mathcal{A}_n }-  \hat{f}_{ \mathcal{A}_n } \|_n^2    & \leq& \displaystyle  8\phi_n\,+\,  C \mathcal{U}_n\| \hat{f}_{\mathcal{A}_n}  -  \bar{f}_{  \mathcal{A}_n }   \|_n \cdot\sqrt{ \eta_n( \delta_n ) /n   }\\
   \displaystyle &\le& \displaystyle  8\phi_n\,+\,  \frac{1}{32}\| \hat{f}_{\mathcal{A}_n}  -  \bar{f}_{  \mathcal{A}_n }   \|_n^2\,+\, \frac{ 8 C^2 \mathcal{U}_n^2 \eta_n( \delta_n) }{n}\\
     \end{array}
\end{equation}
and so 
\begin{equation}
    \label{eqn:e27}
            \begin{array}{lll}
   \displaystyle      \frac{14}{32} \| \bar{f}_{ \mathcal{A}_n }-  \hat{f}_{ \mathcal{A}_n } \|_n^2   
   \displaystyle &\le& \displaystyle  8\phi_n\,+\, \frac{ 8 C^2 \mathcal{U}_n^2 \eta_n( \delta_n) }{n}.\\
     \end{array}
\end{equation}
The claim in (\ref{eqn:claim1}) follows.

Next,  let 
\[
    \mathcal{H} \,:=\,   \{ f-g \,:\,  f,g \in \mathcal{F}_{\mathcal{A}_n} \}.
\]
Then for any $\delta>0$, 
\[
\log \mathcal N \left(\delta,   \mathcal{H} ,\left\| \cdot\right\|_n \right)  \,\leq\,  2\log \mathcal N \left(\delta/2,   \mathcal{F}_{\mathcal{A}_n} ,\left\| \cdot\right\|_n \right). 
\]
Hence,  Lemma \ref{lem9} implies that the event 
\[
\Omega_3\,:=\, \left\{    \sup_{ g \in \mathcal \mathcal{H}  }\left|  \frac{ \vert \|g_{ \mathcal{A}_n } /\mathcal{A}_n \|_n^2  - \|g_{ \mathcal{A}_n }/\mathcal{A}_n \|_\lt ^2 \vert  }{ \frac{1}{2} \|g_{ \mathcal{A}_n }/\mathcal{A}_n\|_\lt ^2 + \frac{\delta_n^2}{2}   } \right|   \leq 1    \right\}
\]
for constants $c_1,c_2,c_3>0$ holds   with probability at least
\[
1  \,-\,   c_2 \exp( -c_3c_1^2\log n  ).
\]
Hence, if $\Omega_3$ holds, then 
\begin{equation}
    \label{eqn:e28}
    \begin{array}{lll}
\displaystyle     \| \bar{f}_{ \mathcal{A}_n }-  \hat{f}_{ \mathcal{A}_n } \|_{ \lt}^2     &     =       &  \displaystyle  \mathcal{A}_n^2    \|( \bar{f}_{ \mathcal{A}_n }-  \hat{f}_{ \mathcal{A}_n })/\mathcal{A}_n  \|_{ \lt}^2     \\
      & \leq& \displaystyle\mathcal{A}_n^2 \left[ \frac{3}{2}\|( \bar{f}_{ \mathcal{A}_n }-  \hat{f}_{ \mathcal{A}_n })/\mathcal{A}_n  \|_n^2  \,+\,\frac{\delta_n^2}{2} \right]\\
    \end{array}
\end{equation}
and the claim (\ref{eqn:claim2}) follows combining  (\ref{eqn:e28}) with (\ref{eqn:e27}).

    \end{proof}

\section{Proof of Theorem \ref{thm2_v2} }

\begin{theorem}
    \label{thm2}
    Suppose that   $\bar{g} \in \mathcal{G}$ is such that 
 \[
  \| \bar{g} -g^*\|_{\infty} \,\leq\,\sqrt{\psi_n},
 \]
 so that $\phi_n$ is the approximating error.     Let $\hat{g}$ be the estimator defined in (\ref{eqn:g_hat}) and consider $\hat{g}_{\mathcal{B}_n}$ with $\mathcal{B}_n$ satisfying
 \[
 \mathcal{B}_n \geq   \max\{  8\mathcal{A}_n \| g^* \|_{\infty}, 8\mathcal{A}_n \sqrt{\psi_n}, \mathcal{U}_n^2  +  \frac{9 \mathcal{U}_n \mathcal{A}_n }{4}  + \frac{65  \mathcal{A}_n^2   }{32}   \}  .
 \]
 Moreover, let  $\mathcal{G}_{ \mathcal{B}_n } \,:=\, \{g_{ \mathcal{B}_n }/\mathcal{B}_n  \,:\, g \in \mathcal{G} \}$ and assume that
 \[
 \log \mathcal N \left( \delta,  \mathcal{G}_{ \mathcal{B}_n},  \left\| \cdot\right\|_n \right)\,\leq\, \eta_n^{\prime}(\delta)
 \]
 for some decreasing function $\eta_n^{\prime} \,:\, (0,1) \rightarrow \mathbb{R}_{\geq 0}$.  If 
\begin{equation}
    \label{eqn:entropy_condition}
     \underset{n \rightarrow \infty}{\lim} \,\left[  \sum_{l=0}^{\infty} \sum_{l^{\prime}=1 }^{\infty} \exp\left( -  C_1  \eta_n^{\prime}( 2^{-l-l^{\prime}-1  }  )\right) \,+\, \sum_{l=0}^{\infty}    \exp\left(- C_2 \eta_n^{\prime}(2^{-l-1}) \right) \,+\,\mathbb{P}(\|\epsilon\|_{\infty} > \mathcal{U}_n)    \right]\,=\,0,
\end{equation}
for some constants $C_1,C_2>0$, and 
\begin{equation}
    \label{eqn:cond_sums2}
\underset{l \in \mathbb{N}}{\sup}\,     \sum_{l^{\prime}=1 }^{\infty} \frac{ \eta_n^{\prime}(2^{-l-l^{\prime}}   )  }{   2^{2 l^{\prime}   }  \eta_n^{\prime}( 2^{-l} )  } \,\leq\, 1,
\end{equation}
then if the assumptions of Theorem \ref{thm1}  hold, for $\delta_n^{\prime}$ a critical radius of $ \mathcal{G}_{ \mathcal{B}_n}$,  it holds that 
\begin{equation}
    \label{eqn:claim3}
    \| g^*-  \hat{g}_{ \mathcal{B}_n } \|_n^2    \,=\, o_{ \mathbb{P} }\left(  \mathcal{B}_n r_n  \,+\,\psi_n\,+\, \frac{ \left[\mathcal{U}_n^4\,+\, \|g^*\|_{\infty}^2 \,+\, \mathcal{A}_n^2\mathcal{U}_n^2\right]  \eta_n^{\prime}( 1/\delta_n^{\prime} )   \,+\,\mathcal{B}_n^2 }{n}  \,+\, \mathcal{B}_n\cdot (\delta_n^{\prime})^2 \right). \\
\end{equation}
provided that $\hat{f}_{\mathcal{A}_n}$ satisfies 
\[
 \| f^*-\hat{f}_{\mathcal{A}_n}\|_{ \lt}^2 \,=\, o_{\mathbb{P}}(r_n) 
\]
for  $r_n$ the rate of convergence in Theorem \ref{thm1}. In addition, 
\begin{equation}
    \label{eqn:claim4}
   \begin{array}{lll}
\displaystyle      \| g^*-  \hat{g}_{ \mathcal{B}_n } \|_{ \lt}^2   &\,=\,& \displaystyle   o_{ \mathbb{P} }\bigg(  \mathcal{B}_n r_n  \,+\,\psi_n\,+\, \frac{ \left[\mathcal{U}_n^4\,+\, \|g^*\|_{\infty}^2 \,+\, \mathcal{A}_n^2\mathcal{U}_n^2\right]  \eta_n^{\prime}( 1/\delta_n^{\prime} ) \,+\, \mathcal{B}_n^2    }{n} \,+\, \mathcal{B}_n \cdot (\delta_n^{\prime})^2\bigg).
   \end{array}
\end{equation}
\end{theorem}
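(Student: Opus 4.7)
\textbf{Proof proposal for Theorem \ref{thm2_v2}.}

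The plan is to mimic the architecture of the proof of Theorem \ref{thm1_v2}, but applied to the second-stage regression of $\hat{\epsilon}_i^2$ on $x_i'$. The first step is to control the effective noise level: on the event $\Omega_1 = \{\|\epsilon\|_\infty \le \mathcal{U}_n\}$, Lemma \ref{lem2} gives $\|\hat{\epsilon}^2\|_\infty \le \mathcal{U}_n^2 + \tfrac{9}{4}\mathcal{U}_n\mathcal{A}_n + \tfrac{65}{32}\mathcal{A}_n^2$, and the hypothesis on $\mathcal{B}_n$ guarantees both that $\|\hat{\epsilon}^2\|_\infty \le \mathcal{B}_n$ (so truncation does not harm the basic inequality) and that $\bar{g}_{\mathcal{B}_n} = \bar{g}$ (the variance analogue of Lemma \ref{lem1}, since $\|\bar g\|_\infty \le \|g^*\|_\infty + \sqrt{\psi_n} \le \mathcal{B}_n/4$). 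From the basic inequality applied to $\hat{g}$ and $\bar{g}$, followed by truncation, one obtains
\begin{equation*}
\tfrac{1}{2}\|\hat{g}_{\mathcal{B}_n}-\bar{g}\|_n^2 \;\le\; \tfrac{1}{n}\sum_{i=1}^n \big(\hat{g}_{\mathcal{B}_n}(x_i')-\bar{g}(x_i')\big)\big(\hat{\epsilon}_i^2-\bar{g}(x_i')\big).
\end{equation*}

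The heart of the argument is to decompose the ``residual'' $\hat{\epsilon}_i^2-\bar{g}(x_i')$ into four pieces, using $\hat{\epsilon}_i = (y_i'-f^*(x_i')) + (f^*(x_i')-\hat{f}_{\mathcal{A}_n}(x_i'))$:
\begin{equation*}
\hat{\epsilon}_i^2-\bar{g}(x_i') \;=\; \underbrace{\epsilon_i^2-g^*(x_i')}_{(a)} \;+\; \underbrace{2\epsilon_i\big(f^*(x_i')-\hat{f}_{\mathcal{A}_n}(x_i')\big)}_{(b)} \;+\; \underbrace{\big(f^*(x_i')-\hat{f}_{\mathcal{A}_n}(x_i')\big)^2}_{(c)} \;+\; \underbrace{g^*(x_i')-\bar{g}(x_i')}_{(d)}.
\end{equation*}
Crucially $\{(x_i',y_i')\}$ is independent of $\hat{f}_{\mathcal{A}_n}$, so we may condition on the training sample and treat $\hat{f}_{\mathcal{A}_n}$ as deterministic. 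Term (a) is conditionally mean-zero with $\ell_\infty$ bound $\mathcal{U}_n^2+\|g^*\|_\infty$ on $\Omega_1$, and Lemma \ref{lem3} applied to the renormalized class $\mathcal{G}_{\mathcal{B}_n}$ yields a contribution of order $\big[\mathcal{U}_n^4+\|g^*\|_\infty^2\big]\eta_n'(\delta_n')/n$ after the usual peeling/Young-inequality step. Term (b), once we again condition on $\hat{f}_{\mathcal{A}_n}$, has conditional mean zero and $\ell_\infty$ bound $\lesssim \mathcal{U}_n\mathcal{A}_n$, so a second application of Lemma \ref{lem3} gives a contribution $\lesssim \mathcal{A}_n^2\mathcal{U}_n^2\,\eta_n'(\delta_n')/n$. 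Term (d) is handled by Cauchy--Schwarz, producing $\|\hat{g}_{\mathcal{B}_n}-\bar{g}\|_n\sqrt{\psi_n}$ absorbed by Young's inequality into $\tfrac{1}{32}\|\hat{g}_{\mathcal{B}_n}-\bar{g}\|_n^2+C\psi_n$.

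Term (c) is where the coupling between the two stages shows up and is the main obstacle. By Cauchy--Schwarz,
\begin{equation*}
\tfrac{1}{n}\sum_{i=1}^n \big(\hat{g}_{\mathcal{B}_n}(x_i')-\bar{g}(x_i')\big)\big(f^*-\hat{f}_{\mathcal{A}_n}\big)^2(x_i') \;\le\; \|\hat{g}_{\mathcal{B}_n}-\bar{g}\|_n\cdot\|f^*-\hat{f}_{\mathcal{A}_n}\|_\infty\cdot\|f^*-\hat{f}_{\mathcal{A}_n}\|_n,
\end{equation*}
and since $\|f^*-\hat{f}_{\mathcal{A}_n}\|_\infty \le 2\mathcal{A}_n$ while the empirical norm on the independent sample equals the $\mathcal{L}_2$ norm in expectation, Theorem \ref{thm1_v2} yields $\|f^*-\hat{f}_{\mathcal{A}_n}\|_n^2 = o_{\mathbb{P}}(r_n)$. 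Young's inequality then absorbs $\|\hat{g}_{\mathcal{B}_n}-\bar{g}\|_n$ and leaves a term $\lesssim \mathcal{A}_n^2 r_n$, which the hypothesis $\mathcal{B}_n\ge \tfrac{65}{32}\mathcal{A}_n^2$ converts into $\mathcal{B}_n r_n$ as claimed.

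Finally, to close the argument we run the same localization/peeling as in Theorem \ref{thm1_v2}: if $\|\hat{g}_{\mathcal{B}_n}-\bar{g}\|_n/\mathcal{B}_n \le \delta_n'$ then we immediately obtain $\|\hat{g}_{\mathcal{B}_n}-\bar{g}\|_n^2 \le \mathcal{B}_n^2(\delta_n')^2$, while in the complementary regime the bounds from (a)--(d) combined with Young's inequality yield (\ref{eqn:claim3}). Passing from the empirical to the population $\mathcal{L}_2$-norm is done through Lemma \ref{lem9} applied to the difference class $\{(g-g')_{\mathcal{B}_n}/\mathcal{B}_n : g,g'\in\mathcal{G}\}$, whose critical radius is still $\delta_n'$ up to absolute constants; this produces the extra additive $(\delta_n')^2$ and gives (\ref{eqn:claim4_v2}). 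The condition $\|f^*-\hat{f}_{\mathcal{A}_n}\|_{\mathcal{L}_2}^2 = o_{\mathbb{P}}(r_n)$ supplied by Theorem \ref{thm1_v2} is what allows the $o_{\mathbb{P}}$ to survive termwise through the Young-inequality splits.
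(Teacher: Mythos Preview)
Your proposal is correct and follows essentially the same architecture as the paper's proof: the same basic inequality, the same four-term decomposition of $\hat{\epsilon}_i^2-\bar{g}$, Lemma~\ref{lem3} for the two mean-zero pieces (a) and (b), Cauchy--Schwarz for (d), and Lemma~\ref{lem9} for the passage from $\|\cdot\|_n$ to $\|\cdot\|_{\mathcal L_2}$. The only minor divergence is in term (c): the paper simply uses the trivial bound $|\hat g_{\mathcal B_n}-\bar g_{\mathcal B_n}|\le 2\mathcal B_n$ to obtain $T_2\le 4\mathcal B_n\|\hat f_{\mathcal A_n}-f^*\|_n^2$ directly, whereas you go through Cauchy--Schwarz and Young's inequality to reach $\mathcal A_n^2 r_n$ and then invoke $\mathcal B_n\gtrsim\mathcal A_n^2$; both routes land on the same $\mathcal B_n r_n$ contribution, with the paper's being slightly more direct.
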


\begin{proof}
For simplicity, we denote the sampled data $\{(x_i^{\prime},y_i^{\prime})\}_{i=1}^n$ as $\{(x_i,y_i)\}_{i=1}^n$. 
Next, notice that by Lemma \ref{lem1}, we have that $\bar{g} = \bar{g}_{ \mathcal{B}_n}$. Moreover, assume that 
  \[
     \Omega_1 \,:=\, \{   \| \epsilon \|_{\infty} \,\leq\,  \mathcal{U}_n   \}
   \]
   holds. Then, by Lemma \ref{lem2}
   \[
    \Omega_2\,:=\,    \left\{  \| \hat{\epsilon}\|_{\infty} \,\leq \,   \mathcal{U}_n^2  +  \frac{9 \mathcal{U}_n \mathcal{A}_n }{4}  + \frac{65  \mathcal{A}_n^2   }{32}   \right\}
   \]
   holds. Hence, 
   \[
        \begin{array}{lll}
            \displaystyle \sum_{i=1}^n (\hat{\epsilon}_i^2 - \hat{g}_{ \mathcal{B}_n }(x_i))^2 & \leq& \displaystyle \sum_{i=1}^n (\hat{\epsilon}_i^2 - \hat{g}(x_i))^2\\
             & \leq& \displaystyle \sum_{i=1}^n (\hat{\epsilon}_i^2 - \bar{g}(x_i))^2\\
              & =&\displaystyle \sum_{i=1}^n (\hat{\epsilon}_i^2 - \bar{g}_{ \mathcal{B}_n }(x_i))^2
        \end{array}
   \]
   and so
   \begin{equation}
       \label{eqn:e35}
      \frac{1}{n} \sum_{i=1}^n ( \bar{g}_{\mathcal{B}_n }(x_i) - \hat{g}_{\mathcal{B}_n }(x_i)  )^2\,\leq\,   \frac{2}{n}\sum_{i=1}^n (\hat{g}_{ \mathcal{B}_n }(x_i)\,-\, \bar{g}_{ \mathcal{B}_n }(x_i)   )( \hat{\epsilon}_i^2 - \bar{g}_{\mathcal{B}_n }(x_i) ).
   \end{equation}

   However, 
   \[
    \begin{array}{lll}
       \displaystyle  \hat{\epsilon}_i^2&=& \displaystyle (y_i -  f^*(x_i) +  f^*(x_i) - \hat{f}_{\mathcal{A}_n }(x_i) )^2 \\
         &  = & \displaystyle (y_i - f^*(x_i))^2 + ( f^*(x_i) - \hat{f}_{\mathcal{A}_n }(x_i)  )^2  + 2 (y_i -  f^*(x_i)) (f^*(x_i) - \hat{f}_{\mathcal{A}_n }(x_i) )\\
          & = & \displaystyle \epsilon_i^2 + ( f^*(x_i) - \hat{f}_{\mathcal{A}_n }(x_i)  )^2  + 2 \epsilon_i(f^*(x_i) - \hat{f}_{\mathcal{A}_n }(x_i) )
    \end{array}
   \]
   which combined with (\ref{eqn:e35}) leads to 
   \begin{equation}
       \label{eqn:e36}
   \begin{array}{lll}
          \displaystyle  \frac{1}{n}\sum_{i=1}^n ( \bar{g}_{\mathcal{B}_n }(x_i) - \hat{g}_{\mathcal{B}_n }(x_i)  )^2 &\leq&  \displaystyle  \frac{2}{n}\sum_{i=1}^n (\hat{g}_{ \mathcal{B}_n }(x_i)\,-\, \bar{g}_{ \mathcal{B}_n }(x_i)   )( \epsilon_i^2 - g^*(x_i) )\,+\, \\
           & &\displaystyle\frac{2}{n}\sum_{i=1}^n  (\hat{g}_{\mathcal{B}_n }(x_i)- \bar{g}_{\mathcal{B}_n }(x_i))(\hat{f}_{\mathcal{A}_n }(x_i) - f^*(x_i) )^2 \,+\,\\
                      & &\displaystyle\frac{4}{n}\sum_{i=1}^n  (\hat{g}_{\mathcal{B}_n }(x_i)- \bar{g}_{\mathcal{B}_n }(x_i))(f^*(x_i) - \hat{f}_{\mathcal{A}_n }(x_i) )\epsilon_i  \,+\,\\
            & & \displaystyle\frac{4}{n}\sum_{i=1}^n  (\hat{g}_{\mathcal{B}_n }(x_i)- \bar{g}_{\mathcal{B}_n }(x_i))( g^*(x_i) -  \bar{g}_{\mathcal{B}_n }(x_i) )  \\
              &=: & T_1 \,+\, T_2\,+\, T_3\,+\,T_4,
   \end{array}
   \end{equation}
   and we proceed to bound $T_1, T_2, T_3$ and $T_4$.

   \textbf{Bounding $T_1$.} Let $\mathcal{V}_n \geq  \mathcal{U}_n^2 +  \|g^*\|_{\infty}$ and notice that 
   \[
      \mathbb{P}\left(  \underset{i=1,\ldots,n}{\max}\vert \epsilon_i^2 - g^*(x_i)\vert \,\geq \, \mathcal{V}_n \right)\,\leq\, \mathbb{P}(  \|\epsilon^2\|_{\infty} \geq \mathcal{U}_n^2 ) \,\leq\, \mathbb{P}(  \|\epsilon\|_{\infty} \geq \mathcal{U}_n). 
   \]

   Hence, by Lemma \ref{lem3}, 
\begin{equation}
    \label{eqn:e40}
    \begin{array}{lll}
\displaystyle   \frac{ 2\mathcal{B}_n   }{n} \sum_{i=1}^n   \frac{( \hat{g}_{  \mathcal{B}_n }(x_i)  -  \bar{g}_{  \mathcal{B}_n }(x_i)   )}{2\mathcal{B}_n }(\epsilon_i^2 -g^*(x_i) )  &\leq&  2 C \mathcal{B}_n \mathcal{V}_n  \| (\hat{g}_{\mathcal{B}_n} -  \bar{g}_{  \mathcal{B}_n }   )/(2\mathcal{B}_n) \|_n \cdot\\
&&\sqrt{ \eta_n^{\prime}( \| (\hat{g}_{\mathcal{B}_n}  -  \bar{g}_{  \mathcal{B}_n }   )/(4\mathcal{B}_n)\|_n  ) /n   }\\
 & \leq&  C \mathcal{V}_n\| \hat{g}_{\mathcal{B}_n} -  \bar{g}_{  \mathcal{B}_n }  \|_n \cdot\sqrt{ \eta_n^{\prime}( \| (\hat{g}_{\mathcal{B}_n}  -  \bar{g}_{  \mathcal{B}_n }   )/(4\mathcal{B}_n)\|_n  ) /n   }\\
\end{array}
\end{equation}
with probability at least
\[
 1\,-\, 4\sum_{l=0}^{\infty} \sum_{l^{\prime}=1 }^{\infty} \exp\left( -  C_1  \eta_n^{\prime}( 2^{-l-l^{\prime}-1  }  )\right) \,-\, 4\sum_{l=0}^{\infty}    \exp\left(- C_2 \eta_n^{\prime}(2^{-l-1}) \right) \,-\,\mathbb{P}(\|\epsilon\|_{\infty} > \mathcal{U}_n).
\]
Let $\Omega_3$ be the event such that (\ref{eqn:e40}) holds. Suppose that $\Omega_2$ holds. Then, if 
\[
\| (\hat{g}_{\mathcal{B}_n}  -  \bar{g}_{  \mathcal{B}_n }   )/(4\mathcal{B}_n)\|_n \,\leq \,  \delta^{\prime}_n
\]
it follows that 
\[
\| (\hat{g}_{\mathcal{B}_n} -  \bar{g}_{  \mathcal{B}_n }   )\|_n^2 \,\leq \, 16\mathcal{B}_n^2 (\delta_n^{\prime})^2.
\]
Suppose now that 
\[
\| (\hat{g}_{\mathcal{B}_n}  -  \bar{g}_{  \mathcal{B}_n }   )/(4\mathcal{B}_n)\|_n \,>\, \delta_n^{\prime}
\]
then (\ref{eqn:e40}) implies
\begin{equation}
    \label{eqn:e41}
    \begin{array}{lll}
\displaystyle   \frac{ 2\mathcal{B}_n   }{n} \sum_{i=1}^n   \frac{( \hat{g}_{  \mathcal{B}_n }(x_i)  -  \bar{g}_{  \mathcal{B}_n }(x_i)   )}{2\mathcal{B}_n }(\epsilon_i^2 -g^*(x_i) )  
 & \leq& \displaystyle   C \mathcal{V}_n\| \hat{g}_{\mathcal{B}_n} -  \bar{g}_{  \mathcal{B}_n }  \|_n \cdot\sqrt{ \eta_n^{\prime}( \delta_n^{\prime}  ) /n   }\\
  & \leq& \displaystyle  \frac{1}{32}\| \hat{g}_{\mathcal{B}_n}  -  \bar{g}_{  \mathcal{B}_n }   \|_n^2\,+\, \frac{ 8 C^2 \mathcal{V}_n^2 \eta_n^{\prime}( \delta_n^{\prime}) }{n}.\\
\end{array}
\end{equation}

Therefore, combining the two cases, 
\begin{equation}
    \label{eqn:e43}
  T_1 \,\leq\, 16\mathcal{B}_n^2  (\delta_n^{\prime})^2\,+\, \frac{1}{32}\| \hat{g}_{\mathcal{B}_n}  -  \bar{g}_{  \mathcal{B}_n }   \|_n^2\,+\, \frac{ 8 C^2 \mathcal{V}_n^2 \eta_n^{\prime}( \delta_n^{\prime}) }{n}.
\end{equation}

  \textbf{Bounding $T_2$.} We observe that
  \begin{equation}
      \label{eqn:e45}
     \displaystyle     T_2  \leq       \displaystyle   \frac{4 \mathcal{B}_n }{n}\sum_{i=1}^n  (\hat{f}_{\mathcal{A}_n }(x_i) - f^*(x_i) )^2.\\
  \end{equation}
\textbf{Bounding $T_3$.}  Notice that 
\[
T_3 \,=\,\frac{1}{n}\sum_{i=1}^n  (\hat{g}_{\mathcal{B}_n }(x_i)- \bar{g}_{\mathcal{B}_n }(x_i)) \nu_i
\]
where $\nu_i \,:=\, 4(f^*(x_i) - \hat{f}_{\mathcal{A}_n }(x_i) )\epsilon_i  $. However, since $\hat{f}_{\mathcal{A}_n}$ is independent of $\{(x_i,\epsilon_i) \}_{i=1}^n$ we observe that 
 \[
   \begin{array}{lll}
      \mathbb{E}\left(  \nu_i \big| x_i\right)&=&  \mathbb{E}\left( \mathbb{E}\left(  \nu_i  \big| x_i, \hat{f}_{\mathcal{A}_n}\right) \,|\, x_i \right) \\
      &=& 4 \mathbb{E}\left(  (f^*(x_i) - \hat{f}_{\mathcal{A}_n }(x_i) ) \mathbb{E}\left(  \epsilon_i  \big| x_i, \hat{f}_{\mathcal{A}_n}\right) \,|\, x_i \right) \\
       & = &4 \mathbb{E}\left(  (f^*(x_i) - \hat{f}_{\mathcal{A}_n }(x_i) ) \mathbb{E}\left(  \epsilon_i  \big| x_i\right) \,|\, x_i \right) \\
              & = &4 \mathbb{E}\left(  (f^*(x_i) - \hat{f}_{\mathcal{A}_n }(x_i) ) \cdot 0 \,|\, x_i \right) \\
               & = &0.
   \end{array}
 \] 
 Moreover, 
 \[
 \mathbb{P}\left( \underset{i =1,\ldots,n}{\max}\, \vert\nu_i\vert > 8 \mathcal{A}_n \mathcal{U}_n \,\bigg|\, \{x_i\}_{i=1}^n \right)\,\leq\,  \mathbb{P}\left( \|\epsilon\|_{\infty}> \mathcal{U}_n \,\bigg|\, \{x_i\}_{i=1}^n \right).
 \]
 Hence, by Lemma \ref{lem3}, and proceeding as when we bounded $T_1$, we obtain that 
 \begin{equation}
    \label{eqn:e47}
  T_3 \,\leq\, 16\mathcal{B}_n^2 ( \delta_n^{\prime})^2\,+\, \frac{1}{32}\| \hat{g}_{\mathcal{B}_n}  -  \bar{g}_{  \mathcal{B}_n }   \|_n^2\,+\, \frac{ 8^3 C^2  \mathcal{A}_n^2\mathcal{U}_n^2 \eta_n^{\prime}( \delta_n^{\prime} ) }{n}
\end{equation}
with probability at least
\[
 1\,-\, 4\sum_{l=0}^{\infty} \sum_{l^{\prime}=1 }^{\infty} \exp\left( -  C_1  \eta_n( 2^{-l-l^{\prime} -1 }  )\right) \,-\, 4\sum_{l=0}^{\infty}    \exp\left(- C_2 \eta_n(2^{-l-1}) \right) \,-\,\mathbb{P}(\|\epsilon\|_{\infty} > \mathcal{U}_n).
\]

\textbf{Bounding $T_4$.} We observe that 
\begin{equation}
    \label{eqn:e44}
    \begin{array}{lll}
  \displaystyle   \frac{4}{n}\sum_{i=1}^n  (\hat{g}_{\mathcal{B}_n }(x_i)- \bar{g}_{\mathcal{B}_n }(x_i))( g^*(x_i) -  \bar{g}_{\mathcal{B}_n }(x_i) )  &\leq&  \displaystyle 4\| \hat{g}_{\mathcal{B}_n }- \bar{g}_{\mathcal{B}_n }\|_n \cdot \|g^*(x_i) -  \bar{g}_{\mathcal{B}_n }(x_i)\|_n\\
      \vspace{0.1in}
 & \leq &   \displaystyle  \frac{1}{32}\| \hat{g}_{\mathcal{B}_n }- \bar{g}_{\mathcal{B}_n }\|_n^2\,+\,  128\|g^*(x_i) -  \bar{g}_{\mathcal{B}_n }(x_i)\|_n^2 \\
     \vspace{0.1in}
  & \leq& \displaystyle  \frac{1}{32}\| \hat{g}_{\mathcal{B}_n }- \bar{g}_{\mathcal{B}_n }\|_n^2\,+\,   128\psi_n^2\\
\end{array}
\end{equation}
where the first and second inequality follow by Cauchy–Schwarz inequality.

    Therefore, combining (\ref{eqn:e36}), (\ref{eqn:e43}), (\ref{eqn:e45}), (\ref{eqn:e47}), and (\ref{eqn:e44})  we obtain that, with probability approaching one, 
    \[
    \begin{array}{lll}
       \displaystyle   \| \bar{g}_{\mathcal{B}_n } - \hat{g}_{\mathcal{B}_n }\|_n^2 & \leq &  \displaystyle 16\mathcal{B}_n^2 (\delta_n^{\prime} )^2\,+\, \frac{1}{32}\| \hat{g}_{\mathcal{A}_n}  -  \bar{g}_{  \mathcal{B}_n }   \|_n^2\,+\, \frac{ 8 C^2 \mathcal{V}_n^2 \eta_n^{\prime}( \delta_n^{\prime}) }{n}\,+\,\\
       \vspace{0.1in}
         & & \displaystyle  \frac{4 \mathcal{B}_n }{n}\sum_{i=1}^n  (\hat{f}_{\mathcal{A}_n }(x_i) - f^*(x_i) )^2  \,+\,16\mathcal{B}_n^2 (\delta_n^{\prime} )^2\,+\, \frac{1}{32}\| \hat{g}_{\mathcal{B}_n}  -  \bar{g}_{  \mathcal{B}_n }   \|_n^2\,+\,\\
             \vspace{0.1in}
         && \displaystyle\frac{ 8^3 C^2  \mathcal{A}_n^2\mathcal{U}_n^2 \eta_n^{\prime}( \delta_n^{\prime}) }{n}\,+\, \frac{1}{32}\| \hat{g}_{\mathcal{B}_n }- \bar{g}_{\mathcal{B}_n }\|_n^2\,+\,  128\psi_n^2\\ \\
    \end{array}
    \]
    for a positive constant $\tilde{C}$. This implies along with Theorem \ref{thm1_v2}, with high probability, that for a $c_0$ it holds 
    \begin{equation}
        \label{eqn:e50}
        \begin{array}{lll}
          \displaystyle  \frac{29}{32} \| \bar{g}_{\mathcal{B}_n } - \hat{g}_{\mathcal{B}_n }\|_n^2 & \leq&  \displaystyle 32\mathcal{B}_n^2 (\delta_n^{\prime} )^2\,+\, \frac{\left[8 C^2 \mathcal{V}_n^2 \,+\,8^3 C^2  \mathcal{A}_n^2\mathcal{U}_n^2\right]  \eta_n^{\prime}( \delta_n^{\prime}) }{n}\,+\,  c_0 \mathcal{B}_n r_n \,+\, 128 \psi_n 
        \end{array}
    \end{equation}
    and so (\ref{eqn:claim3})  follows.

    Next,  set
\[
    \mathcal{H} \,:=\,   \{ f-g \,:\,  f,g \in \mathcal{G}_{ \mathcal{A}_n } \}.
\]
Then for any $\delta>0$, 
\[
\log \mathcal N \left(\delta,   \mathcal{H} ,\left\| \cdot\right\|_n \right)  \,\leq\,  2\log \mathcal N \left(\delta/2,   \mathcal{G}_{\mathcal{A}_n} ,\left\| \cdot\right\|_n \right). 
\]
Hence,  Lemma \ref{lem9} implies that the event 
\[
\mathcal{E}\,:=\, \left\{    \sup_{ g \in \mathcal{H}  }\left|  \frac{ \vert \|g_{ \mathcal{B}_n } /\mathcal{B}_n \|_n^2  - \|g_{ \mathcal{B}_n }/\mathcal{B}_n \|_\lt ^2 \vert  }{ \frac{1}{2} \|g_{ \mathcal{B}_n }/\mathcal{B}_n\|_\lt ^2 + \frac{(\delta_n^{\prime})^2}{2} } \right|   \leq 1    \right\}
\]
for constants $c_2,c_3>0$ holds   with probability at least
\[
1  \,-\,   c_2 \exp( -c_3c_1^2\log n  )
\]
Therefore, if $\mathcal{E}$ holds, then 
\begin{equation}
    \label{eqn:e51}
    \begin{array}{lll}
\displaystyle     \| \bar{g}_{ \mathcal{A}_n }-  \hat{g}_{ \mathcal{B}_n } \|_{ \lt}^2     &     =       &  \displaystyle  \mathcal{B}_n^2    \|( \bar{g}_{ \mathcal{B}_n }-  \hat{g}_{ \mathcal{B}_n })/\mathcal{B}_n  \|_{ \lt}^2     \\
      & \leq& \displaystyle\mathcal{B}_n^2 \left[ \frac{3}{2}\|( \bar{g}_{ \mathcal{B}_n }-  \hat{g}_{ \mathcal{B}_n })/\mathcal{B}_n  \|_n^2  \,+\, \frac{(\delta_n^{\prime})^2}{2} \right]\\
    \end{array}
\end{equation}
and the claim (\ref{eqn:claim4}) follows combining  (\ref{eqn:e50}) with (\ref{eqn:e51}).
    
\end{proof}

\section{Proof of  Theorem \ref{thm3_v2} }



\begin{theorem}
    \label{thm3}
    Suppose that $\mathcal{B}_n $ satisfies
    \[
     \mathcal{B}_n \geq   \max\{  8\mathcal{A}_n \sigma^2,  \mathcal{U}_n^2  +  \frac{9 \mathcal{U}_n \mathcal{A}_n }{4}  + \frac{65  \mathcal{A}_n^2   }{32}   \}  .
    \]
    and $\mathcal{B}_n o(n^b)$ for some $b>0$. Moreover, assume that the conditions of Theorem \ref{thm1} hold. Then 
    \begin{equation}
    \label{eqn:claim3}
    (\sigma^2 - \hat{\sigma}^2)^2    \,=\, o_{ \mathbb{P} }\left(   \mathcal{B}_n r_n \,+\,\frac{ \left[\mathcal{U}_n^4\,+\, \mathcal{A}_n^2\mathcal{U}_n^2\right]  \log(n )   \,+\,\mathcal{B}_n^2 }{n}  \,+\, \frac{\mathcal{B}_n \log(n)}{n}\right) \\
\end{equation}
provided that $\hat{f}_{\mathcal{A}_n}$ satisfies 
\[
 \| f^*-\hat{f}_{\mathcal{A}_n}\|_{ \lt}^2 \,=\, o_{\mathbb{P}}(r_n). 
\]
\end{theorem}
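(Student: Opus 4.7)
My plan is to obtain this bound as a direct specialization of Theorem~\ref{thm2_v2} to the one-parameter function class $\mathcal{G} = \{v \,:\, v \in [0,\mathcal{B}_n]\}$ of constant functions. Since $g^*(x) \equiv \sigma^2$ and $\sigma^2 \in [0,\mathcal{B}_n]$ by the assumption on $\mathcal{B}_n$, we may take $\bar g = g^*$, which gives approximation error $\psi_n = 0$. Moreover $g^* - \hat g_{\mathcal{B}_n}$ is a constant function, so $\|g^* - \hat g_{\mathcal{B}_n}\|_{\mathcal L_2}^2 = (\sigma^2 - \hat\sigma^2)^2$, identifying the left-hand side with the quantity we wish to bound.

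The main bookkeeping step is computing the entropy of $\mathcal{G}_{\mathcal{B}_n}$. Since every $g \in \mathcal{G}_{\mathcal{B}_n}$ is a constant in $[0,1]$ (viewed as a function of $x$), one has $\|g_1 - g_2\|_n = |g_1 - g_2|$, so $\log \mathcal N(\delta,\mathcal{G}_{\mathcal{B}_n},\|\cdot\|_n) \le \log(1/\delta) + 1 =: \eta_n^{\prime}(\delta)$. With this $\eta_n^{\prime}$ I would verify:
\begin{itemize}
\item Condition (\ref{eqn:cond_sum_v2}) reduces to $\sup_l \sum_{l^{\prime}\ge 1} \frac{(l+l^{\prime})\log 2 + 1}{4^{l^{\prime}}\,((l)\log 2 + 1)} \le 1$, which holds (up to constant rescaling of the metric) by a geometric-series calculation.
\item Condition (\ref{eqn:entropy_0_v2}) holds because, in addition to $\mathbb{P}(\|\epsilon\|_\infty > \mathcal{U}_n) \to 0$, the double sum becomes $\sum_{l,l^{\prime}} \exp(-C_1((l+l^{\prime})\log 2 + 1))$, which is summable (and with $C_1,C_2$ chosen large enough in Lemma~\ref{lem3}, can be made arbitrarily small, although we may absorb any residual $O(1)$ probability into the $o_{\mathbb P}$).
\item By Lemma~\ref{lem9}, a critical radius of $\mathcal{G}_{\mathcal{B}_n}$ satisfies $\delta_n^{\prime} \asymp \sqrt{\log n/n}$, and at this radius $\eta_n^{\prime}(\delta_n^{\prime}) \asymp \log n$.
\end{itemize}

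Plugging $\psi_n = 0$, $\eta_n^{\prime}(1/\delta_n^{\prime}) \asymp \log n$, $(\delta_n^{\prime})^2 \asymp \log n/n$, and absorbing $\|g^*\|_\infty^2 = \sigma^4 \lesssim \mathcal{U}_n^4$ into the conclusion (\ref{eqn:claim4_v2}) of Theorem~\ref{thm2_v2} yields exactly the stated rate
$$\mathcal{B}_n r_n \,+\, \frac{[\mathcal{U}_n^4 + \mathcal{A}_n^2\mathcal{U}_n^2]\log n + \mathcal{B}_n^2}{n} \,+\, \frac{\mathcal{B}_n \log n}{n}.$$

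The only part I expect to require slight care is justifying that one does not need to re-derive the peeling/Dudley machinery from scratch for this trivial one-parameter class. A cleaner alternative, which I might use as a sanity check and even as a self-contained proof, is the Lipschitz projection argument: since $\hat\sigma^2$ is the Euclidean projection of $\bar{\hat\epsilon^2} := n^{-1}\sum_i \hat\epsilon_i^2$ onto $[0,\mathcal{B}_n]$ and $\sigma^2$ lies in this interval, $|\hat\sigma^2 - \sigma^2| \le |\bar{\hat\epsilon^2} - \sigma^2|$. Expanding $\hat\epsilon_i^2 = \epsilon_i^2 + (f^* - \hat f_{\mathcal{A}_n})^2(x_i) + 2\epsilon_i(f^* - \hat f_{\mathcal{A}_n})(x_i)$ decomposes this into three centered/bounded sums handled by: (i) Bernstein on $n^{-1}\sum (\epsilon_i^2 - \sigma^2)$ conditional on $\|\epsilon\|_\infty \le \mathcal{U}_n$, (ii) the bound $\|f^* - \hat f_{\mathcal{A}_n}\|_n^2 = o_{\mathbb P}(r_n)$ from Theorem~\ref{thm1_v2} combined with the envelope $\le 4\mathcal{A}_n^2 \lesssim \mathcal{B}_n$, and (iii) a conditional Bernstein (or Cauchy--Schwarz) bound for the cross term using the independence of $\hat f_{\mathcal{A}_n}$ from the evaluation sample. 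Squaring and using $(a+b+c)^2 \le 3(a^2+b^2+c^2)$ then delivers (\ref{eqn:claim3_v2}) directly, and gives an independent check of the rates produced by the corollary route.
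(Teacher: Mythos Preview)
Your overall strategy---specialize Theorem~\ref{thm2_v2} to the constant class $\mathcal{G}=[0,\mathcal{B}_n]$, take $\bar g=\sigma^2$ so $\psi_n=0$, and read off the critical radius $\delta_n'\asymp\sqrt{\log n/n}$---is exactly what the paper does, and the resulting rate is correct.

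There is, however, a real gap in your verification of condition~(\ref{eqn:entropy_0_v2}). With your tight bound $\eta_n'(\delta)=\log(1/\delta)+1$ for the normalized class $\mathcal{G}_{\mathcal{B}_n}=[0,1]$, the entropy does not depend on $n$, so the sums $\sum_{l,l'}\exp\!\big(-C_1\eta_n'(2^{-l-l'-1})\big)$ are a \emph{fixed positive constant} and do not tend to zero as $n\to\infty$. The constants $C_1,C_2$ coming out of Lemma~\ref{lem3} are determined by the sub-Gaussian concentration and cannot be inflated at will, and a residual $O(1)$ failure probability genuinely cannot be absorbed into an $o_{\mathbb P}$ conclusion. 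The paper avoids this by taking the looser (but still valid) upper bound $\eta_n'(\delta)=C\log(\mathcal{B}_n/\delta)$: since $\mathcal{B}_n\ge\mathcal{U}_n^2+\tfrac{9}{4}\mathcal{U}_n\mathcal{A}_n+\tfrac{65}{32}\mathcal{A}_n^2$ and $\mathcal{U}_n\to\infty$ is forced by $\mathbb{P}(\|\epsilon\|_\infty>\mathcal{U}_n)\to 0$ for unbounded noise, this choice of $\eta_n'$ grows with $n$ and the sums vanish. Because $\mathcal{B}_n=o(n^b)$, the critical radius from Lemma~\ref{lem9} and the value $\eta_n'(\delta_n')$ remain of order $\log n$, so the final rate is unaffected by the loosening.

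Your alternative projection-and-decomposition argument is correct and in fact more elementary than routing through the full machinery of Theorem~\ref{thm2_v2}; the paper does not take that route, but it would serve as a clean self-contained proof and sidesteps the entropy-condition issue entirely.
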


\begin{proof}
    Notice that in this case $\mathcal{F} =[0,\mathcal{B}_n]$ and $\sigma^2 \in \mathcal{F}$. It is well known that 
    \begin{equation}
        \label{eqn:e70}
        \log(N(\delta, \mathcal{F},\|\cdot\|_n)\,=\,\log(N(\delta, \mathcal{F},\vert\cdot\vert)\,\leq \,\eta_n^{\prime}(\delta)\,:=\, C \log\left(\frac{ \mathcal{B}_n }{\delta}\right)
    \end{equation}
    for a constant $C >0$. To verify (\ref{eqn:cond_sums2}), notice that
    \[
\underset{l \in \mathbb{N}}{\sup}\,     \sum_{l^{\prime}=1 }^{\infty} \frac{ \eta_n^{\prime}(2^{-l-l^{\prime}}   )  }{   2^{2 l^{\prime}   }  \eta_n^{\prime}( 2^{-l} )  }   \,=\, \underset{l \in \mathbb{N}}{\sup}\,     \sum_{l^{\prime}=1 }^{\infty} \frac{  (l+l^{\prime})   } { 2^{2 l^{\prime}   }  l     }    \,\leq\, 1,
    \]
    
    Hence, by Lemma \ref{lem9} and our choice of $\mathcal{B}_n$,  the critical radius can be chosen to satisfy 
    \[
    \delta_n^{\prime} \,=\, c_1 \sqrt{\frac{\log n}{n}}
    \]
    for a constant $c_1>0$. 
    
    Next,  notice that (\ref{eqn:entropy_0}) implies that 
    \[
     \underset{ n \rightarrow \infty}{\lim}  \mathbb{P}( \|\epsilon\|_{\infty} >\mathcal{U}_n ) \,=\,0.
    \]
    This combined with (\ref{eqn:e70}) implies that (\ref{eqn:entropy_condition}) holds.  The conclusion follows from Theorem \ref{thm2}.

\end{proof}

\section{Proof of Theorem \ref{thm5_v2} }

\begin{theorem}
    \label{thm5}
    Suppose that $f^* \in  \mathcal{H}(l_1, \mathcal{P}_1) $ for some $ l_1 \in \mathbb{N}$ and $\mathcal{P}_1\subset [1,\infty ) \times \mathbb{N}$. In addition, assume that each function $g$ in the definition of $f^*$ can have different smoothness $p_g =  q_g +s_q$, for $q_g \in \mathbb{N} $, $s_g \in (0,1]$,  and of potentially different input dimension $K_g $, so that $(p_g, K_g ) \in \mathcal{P}_1$. Let $K_{\max}$ be the largest input dimension and $p_{\max}$ the largest smoothness of any of the functions $g$. Suppose that all the partial derivatives of order less than or equal to $q_g$ are uniformly bounded by constant $c_2$, and each function $g$ is Lipschitz continuous
with Lipschitz constant $C_{\mathrm{Lip} } \geq1 $. Also, assume that $\max\{p_{\max},K_{\max} \} =O(1)$.      Let  
    \[
    \phi_{n} = \max_{(p, K) \in \mathcal P_1 } n^{\frac{-2p}{ (2p+K)}}. 
\]
Then there exists sufficiently large positive constants $c_3$ and $c_4$ such that if 
\[
    L  =\lceil c_3 \log n \rceil\,\,\,\,\,\,\text{and}\,\,\,\,\,\, \nu = \left\lceil c_4  \max_{(p, K) \in \mathcal P_1  } n^{\frac{K}{ 2(2p+K)}}\right\rceil
\]
or 
\[
    L  =\left\lceil c_3  \max_{(p, K) \in \mathcal P_1  } n^{\frac{K}{ 2(2p+K)}} \log n\right\rceil\,\,\,\,\,\,\text{and}\,\,\,\,\,\, \nu = \left\lceil c_4  \right\rceil,
\]
then $\hat{f}_{\mathcal{A}_n}$, with $\hat{f}$ as defined in (\ref{eqn:f_hat}), satisfies, 
\begin{equation}
   \label{eqn:den_nn1}
     \| f^*-\hat{f}_{k_1,\mathcal{A}_n}\|_{ \lt}^2 \,=\, o_{\mathbb{P}}\left(   r_n \right),
\end{equation}
where
\begin{equation}
    \label{eqn:r_n_nn}
    r_n\,:=  \,\frac{ \max\{\mathcal{A}_n,\mathcal{U}_n^2 \} \log n  }{n}    \,+\,   \phi_n \log^3(n) \log(\mathcal{A}_n) \max\{\mathcal{A}_n,\mathcal{U}_n^2 \}  , 
\end{equation}
provided that 
\[
      \underset{n \rightarrow \infty}{\lim}   \mathbb{P}(\|\epsilon\|_{\infty} > \mathcal{U}_n ) \,=\, 0
\]
\end{theorem}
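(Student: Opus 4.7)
The plan is to invoke Theorem \ref{thm1_v2} with $\mathcal{F} = \mathcal{F}(L,\nu)$, verifying its hypotheses and then optimizing the bound. The argument has four main ingredients.

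First, I would handle the approximation error. Using the constructive approximation results of \cite{kohler2019rate} (their Theorem 2, which gives the existence of a deep ReLU network approximating functions in $\mathcal{H}(l_1,\mathcal{P}_1)$), one obtains a $\bar{f}\in \mathcal{F}(L,\nu)$ with
\[
\|\bar{f}-f^*\|_\infty^2 \,\lesssim\, \max_{(p,K)\in \mathcal{P}_1} n^{-2p/(2p+K)} \,=\, \phi_n,
\]
provided $(L,\nu)$ are chosen as in \eqref{eqn:choice1} or \eqref{eqn:choice2}. Since $\mathcal{A}_n \geq 8\max\{\|f^*\|_\infty+\mathcal{U}_n,\,8\|f^*\|_\infty,\,8\sqrt{\phi_n}\}$ is required anyway, this yields the approximation hypothesis of Theorem \ref{thm1_v2}.

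Second, I would bound the metric entropy of the clipped normalized network class $\mathcal{F}_{\mathcal{A}_n}=\{f_{\mathcal{A}_n}/\mathcal{A}_n: f\in\mathcal{F}(L,\nu)\}$. Using the standard VC/pseudo-dimension bound for ReLU networks (see \cite{bartlett2019nearly}), the pseudo-dimension of $\mathcal{F}(L,\nu)$ is at most $C_0 L^2 \nu^2 \log(L\nu)$, and hence
\[
\log \mathcal N(\delta,\mathcal{F}_{\mathcal{A}_n},\|\cdot\|_n) \,\leq\, C\, L^2\nu^2 \log(L\nu)\, \log(1/\delta) \,=:\, \eta_n(\delta).
\]
With this logarithmic $\delta$-dependence, a direct computation analogous to the one used for trend filtering/VC classes verifies \eqref{eqn:entropy_0_v2} and \eqref{eqn:cond_sum_v2}: the series $\sum_{l,l'}\exp(-C_1\eta_n(2^{-l-l'-1}))$ is dominated by a geometric tail in $(l+l')$, and the ratio $\eta_n(2^{-l-l'})/\eta_n(2^{-l})$ is bounded by a constant times $(l+l')/l$, making $\sum_{l'}(l+l')/(l\, 2^{2l'}) \leq 1$ uniformly in $l$. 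The tail condition on $\epsilon$ is assumed.

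Third, I would compute the critical radius $\delta_n$ for $\mathcal{F}_{\mathcal{A}_n}$. Using Lemma \ref{lem9} with $\eta_n(\delta) \asymp L^2\nu^2 \log(L\nu)\log(1/\delta)$, the inequality \eqref{eqn:crit} reduces to solving
\[
\delta_n^2 \,\asymp\, \frac{L^2\nu^2 \log(L\nu)\log(n/\delta_n^2)}{n} \,+\, \frac{\log n}{n}.
\]
Under both choices \eqref{eqn:choice1} and \eqref{eqn:choice2} one has $L^2\nu^2 \asymp \max_{(p,K)}n^{K/(2p+K)}\cdot\log^2 n$, so up to logarithmic factors $\delta_n^2 \asymp \phi_n \cdot \mathrm{poly}(\log n)$.

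Fourth, I would assemble the pieces. Plugging into the conclusion of Theorem \ref{thm1_v2} gives
\[
\|f^*-\hat{f}_{\mathcal{A}_n}\|_{\mathcal{L}_2}^2 \,=\, o_{\mathbb{P}}\!\left(\phi_n \,+\, \frac{\mathcal{U}_n^2\,\eta_n(\delta_n)}{n} \,+\, \mathcal{A}_n\delta_n^2\right),
\]
and the three terms combine (up to the $\log$ factors from $\eta_n$ and the critical-radius computation) into the claimed $r_n$ in \eqref{eqn:r_n_nn}, namely $\max\{\mathcal{A}_n,\mathcal{U}_n^2\}\log n/n + \phi_n \log^3(n)\log(\mathcal{A}_n)\max\{\mathcal{A}_n,\mathcal{U}_n^2\}$.

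The main obstacle is the balancing act in the third step: one must track the $\log$ factors carefully so that the entropy-integral bound for the critical radius absorbs into $\phi_n\cdot\mathrm{poly}(\log n)$ and the estimation-term $\mathcal{A}_n\delta_n^2$ does not dominate the approximation error. The fact that for VC-type classes (logarithmic $\eta_n$) the critical radius is essentially $\sqrt{\mathrm{pdim}/n}$ up to logs is what makes this work; the sub-Exponential rather than sub-Gaussian noise is already absorbed into the $\mathcal{U}_n^2$ factor via Theorem \ref{thm1_v2}, which is the only place the noise tail enters.
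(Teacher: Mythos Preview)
Your proposal is correct and follows essentially the same route as the paper's proof: invoke Theorem \ref{thm1_v2} after (i) obtaining the approximation bound $\|\bar f - f^*\|_\infty \le \sqrt{\phi_n}$ from the constructive result in \cite{kohler2019rate}, (ii) bounding $\log \mathcal N(\delta,\mathcal{F}_{\mathcal A_n},\|\cdot\|_n)$ via the VC/pseudo-dimension estimate of \cite{bartlett2019nearly} combined with the standard covering-number inequality (the paper cites Lemma 9.2 and Theorem 9.4 of \cite{gyorfi2002distribution}), (iii) checking \eqref{eqn:entropy_0_v2}--\eqref{eqn:cond_sum_v2} exactly as you describe, and (iv) reading off $\delta_n$ from Lemma \ref{lem9}. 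The only cosmetic difference is that the paper's entropy bound keeps the form $\eta_n(\delta)\lesssim L^2\nu^2\log(L\nu)\log(\mathcal A_n^2\delta^{-1})$, which is where the extra $\log(\mathcal A_n)$ factor in $r_n$ enters; you should track that factor when writing out the details.
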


\begin{proof}

Let $\mathcal{F}_{\mathcal{A}_n}( L,\nu) \,:= \,\{ f_{\mathcal{A}_n }\,:\,   f \in \mathcal{F}( L,\nu)  \}  $. Next, we notice that by Theorem 6 from \cite{bartlett2019nearly}, it holds that the VC dimension of $\mathcal{F}(L,\nu)$ satisfies
\[
  \mathrm{VC}( \mathcal{F}(L,\nu))\,\lesssim\, L^2 \nu^2 \log(L \nu ). 
\]
Then by Lemma 9.2 and Theorem 9.4 from \cite{gyorfi2002distribution}, for any $\delta \in (0,1)$
\[
   \begin{array}{lll}
      \log N( \delta, \mathcal{F}_{\mathcal{A}_n}(L,\nu ),\|\cdot\|_n  )  & \lesssim &L^2 \nu^2 \log(L \nu ) \cdot \left[ \log(\mathcal{A}_n^2 \delta^{-2} ) \,+ \, \log \log(\mathcal{A}_n^2 \delta^{-2} ) \right]\\ 
       &\lesssim&  L^2 \nu^2 \log(L \nu ) \log(\mathcal{A}_n^2 \delta^{-1} ) \\
       &\leq & C_0 n \phi_n \log^3(n)  \log(\mathcal{A}_n^2 \delta^{-1} )  \\
        & =:& \eta_n(\delta),
   \end{array}
\]
for some positive constant $C_0>0$, where the last inequality follows from our choice of $L$ and $\nu$.

Now verify (\ref{eqn:entropy_0}). We observe that 
    \begin{equation}
        \label{eqn:ver5}
        \arraycolsep=1.4pt\def\arraystretch{1.6}
           \begin{array}{l}
  \displaystyle\sum_{l=0}^{\infty} \sum_{l^{\prime}=1 }^{\infty} \exp\left( -  C_1  \eta_n( 2^{-l-l^{\prime} -1}  )\right) \,+\, \sum_{l=0}^{\infty}    \exp\left(- C_2 \eta_n(2^{-l-1}) \right) \\
  =  \displaystyle\sum_{l=0}^{\infty} \sum_{l^{\prime}=1 }^{\infty} \exp\left( -  C_1  C_0  (l+l^{\prime}+1)n \phi_n \log^3(n) \log(2\mathcal{A}_n^2 )  \right) \\
\displaystyle \,\,\,\,\,+\, \sum_{l=0}^{\infty}    \exp\left(- C_2 C_0    n \phi_n \log^3(n)  \log(2\mathcal{A}_n^2  )  \right) \\
=  \displaystyle  \left[ \exp\left( -  C_1  C_0  n \phi_n \log^3(n) \log(2\mathcal{A}_n^2 )    \right)/ \left[ 1 - \exp\left( -  C_1  C_0  n \phi_n \log^3(n)  \log(2\mathcal{A}_n^2 )   \right) \right]  \right]^2 \,+\,\\
\,\,\,\,\,\,\,\displaystyle \exp\left(- C_2  C_0    n \phi_n \log^3(n)  \log(2\mathcal{A}_n^2  )\right)/\left[ 1-  \exp\left(- C_2 C_0   n \phi_n \log^3(n)  \log(2\mathcal{A}_n^2  )\right) \right]\\
  \underset{n \rightarrow \infty }{\rightarrow}  0,
   \end{array}
    \end{equation}
   since
    \[
     \underset{n \rightarrow \infty}{\lim }\,  n \phi_n \log^3(n)  \log(2\mathcal{A}_n^2  )  \,=\, \infty.
    \]
Therefore, (\ref{eqn:ver5}) and $  \underset{n \rightarrow \infty}{\lim}   \mathbb{P}(\|\epsilon\|_{\infty} > \mathcal{U}_n ) \,=\, 0$ imply that  (\ref{eqn:entropy_0}) holds.

    Furthermore, 
    \[
\underset{l \in \mathbb{N}}{\sup}\,     \sum_{l^{\prime}=1 }^{\infty} \frac{ \eta_n(2^{-l-l^{\prime}}   )  }{   2^{2 l^{\prime}   }  \eta_n( 2^{-l} )  }   \,=\, \underset{l \in \mathbb{N}}{\sup}\,     \sum_{l^{\prime}=1 }^{\infty}    \frac{ l+ l^{\prime}}{   2^{2 l^{\prime}     }  l}  \,\leq\, 1,
    \]
    and so (\ref{eqn:cond_sum}) holds.

Moreover, by Lemma \ref{lem9}, the critical radius $\delta_n$ can be taken as 
 \[
    \delta_n\,\geq \, c_1 \left[ \sqrt{ \frac{ \log n}{n} }  \,+\,  \sqrt{   \frac{ \log \mathcal N \left(1/(24n),   \mathcal{F}_{\mathcal{A}_n} ,\left\| \cdot\right\|_n \right) }{n}} \right] 
    \]
so we can take 
\[
\delta_n \,\asymp\,  \sqrt{ \frac{ \log n}{n} } \,+\, \sqrt{ \phi_n \log^3(n)  \log(\mathcal{A}_n^2 n ) }.
\]
Finally,  from  Theorem 3 in \cite{kohler2019rate}, it follows that  there exists $\bar{f} \in \mathcal{F}(L,\nu)$ such that 
\[
   \| \bar{f} -f^* \|_{\infty }\,\leq\, \sqrt{ \phi_n}.
\]
Therefore, the claim in (\ref{eqn:den_nn1}) follows form Theorem \ref{thm1}.
\end{proof}

\subsection{Proof of Theorem \ref{thm6}}

We start by recalling the notation. Let $a(\alpha)$ and $b(\alpha)$ be such that 
\[
 a(\alpha) \,\geq\, \bigg\vert a_1(\alpha) \,+\, a_2(\alpha) \,+\, 2A_n \sqrt{\frac{2\log(64/\alpha )+ 2\log  \widetilde{B}}{n}}  - \frac{1}{\vert I_4\vert} \sum_{i \in I_4 }\hat{g}_{A_n}^{(B+1)}(x_i)  \,+\,  a_0 \bigg\vert 
\]
 where
\[
a_0 = \vert  \mathbb{E}( g^*(X) )  \,-\, \mathbb{E}( \hat{g}_{A_n}^{(B+1)}(X)  \,|\,  \hat{g}_{A_n}^{(B+1)}) \vert, 
\]
\[
 a_2(\alpha) \,=\,  A_n^2 \sqrt{\frac{32[\log(8/ \alpha) + \,\log  \widetilde{B}] }{n}}.
\]
Moreover, let
\[
b(\alpha)  \,\geq\,  \frac{32 [ \mathbb{E}((\hat{f}_{A_n}^{(B+1)}(X) - Y )^2 )-  \mathbb{E}(g^*(X))  ]^{1/2} }{ 5\alpha (1-0.58 \alpha)    }.
\]


\begin{proof}

We start by setting  
$$a_3(\alpha) \,=\,  2A_n \sqrt{\frac{2\log(64/\alpha  )\,+  2\log   \widetilde{B} }{n}}.   $$

Notice that 
\[
 \mathbb{E}(Y)\,=\, \mathbb{E}(f^*(X))
\]
and
\[
 \mathbb{E}(g^*(X) ) \,=\, \mathbb{E}(\text{Var}(Y|X) ) \,=\,\text{Var}(Y) \,-\, \text{Var}(  \mathbb{E}(Y|X) )\,=\, \text{Var}(Y) \,-\, \mathbb{E}( (f^*(X)  - \mathbb{E}(Y)    )^2 ). 
\]
We also let $\epsilon =   (Y- f^*(X))/\sqrt{g^*(X)}$.

\textbf{Step 1.}\\
Let 
\[
 \Omega_1 \,=\, \left\{  \| \hat{f}_{A_n}^{(j)} - f^*\|_{\mathcal{L}_2 }^2  \leq  a(\alpha), \,\,\, j = B+1,\ldots,B + \widetilde{B} \right\}.
\]
for some $c>0$. Define 
\[
Z = \frac{1}{ \widetilde{B} }\sum_{j=B+1}^{B+ \widetilde{B}}\hat{f}^{(j)}_{A_n}(X)  -f^*(X)
\]
and notice that 
\[
\arraycolsep=1.4pt\def\arraystretch{1.6}
\begin{array}{lll}
    \mathbb{P}(\vert  Z\vert  \geq  2\sqrt{ a(\alpha)/( \widetilde{B} \alpha) } + b(\alpha) ) & \leq  &   \mathbb{P}(\vert  Z  \,-\, \mathbb{E}(Z  \,|\,  X,\Omega_1,\{(x_i,y_i)\}_{i \in I_1 \cup I_2\cup I_3  }) \vert  \geq2\sqrt{ a(\alpha)/( \widetilde{B} \alpha) } )   \,+\,\\
    &&\mathbb{P}(\vert \mathbb{E}(Z  \,|\,  X,\Omega_1,\{(x_i,y_i)\}_{i \in I_1 \cup I_2\cup I_3  }) \vert  \geq  b(\alpha) )  \\
     & \leq& \mathbb{P}(\vert  Z  \,-\,\mathbb{E}(Z  \,|\,  X,\Omega_1,\{(x_i,y_i)\}_{i \in I_1 \cup I_2\cup I_3  })\vert  \geq  2\sqrt{ a(\alpha)/( \widetilde{B} \alpha) } \,|\,\Omega_1 )  \,+\,  \\
      & &\mathbb{P}(\Omega^c_1) \,+\,\\
      & &\mathbb{P}(\vert \mathbb{E}(Z  \,|\,  X,\Omega_1,\{(x_i,y_i)\}_{i \in I_1 \cup I_2\cup I_3  }) \vert  \geq  b(\alpha)  )  \\
\end{array}
\]
and we proceed to bound each of the terms in the right hand side.

\textbf{Step 2.}\\
Let $Z_j \,=\, \hat{f}^{(j)}_{A_n}(X)  -f^*(X)$. Then, for $t>0$, by Markov's inequality,
\[
\arraycolsep=1.4pt\def\arraystretch{1.6}
 \begin{array}{l}
        \displaystyle   \mathbb{P}(\vert  Z  \,-\, \mathbb{E}(Z  \,|\,  X,\Omega_1,\{(x_i,y_i)\}_{i \in I_1 \cup I_2\cup I_3  }) \vert  \geq  t \,|\, \Omega_1 )\\
        = 
  \displaystyle      \mathbb{E}(  \mathbb{P}(\vert  Z  \,-\, \mathbb{E}(Z  \,|\,  X,\Omega_1,\{(x_i,y_i)\}_{i \in I_1 \cup I_2\cup I_3  }) \vert  \geq  t \,|\,  X,\Omega_1,\{(x_i,y_i)\}_{i \in I_1 \cup I_2\cup I_3  } )\,|\, \Omega_1 )\\
   \displaystyle\leq  \frac{1}{t^2} \mathbb{E}( \text{Var}(Z\,|\,   X,\Omega_1,\{(x_i,y_i)\}_{i \in I_1 \cup I_2\cup I_3  } )  \,|\, \Omega_1 )\\
      \displaystyle= \frac{1}{ \widetilde{B}^2 t^2} \sum_{j=B+1}^{B+\widetilde{B} } \mathbb{E}( \text{Var}(Z_j\,|\,   X,\Omega_1,\{(x_i,y_i)\}_{i \in I_1 \cup I_2\cup I_3  } )  \,|\, \Omega_1 )\\
            \displaystyle\leq  \frac{1}{ \widetilde{B}^2 t^2} \sum_{j=B+1}^{B+\widetilde{B} } \mathbb{E}( \mathbb{E}(Z_j^2\,|\,   X,\Omega_1,\{(x_i,y_i)\}_{i \in I_1 \cup I_2\cup I_3  } )  \,|\, \Omega_1 )\\
                                   \displaystyle=  \frac{1}{ \widetilde{B}^2 t^2} \sum_{j=B+1}^{B+\widetilde{B} } \mathbb{E}( Z_j^2 \,|\, \Omega_1 )\\
                    =  \displaystyle \frac{1}{t^2}\frac{1}{\widetilde{B}^2}\sum_{j=B+1}^{B+\widetilde{B}} \mathbb{E}\bigg( \mathbb{E}(  (\hat{f}^{(j)}_{A_n}(X)  \,-\,f^*(X) )^2\,\big|\, \hat{f}^{(j)}_{A_n},\Omega_1  )\,\big|\, \Omega_1 \bigg)\\
        \end{array}
\]
\[
\arraycolsep=1.4pt\def\arraystretch{1.6}
\begin{array}{l}
                    = \displaystyle \frac{1}{t^2}\frac{1}{\widetilde{B}^2}\sum_{j=B+1}^{B+\widetilde{B}} \mathbb{E}\bigg( \| \hat{f}_{A_n}^{(j)} - f^*\|_{\mathcal{L}_2 }^2  \,\big|\, \Omega_1 \bigg)\\
                      \leq\displaystyle\frac{  a(\alpha) }{  \widetilde{B} t^2}\\
                        \leq  \displaystyle\frac{  \alpha}{4}\\   

\end{array}
\]

where the second equality  follows since  $\{f_{A_n}^{(j)}(X) \}_{j=B+1}^{B+\widetilde{B}}$ are independent conditional on $X, \Omega_1$ and $\{(x_i,y_i)\}_{i \in I_1 \cup I_2\cup I_3  } $, and last inequality by setting $t = 2\sqrt{ a(\alpha)/( \widetilde{B} \alpha) }$.
Furthermore,
\begin{equation*}
\arraycolsep=1.4pt\def\arraystretch{1.6}
    \begin{array}{lll}
   \displaystyle       \mathbb{P}(\Omega^c_1)  & \leq   & \displaystyle   \widetilde{B} \cdot  \underset{j=1,\ldots, \widetilde{B} }{\max} \mathbb{P}(\| \hat{f}_{A_n}^{(B+j)} - f^*\|_{\mathcal{L}_2 }^2  >  a(\alpha))\\
          & =  &  \displaystyle   \widetilde{B} \cdot  \underset{j=1,\ldots, \widetilde{B} }{\max} \mathbb{P}(  \mathbb{E}((\hat{f}_{A_n}^{(B+j)}(X) - f^*(X) )^2\,|\,\hat{f}_{A_n}^{(B+j)}  )  >  a(\alpha)).\\
    \end{array}
\end{equation*}
However,
\begin{equation}
\label{eqn:300}
\arraycolsep=1.4pt\def\arraystretch{1.6}
    \begin{array}{lll}
   \displaystyle   \mathbb{E}((\hat{f}_{A_n}^{(B+j)}(X) - Y )^2\,|\,\hat{f}_{A_n}^{(B+1)}  ) &=    &  \displaystyle \mathbb{E}((\hat{f}_{A_n}^{(B+j)}(X) - f^*(X) )^2\,|\,\hat{f}_A^{(B+j)}  )  \,+\, \\
         & & \displaystyle 2\mathbb{E}( \epsilon \sqrt{g^*(X)}(f^*(X) - \hat{f}_{A_n}^{(B+j)}(X) ) \,|\,\hat{f}_{A_n}^{(B+j)} )\,+\,\mathbb{E}(\epsilon^2 g^*(X))\\
          & =&\displaystyle \mathbb{E}((\hat{f}_{A_n}^{(B+j)}(X) - f^*(X) )^2\,|\,\hat{f}_{A_n}^{(B+j)}  )  \,+\, \mathbb{E}( g^*(X) \mathbb{E}(\epsilon^2|X)  )\\          & =&\displaystyle \mathbb{E}((\hat{f}_{A_n}^{(B+j)}(X) - f^*(X) )^2\,|\,\hat{f}_{A_n}^{(B+j)}  )  )  \,+\, \mathbb{E}( g^*(X) ).\\       
    \end{array}
\end{equation}
Therefore, 
\begin{equation}
\label{eqn:e123}
    \begin{array}{lll}
   \displaystyle       \mathbb{P}(\Omega^c_1)  & \leq   & \displaystyle   \widetilde{B}\cdot  \underset{j=1,\ldots, \widetilde{B} }{\max} \mathbb{P}(  \mathbb{E}((\hat{f}_{A_n}^{(B+j)}(X) - Y )^2\,|\,\hat{f}_{A_n}^{(B+j)}  )  >  a(\alpha) + \mathbb{E}(g^*(X))   ).\\
    \end{array}
\end{equation}

\textbf{Step 3.}\\
Now, 
\begin{equation}
\label{eqn:121}
\arraycolsep=1.4pt\def\arraystretch{1.6}
    \begin{array}{lll}
         a(\alpha) + \mathbb{E}(g^*(X))& \geq & \displaystyle a_1(\alpha) \,+\, a_2(\alpha)\,+\, a_3(\alpha)- \frac{1}{\vert I_4\vert} \sum_{i \in I_4 }\hat{g}_{A_n}^{(B+1)}(x_i)  \,+\, \\
          & &\displaystyle\vert  \mathbb{E}( g^*(X) )  \,-\, \mathbb{E}( \hat{g}_{A_n}^{(B+1)}(X)  \,|\,  \hat{g}_{A_n}^{(B+1)} ) \vert\,+\,\\
       & & \displaystyle     \mathbb{E}( \hat{g}_{A_n}^{(B+1)}(X)  \,|\,  \hat{g}_{A_n}^{(B+1)} )  \,+\,
          \mathbb{E}( g^*(X) )  \,-\, \mathbb{E}( \hat{g}_{A_n}^{(B+1)}(X)   \,|\,  \hat{g}_{A_n}^{(B+1)})  \\
         &\geq &\displaystyle a_1(\alpha) \,+\, a_2(\alpha)\,+\, a_3(\alpha)- \big\vert \frac{1}{\vert I_4\vert} \sum_{i \in I_4 }\hat{g}_{A_n}^{(B+1)}(x_i)  \,-\,  \mathbb{E}( \hat{g}_{A_n}^{(B+1)}(X) \,|\,  \hat{g}_{A_n}^{(B+1)} ) \big\vert.  
    \end{array}
\end{equation}
Hence,
\begin{equation}
    \label{eqn:e124}
    \arraycolsep=1.4pt\def\arraystretch{1.6}
    \begin{array}{lll}
                \mathbb{P}(\Omega^c_1)   & \leq&\displaystyle   \widetilde{B} \cdot\mathbb{P}(  \mathbb{E}((\hat{f}_{A_n}^{(B+1)}(X) - Y )^2\,|\,\hat{f}_{A_n}^{(B+1)}  )  >  a_1(\alpha) +  a_2(\alpha)  )\,+\, \\
         & &\displaystyle  \widetilde{B} \cdot\mathbb{P}\big( \big\vert \frac{1}{\vert I_4\vert} \sum_{i \in I_4 }\hat{g}_{A_n}^{(B+1)}(x_i)  \,-\,  \mathbb{E}( \hat{g}_A^{(B+1)}(X)  \,|\,  \hat{g}_{A_n}^{(B+1)} ) \big\vert >  a_3(\alpha)  \big).\\
    \end{array}
\end{equation}

\textbf{Step 4.}\\
To bound the second term in the right-hand side of (\ref{eqn:e124}), we observe by 
Hoeffding's inequality that, 
\begin{equation}
    \label{eqn:107}
    \begin{array}{lll}
   \displaystyle   \widetilde{B} \cdot \mathbb{P}\bigg(  \vert \mathbb{E}(\hat{g}_{A_n}^{(B+1)}(X)\,|\, \hat{g}_{A_n}^{(B+1)} ) - \frac{1}{\vert I_4\vert} \sum_{i \in I_4 }\hat{g}_{A_n}^{(B+1)}(x_i)  \vert > a_3(\alpha) \,\bigg|\, \hat{g}_{A_n}^{(B+1)} \bigg)  
 &\leq&   \displaystyle   2  \widetilde{B} \cdot\exp\left(  - \frac{ n \,a_3(\alpha)^2 }{8 A_n^2 }\right)\\
  & \leq &   \displaystyle \frac{\alpha }{32}. \\
   \end{array}
\end{equation}

\textbf{Step 5.}\\
Next,  we bound the first term in the right-hand side of (\ref{eqn:e124}). Notice that 
\begin{equation}
    \label{eqn:104}
    \begin{array}{l}
 \widetilde{B}\cdot          \mathbb{P}(  \mathbb{E}((\hat{f}_{A_n}^{(B+1)}(X) - Y )^2\,|\,\hat{f}_{A_n}^{(B+1)}  )  > a_1(\alpha) +  a_2(\alpha) ) \\
         \leq  \displaystyle   \widetilde{B}\cdot \mathbb{P}\bigg(  \vert \mathbb{E}((\hat{f}_{A_n}^{(B+1)}(X) - Y )^2\,|\,\hat{f}_{A_n}^{(B+1)}) 
  -   \frac{1}{\vert I_4\vert } \sum_{i \in I_4} (\hat{f}_{A_n}^{(B+1)}(x_i) - y_i )^2   \vert >  a_2(\alpha)   \bigg)\\
          \,+\,  \displaystyle    \widetilde{B}\cdot \mathbb{P}\bigg(   \frac{1}{\vert I_4\vert } \sum_{i \in I_4} (\hat{f}_{A_n}^{(B+1)}(x_i) - y_i )^2    > a_1(\alpha)   \bigg)\\
        =: T_1\,+\, T_2,
    \end{array}
\end{equation}
and we proceed to bound each term.\\

\textbf{Step 6.}
To bound $T_1$, we see that 
\[
0\leq \vert \hat{f}_{A_n}^{(B+1)}(x_i) - y_i  \vert^2 \,\leq \,  4A_n^2.
\]
Then, by Hoeffding's inequality
\begin{equation}
    \label{eqn:107}
    \begin{array}{l}
  \displaystyle       \widetilde{B}\cdot \mathbb{P}\bigg(  \vert \mathbb{E}((\hat{f}_{A_n}^{(B+1)}(X) - Y )^2\,|\,\hat{f}_{A_n}^{(B+1)}) 
  -   \frac{1}{\vert I_4\vert } \sum_{i \in I_4} (\hat{f}_{A_n}^{(B+1)}(x_i) - y_i )^2   \vert >  a_2(\alpha)   \bigg)\\
   \displaystyle \leq 2    \widetilde{B}\cdot \exp\left(  -\frac{  2\vert I_4 \vert a_2(\alpha)^2  }{   (4A_n^2)^2 }  \right)\\
   \displaystyle \leq 2    \widetilde{B}\cdot \exp\left(  -\frac{  \vert I_4 \vert a_2(\alpha)^2  }{   8A_n^4 }  \right)\\
      \displaystyle \leq 2    \widetilde{B}\cdot \exp\left(  -\frac{  n a_2(\alpha)^2  }{   32 A_n^4 }  \right)\\
     \displaystyle \leq \frac{\alpha}{4}.
    \end{array}
\end{equation}

\textbf{Step 7.}\\
We now bound $T_2$. Towards that end let  $\mathcal{D}_4 \,=\, \{ (x_i,y_i) \}_{i \in  I_4 }$, $\mathcal{D}_{1:3} \,=\, \{ (x_i,y_i) \}_{i \in  I_j, j=1,\ldots,3 }$,  $\Lambda \,=\,  \{\hat{f}_{A_n}^{(j)}\}_{j=1}^B$, and let $G_{\mathcal{D}_4,\mathcal{D}_{1:3} },\hat{G}_{\mathcal{D}_4, \Lambda } \,:\,\mathbb{R} \rightarrow [0,1]$ be given as
\[
G_{\mathcal{D}_4,\mathcal{D}_{1:3} }(a) \,=\, \displaystyle\mathbb{P}\bigg(   \frac{1}{\vert I_4\vert } \sum_{i \in I_4} (\hat{f}_{A_n}^{(B+1)}(x_i) - y_i )^2    > a \,|\, \mathcal{D}_4,\mathcal{D}_{1:3}  \bigg)
\]
and 
\[
\hat{G}_{ \mathcal{D}_4,\Lambda }(a) \,=\,  \frac{1}{B } \sum_{j=1}^B  1_{ \{ \frac{1}{\vert I_4\vert } \sum_{i \in I_4} (\hat{f}_{A_n}^{(j)}(x_i) - y_i )^2    > a \}  }.
\]
Also, let
\[
   \Omega_2(\mathcal{D}_{1:3},\mathcal{D}_4) \,=\,  \bigg\{   \{ \hat{f}_{A_n}^{(j)}  \}_{j=1}^B\,:\,   \underset{a\in \mathbb{R}}{\sup} \,\vert G_{\mathcal{D}_4,\mathcal{D}_{1:3} }(a) \,-\, \hat{G}_{ \mathcal{D}_4,\Lambda }(a) \vert \leq \frac{\alpha }{32  \widetilde{B}}  \bigg\}.
\]
Then
\begin{equation}
    \label{eqn:30}
    \arraycolsep=1.4pt\def\arraystretch{1.6}
    \begin{array}{l}
          \displaystyle\mathbb{P}\bigg(   \frac{1}{\vert I_4\vert } \sum_{i \in I_4} (\hat{f}_{A_n}^{(B+1)}(x_i) - y_i )^2    > a_1(\alpha)   \bigg) \\
                          \displaystyle \,=\, \int \int  \int \mathbb{P}( \frac{1}{\vert I_4\vert } \sum_{i \in I_4} (\hat{f}_{A_n}^{(B+1)}(x_i) - y_i )^2    > a_1(\alpha)   \,|\, \mathcal{D}_4,\mathcal{D}_{1:3},\Lambda ) dP_{ \Lambda | \mathcal{D}_{1:3} }(d\Lambda ) dP_{\mathcal{D}_4}(d\mathcal{D}_4) dP_{\mathcal{D}_{1:3} }(d\mathcal{D}_{1:3}  )  \\ 
  \displaystyle \,=\, \int \int  \int_{   \Omega_2(\mathcal{D}_{1:3},\mathcal{D}_4) } \mathbb{P}( \frac{1}{\vert I_4\vert } \sum_{i \in I_4} (\hat{f}_{A_n}^{(B+1)}(x_i) - y_i )^2    > a_1(\alpha)   \,|\, \mathcal{D}_4,\mathcal{D}_{1:3},\Lambda ) dP_{ \Lambda | \mathcal{D}_{1:3} }(d\Lambda ) dP_{\mathcal{D}_4}(d\mathcal{D}_4) dP_{\mathcal{D}_{1:3} }(d\mathcal{D}_{1:3}  ) \,+\, \\   
    \displaystyle  \,\,\,\,\,\,\,\int \int  \int_{   \Omega_2(\mathcal{D}_{1:3},\mathcal{D}_4)^c } \mathbb{P}( \frac{1}{\vert I_4\vert } \sum_{i \in I_4} (\hat{f}_{A_n}^{(B+1)}(x_i) - y_i )^2    > a_1(\alpha)   \,|\, \mathcal{D}_4 ,\mathcal{D}_{1:3},\Lambda ) dP_{ \Lambda | \mathcal{D}_{1:3} }(d\Lambda ) dP_{\mathcal{D}_4}(d\mathcal{D}_4) dP_{\mathcal{D}_{1:3} }(d\mathcal{D}_{1:3}  )  \\   
    \end{array}
\end{equation}
\begin{equation*}
\arraycolsep=1.4pt\def\arraystretch{1.6}
     \begin{array}{l}
            \displaystyle \,=\, \int \int  \int_{   \Omega_2(\mathcal{D}_{1:3},\mathcal{D}_4) } G_{ \mathcal{D}_4, \mathcal{D}_{1:3}   }( a_1(\alpha) ) dP_{ \Lambda | \mathcal{D}_{1:3} }(d\Lambda ) dP_{\mathcal{D}_4}(d\mathcal{D}_4) dP_{\mathcal{D}_{1:3} }(d\mathcal{D}_{1:3}  ) \,+\, \\   
    \displaystyle  \,\,\,\,\,\,\,\int \int  \int_{   \Omega_2(\mathcal{D}_{1:3},\mathcal{D}_4)^c } G_{ \mathcal{D}_4,\mathcal{D}_{1:3}}( a_1(\alpha) ) dP_{ \Lambda | \mathcal{D}_{1:3} }(d\Lambda ) dP_{\mathcal{D}_4}(d\mathcal{D}_4) dP_{\mathcal{D}_{1:3} }(d\mathcal{D}_{1:3}  )  \\   
          \displaystyle \,\leq \, \int \int  \int_{   \Omega_2(\mathcal{D}_{1:3},\mathcal{D}_4) } \big[\hat{G}_{ \mathcal{D}_4,\Lambda }( a_1(\alpha) ) + \frac{ \alpha}{32\widetilde{B} } \big]  dP_{ \Lambda | \mathcal{D}_{1:3} }(d\Lambda ) dP_{\mathcal{D}_4}(d\mathcal{D}_4) dP_{\mathcal{D}_{1:3} }(d\mathcal{D}_{1:3}  ) \,+\, \\   
    \displaystyle  \,\,\,\,\,\,\,\int \int  \int_{   \Omega_2(\mathcal{D}_{1:3},\mathcal{D}_4)^c } G_{ \mathcal{D}_4,\mathcal{D}_{1:3}}( a_1(\alpha) ) dP_{ \Lambda | \mathcal{D}_{1:3} }(d\Lambda ) dP_{\mathcal{D}_4}(d\mathcal{D}_4) dP_{\mathcal{D}_{1:3} }(d\mathcal{D}_{1:3}  )  \\   
          \displaystyle \,\leq \, \int \int  \int_{   \Omega_2(\mathcal{D}_{1:3},\mathcal{D}_4) } \frac{9 \alpha}{32 \widetilde{B}} \,  dP_{ \Lambda | \mathcal{D}_{1:3} }(d\Lambda ) dP_{\mathcal{D}_4}(d\mathcal{D}_4) dP_{\mathcal{D}_{1:3} }(d\mathcal{D}_{1:3}  ) \,+\, \\   
    \displaystyle  \,\,\,\,\,\,\,\int \int  \int_{   \Omega_2(\mathcal{D}_{1:3},\mathcal{D}_4)^c } G_{ \mathcal{D}_4,\mathcal{D}_{1:3}}( a_1(\alpha) ) dP_{ \Lambda | \mathcal{D}_{1:3} }(d\Lambda ) dP_{\mathcal{D}_4}(d\mathcal{D}_4) dP_{\mathcal{D}_{1:3} }(d\mathcal{D}_{1:3}  )  \\  
     \displaystyle \,\leq \, \frac{9\alpha}{32 \widetilde{B}}\,+\, \int \int  \int_{   \Omega_2(\mathcal{D}_{1:3},\mathcal{D}_4)^c } dP_{ \Lambda | \mathcal{D}_{1:3} }(d\Lambda ) dP_{\mathcal{D}_4}(d\mathcal{D}_4) dP_{\mathcal{D}_{1:3} }(d\mathcal{D}_{1:3}  ) \\
          \displaystyle \,\leq \, \frac{9 \alpha}{32\widetilde{B} }\,+\, 2\exp\left( -2 B \alpha^2 /(32^2 \widetilde{B}^2 ) \right)\\
           \displaystyle < \frac{9\alpha}{32 \widetilde{B}} \,+\,\frac{\alpha}{64 \widetilde{B}}
     \end{array}
\end{equation*}
where the first inequality follows by the definition of  $ \Omega_2(\mathcal{D}_{1:3},\mathcal{D}_4)$, the second by the definition of $a_1(\alpha)$, the fourth by the Dvoretzky–Kiefer–Wolfowitz inequality (see \cite{massart1990tight}), and the last since $B  \alpha^2/\widetilde{B}^2 \rightarrow \infty$ fast enough. \\

Hence, 
\[
\widetilde{B}\cdot \mathbb{P}\bigg(   \frac{1}{\vert I_4\vert } \sum_{i \in I_4} (\hat{f}_{A_n}^{(B+1)}(x_i) - y_i )^2    > a_1(\alpha)   \bigg) \,\leq \frac{19\alpha}{64}.
\]
Therefore, combining (\ref{eqn:e124}), Steps 4--6,  and (\ref{eqn:30}), we obtain 
\begin{equation}
	\label{eqn:320}
	 \mathbb{P}( \Omega_1^c )\,<\,0.58 \alpha.
\end{equation}
\textbf{Step 8.}\\

Next, we  observe that
\[
\arraycolsep=1.4pt\def\arraystretch{1.8}
\begin{array}{l}
  \displaystyle  \mathbb{P}(\vert \mathbb{E}(Z  \,|\,  X,\Omega_1,\{(x_i,y_i)\}_{i \in I_1 \cup I_2\cup I_3  }) \vert  \geq  b(\alpha)  )  \\
  \leq  \displaystyle  \frac{ \mathbb{E}(\vert  \mathbb{E}(Z  \,|\,  X,\Omega_1,\{(x_i,y_i)\}_{i \in I_1 \cup I_2\cup I_3  }) )\vert )  }{ b(\alpha)  }\\
  \leq  \displaystyle\frac{ \mathbb{E}(\vert Z \vert   \,|\, \Omega_1 ) }{ b(\alpha)    } \\
    \leq  \displaystyle\frac{ \mathbb{E}(\vert Z_{B+1} \vert   \,|\, \Omega_1  ) }{ b(\alpha)    } \\
    \leq  \displaystyle \frac{\mathbb{E}(\vert Z_{B+1} \vert)}{ b(\alpha) \cdot \mathbb{P}(\Omega_1) }\\
       =\displaystyle \frac{\mathbb{E}(\vert Z_{B+1} \vert)}{ b(\alpha) \cdot (1- \mathbb{P}(\Omega_1^c) ) }\\
        \leq \displaystyle \frac{\mathbb{E}(\vert Z_{B+1} \vert)}{ b(\alpha) \cdot (1- 0.58\alpha ) }\\
   \leq  \displaystyle\frac{ (\mathbb{E}(\vert Z_{B+1} \vert^2))^{1/2} }{ b(\alpha) \cdot (1- 0.58\alpha )    }\\
    =  \displaystyle\frac{ [  \mathbb{E}((\hat{f}_{A_n}^{(B+1)}(X) - f^*(X) )^2  )  ]^{1/2} }{ b(\alpha) \cdot (1- 0.58\alpha )    }\\
    =\displaystyle \frac{ [ \mathbb{E}((\hat{f}_{A_n}^{(B+1)}(X) - Y )^2 )-  \mathbb{E}(g^*(X))  ]^{1/2} }{ b(\alpha) \cdot (1- 0.58\alpha )    }\\
     \leq \displaystyle\frac{5 \alpha}{32}
\end{array}
\]
where the fifth inequality follows from (\ref{eqn:320}) and  the last equality follows from (\ref{eqn:300}).

\textbf{Step 11.}\\
Next we show (\ref{eqn:lenght}).  Throughout, we use $\mathrm{poly}(\cdot)$ to denote a polynomial function that can change from line to line.

By the calculation in (\ref{eqn:ver5}),  and Theorem \ref{thm5_v2}, Theorem \ref{thm2_v2} and Corollary \ref{thm_var_v2}, 
the choice of $B$ satisfying
\[
 B \left[    \exp(  -c n \min\{\phi_n, \psi_n\} \log^3 n  )\,+\, \mathbb{P}(\|\epsilon\|_{\infty})  > \mathcal{U}_n )\,+\, \exp(-c\log n) \right]   \,\rightarrow \,0
\]
for an appropriate constant $c>0$, implies, by union bound, that 
\begin{equation}
    \label{eqn:151}
     \underset{j=1,\ldots,B +  \widetilde{B}}{\max}\,  \frac{1}{\vert I_4\vert} \sum_{i\in I_4} (\hat{f}_{A_n}(x_i   )    \,-\, f_{\widetilde{A}_n}^{(j)}(x_i)  )^2   \,=\,  o_{\mathbb{P}}( ( \phi_n+ n^{-1}) \mathrm{poly}(\log n)   ),
\end{equation}
and 
\begin{equation}
    \label{eqn:150}
     \max\{    \|g^* \,-\, \hat{g}_{ A_n }\|_{ \mathcal{L}_2 } , \| \hat{g}_{ A_n }\,-\,   \hat{g}_{\widetilde{A}_n}^{(B+1)} \|_{ \mathcal{L}_2 }  \}  \,=\,  o_{\mathbb{P}}( [ \phi_n^{1/2}  +  \psi_n^{1/2} + n^{-1/2} ]\mathrm{poly}(\log n)  ).
\end{equation}

Then
\begin{equation}
    \begin{array}{l}
    \displaystyle \big \vert a_1(\alpha)\,-\,  \frac{1}{\vert I_4 \vert }\sum_{i \in I_4}  \hat{g}_{\widetilde{A}_n}^{(B+1)}(x_i) \big\vert  \\
   \displaystyle   \leq  \underset{j=1,\ldots,B}{\max}\,\big\vert   \frac{1}{\vert I_4 \vert }\sum_{i \in I_4} ( y_i \,-\,  \hat{f}_{\widetilde{A}_n}^{(j)}(x_i)   )^2    \,-\,  \frac{1}{\vert I_4 \vert }\sum_{i \in I_4}  \hat{g}_{\widetilde{A}_n}^{(B+1)}(x_i) \big\vert  \\
  \displaystyle  \leq \,  \underset{j=1,\ldots,B}{\max}\,\big\vert   \frac{1}{\vert I_4 \vert }\sum_{i \in I_4} ( y_i \,-\,  \hat{f}_{\widetilde{A}_n}^{(j)}(x_i)  )^2   \,-\, \frac{1}{\vert I_4 \vert }\sum_{i \in I_4} ( y_i \,-\,  f^*(x_i)   )^2  \big\vert \,+\,\\
    \displaystyle \,\,\,\,  \big\vert  \frac{1}{\vert I_4 \vert }\sum_{i \in I_4} ( y_i \,-\,  f^*(x_i)   )^2    \,-\, \frac{1}{\vert I_4 \vert }\sum_{i \in I_4} \hat{g}_{\widetilde{A}_n}^{(B+1)}(x_i)  \big\vert   \\
    \end{array}
\end{equation}
\begin{equation}
    \label{eqn:error}
   \begin{array}{l}
         \displaystyle  \leq \,  \underset{j=1,\ldots,B}{\max}\,\big\vert   \frac{1}{\vert I_4 \vert }\sum_{i \in I_4}  2\epsilon_i( f^*(x_i)\,-\,  \hat{f}_{\widetilde{A}_n}^{(j)} (x_i)  ) \big\vert \,+\,   \underset{j=1,\ldots,B}{\max} \frac{1}{\vert I_4 \vert }\sum_{i \in I_4} ( f^*(x_i) \,-\,  \hat{f}_{\widetilde{A}_n}^{(j)}(x_i)   )^2  \,+\, \\
    \displaystyle \,\,\,\,\,\,\,\,\,\, \,\,\,\,\,\, \big\vert  \frac{1}{\vert I_4 \vert }\sum_{i \in I_4} ( y_i \,-\,  f^*(x_i)   )^2    \,-\, \frac{1}{\vert I_4 \vert }\sum_{i \in I_4} \hat{g}_{\widetilde{A}_n}^{(B+1)}(x_i)  \big\vert.
   \end{array}
  \end{equation}

However,  conditioning on $\{\hat{f}_{A_n}^{(j)}\}_{j=1}^B$ and $\{x_i\}_{i \in I_4}$, by union bound and the sub-Gaussian tail inequality,
\begin{equation}
    \label{eqn:160}
    \begin{array}{l}
    \displaystyle     \underset{j=1,\ldots,B}{\max}\,\big\vert   \frac{1}{\vert I_4 \vert }\sum_{i \in I_4}  2\epsilon_i( f^*(x_i)\,-\,  \hat{f}_{\widetilde{A}_n}^{(j)} (x_i)  ) \big\vert \\
   \displaystyle    \, = \, o_{\mathbb{P}}\bigg(\sqrt{ \frac{\log n}{n  }} \underset{j=1,\ldots,B}{\max}\,\sqrt{  \frac{1}{\vert I_4 \vert }\sum_{i \in I_4} ( f^*(x_i) \,-\,  \hat{f}_{\widetilde{A}_n}^{(j)}(x_i)   )^2 }\bigg). \\
    \end{array}
\end{equation}
Furthermore, 
\begin{equation}
    \label{eqn:160}
    \arraycolsep=1.4pt\def\arraystretch{1.6}
    \begin{array}{l}
    \displaystyle   \underset{j=1,\ldots,B}{\max} \frac{1}{\vert I_4 \vert }\sum_{i \in I_4} ( f^*(x_i) \,-\,  \hat{f}_{\widetilde{A}_n}^{(j)}(x_i)   )^2\\
    \displaystyle \leq \frac{2}{\vert I_4 \vert }\sum_{i \in I_4} ( f^*(x_i) \,-\,  \hat{f}_{A_n}(x_i)   )^2\,+\, \underset{j=1,\ldots,B}{\max} \frac{2}{\vert I_4 \vert }\sum_{i \in I_4} (  \hat{f}_{A_n}(x_i)\,-\,  \hat{f}_{\widetilde{A}_n}^{(j)}(x_i)   )^2\\
    \displaystyle \,=\, o_{\mathbb{P}}( [\phi_n  +  n^{-1} ]\mathrm{poly}(\log n) ),
    \end{array}
\end{equation}
where the equality follows from (\ref{eqn:151}) and Theorem \ref{thm5_v2}. So we proceed to bound the last term in the right hand-side of (\ref{eqn:error}). Towards that end, notice
that 
\begin{equation}
    \label{eqn:163}
    \arraycolsep=1.4pt\def\arraystretch{1.6}
    \begin{array}{l}
       \displaystyle   \big\vert  \frac{1}{\vert I_4 \vert }\sum_{i \in I_4} ( y_i \,-\,  f^*(x_i)   )^2    \,-\, \frac{1}{\vert I_4 \vert }\sum_{i \in I_4} \hat{g}_{\widetilde{A}_n}^{(B+1)}(x_i)  \big\vert \\
              \displaystyle \leq   \big\vert  \frac{1}{\vert I_4 \vert }\sum_{i \in I_4} ( y_i \,-\,  f^*(x_i)   )^2    \,-\, \frac{1}{\vert I_4 \vert }\sum_{i \in I_4} g^*(x_i) \big\vert\,+\, \\
            \displaystyle\,\,\,\,\, \big\vert  \frac{1}{\vert I_4 \vert }\sum_{i \in I_4} g^*(x_i) \,-\,  \frac{1}{\vert I_4 \vert }\sum_{i \in I_4} \hat{g}_{\widetilde{A}_n}^{(B+1)}(x_i)  \big\vert\\
    \displaystyle\leq  \big\vert  \frac{1}{\vert I_4 \vert }\sum_{i \in I_4} g^*(x_i) \,-\,  \mathbb{E}(g^*(X))  \big\vert \,+\, o_{\mathbb{P}}(n^{-1/2}\mathrm{poly}(\log n)  )\\ 
    \displaystyle \,\,\,\, \,\big\vert  \mathbb{E}( \hat{g}_{\widetilde{A}_n}^{(B+1)}(X)\,|\, \hat{g}_{\widetilde{A}_n}^{(B+1)} ) \,-\,  \frac{1}{\vert I_4 \vert }\sum_{i \in I_4} \hat{g}_{\widetilde{A}_n}^{(B+1)}(x_i)  \big\vert \,+\,\\
       \displaystyle \,\,\,\,\, \big\vert \mathbb{E}(g^*(X))   \,-\,\mathbb{E}( \hat{g}_{\widetilde{A}_n}^{(B+1)}(X)\,|\, \hat{g}_{\widetilde{A}_n}^{(B+1)} )\big\vert \\
        \displaystyle\leq  \big\vert \mathbb{E}(g^*(X))   \,-\,\mathbb{E}( \hat{g}_{\widetilde{A}_n}^{(B+1)}(X)\,|\, \hat{g}_{\widetilde{A}_n}^{(B+1)} )\big\vert \,+\, o_{\mathbb{P}}(n^{-1/2}\mathrm{poly}(\log n) )\\   
    \end{array}
\end{equation}
where the second and third inequalities follow by Chebyshev's inequality and the triangle inequality. It remains to control \[
a_0 \,=\,   \big\vert \mathbb{E}(g^*(X))   \,-\,\mathbb{E}( \hat{g}_{\widetilde{A}_n}^{(B+1)}(X)\,|\, \hat{g}_{\widetilde{A}_n}^{(B+1)} )\big\vert. 
\]
Notice that
\begin{equation}
\arraycolsep=1.4pt\def\arraystretch{1.6}
    \begin{array}{lll}
      \big\vert \mathbb{E}(g^*(X))   \,-\,\mathbb{E}( \hat{g}_{\widetilde{A}_n}^{(B+1)}(X)\,|\, \hat{g}_{\widetilde{A}_n}^{(B+1)} )\big\vert&  \leq   & \|g^*\,-\, \hat{g}_{\widetilde{A}_n}^{(B+1)}\|_{ \mathcal{L}_2 }   \\
      &  \leq   & \|g^*\,-\, \hat{g}_{A_n}\|_{ \mathcal{L}_2 } \,+\,\|\hat{g}_{A_n}\,-\, \hat{g}_{\widetilde{A}_n}^{(B+1)}\|_{ \mathcal{L}_2 } \\
         &=& o_{\mathbb{P}}( [ \phi_n^{1/2}  +  \psi_n^{1/2} + n^{-1/2} ]\mathrm{poly}(\log n)  )
         \end{array}
\end{equation}
where the last inequality follows from (\ref{eqn:150}). 

 The claim in  (\ref{eqn:lenght}) then follows.

\end{proof}

\subsection{Proof of Corollary \ref{cor1_v2}}

\begin{proof}
    This follows from the proof of Theorem \ref{thm6}.  We observe that
        \begin{equation}
\label{eqn:120}
    \begin{array}{lll}
    \displaystyle     \vert \mathbb{E}(g^*(X))\,-\,\mathbb{E}(\hat{g}_{A_n}^{(B+1)}(X) \,|\, \hat{g}_{A_n}^{(B+1)} \vert   & \leq \| g^*  \,-\,\hat{g}_A^{(B+1)}\|_{ \mathcal{L}_2}.     & \
    \end{array}
\end{equation}
 Then in Step 3 of the proof Theorem \ref{thm6}, we now have 
\begin{equation}
\label{eqn:170}
\arraycolsep=1.4pt\def\arraystretch{1.6}
    \begin{array}{lll}
         a(\alpha) + \mathbb{E}(g^*(X))& \geq & \displaystyle a_1(\alpha) \,+\, a_2(\alpha)\,+\, a_3(\alpha)- \frac{1}{\vert I_4\vert} \sum_{i \in I_4 }\hat{g}_{A_n}^{(B+1)}(x_i)  \,+\, \\
          & &\displaystyle\vert  \mathbb{E}( g^*(X) )  \,-\, \mathbb{E}( \hat{g}_{A_n}^{(B+1)}(X)  \,|\,  \hat{g}_{A_n}^{(B+1)} ) \vert\,+\,\\
       & & \displaystyle     \mathbb{E}( \hat{g}_{A_n}^{(B+1)}(X)  \,|\,  \hat{g}_{A_n}^{(B+1)} )  \,+\,
          \mathbb{E}( g^*(X) )  \,-\, \mathbb{E}( \hat{g}_{A_n}^{(B+1)}(X)   \,|\,  \hat{g}_{A_n}^{(B+1)}) \,+\, \\
           & &\displaystyle    \frac{\alpha}{100\log^s n} - \vert  \mathbb{E}( g^*(X) )  \,-\, \mathbb{E}( \hat{g}_{A_n}^{(B+1)}(X)  \,|\,  \hat{g}_{A_n}^{(B+1)} ) \vert\\
    \end{array}
\end{equation}
and so 
\begin{equation}
\label{eqn:171}
    \begin{array}{lll}
         a(\alpha) + \mathbb{E}(g^*(X))& \geq & \displaystyle a_1(\alpha) \,+\, a_2(\alpha)\,+\, a_3(\alpha)- \big\vert \frac{1}{\vert I_4\vert} \sum_{i \in I_4 }\hat{g}_{A_n}^{(B+1)}(x_i)  \,-\,  \mathbb{E}( \hat{g}_{A_n}^{(B+1)}(X) \,|\,  \hat{g}_{A_n}^{(B+1)} ) \big\vert  \,+\,\\
   & &\displaystyle    \frac{\alpha}{100\log^s n} - \vert  \mathbb{E}( g^*(X) )  \,-\, \mathbb{E}( \hat{g}_{A_n}^{(B+1)}(X)  \,|\,  \hat{g}_{A_n}^{(B+1)} ) \vert \\
    \end{array}
\end{equation}
which implies 
\begin{equation}
    \label{eqn:e172}
    \arraycolsep=1.4pt\def\arraystretch{1.6}
    \begin{array}{lll}
                \mathbb{P}(\Omega^c_1)   & \leq&\displaystyle   \widetilde{B} \cdot\mathbb{P}(  \mathbb{E}((\hat{f}_{A_n}^{(B+1)}(X) - Y )^2\,|\,\hat{f}_{A_n}^{(B+1)}  )  >  a_1(\alpha) +  a_2(\alpha)  )\,+\, \\
         & &\displaystyle  \widetilde{B} \cdot \mathbb{P}\big( \big\vert \frac{1}{\vert I_4\vert} \sum_{i \in I_4 }\hat{g}_{A_n}^{(B+1)}(x_i)  \,-\,  \mathbb{E}( \hat{g}_A^{(B+1)}(X)  \,|\,  \hat{g}_{A_n}^{(B+1)} ) \big\vert >  a_3(\alpha)  \big)\,+\,\\
         & &\displaystyle  \widetilde{B} \cdot  \mathbb{P}(  \vert  \mathbb{E}( g^*(X) ) \,-\, \mathbb{E}( \hat{g}_{A_n}^{(B+1)}(X)  \,|\,  \hat{g}_{A_n}^{(B+1)} ) \vert  \geq \frac{\alpha}{100\log^s n})\\ 
          & \leq&\displaystyle   \widetilde{B} \cdot\mathbb{P}(  \mathbb{E}((\hat{f}_{A_n}^{(B+1)}(X) - Y )^2\,|\,\hat{f}_{A_n}^{(B+1)}  )  >  a_1(\alpha) +  a_2(\alpha)  )\,+\, \\
         & &\displaystyle   \widetilde{B} \cdot\mathbb{P}\big( \big\vert \frac{1}{\vert I_4\vert} \sum_{i \in I_4 }\hat{g}_{A_n}^{(B+1)}(x_i)  \,-\,  \mathbb{E}( \hat{g}_A^{(B+1)}(X)  \,|\,  \hat{g}_{A_n}^{(B+1)} ) \big\vert >  a_3(\alpha)  \big)\,+\,\\
          & &\displaystyle  \widetilde{B} \cdot\mathbb{P}(  \| g^*  \,-\,\hat{g}_A^{(B+1)}\|_{ \mathcal{L}_2}     \geq \frac{\alpha}{100\log^s n})\\ 
          & \leq&\displaystyle   \widetilde{B} \cdot\mathbb{P}(  \mathbb{E}((\hat{f}_{A_n}^{(B+1)}(X) - Y )^2\,|\,\hat{f}_{A_n}^{(B+1)}  )  >  a_1(\alpha) +  a_2(\alpha)  )\,+\, \\
         & &\displaystyle   \widetilde{B} \cdot\mathbb{P}\big( \big\vert \frac{1}{\vert I_4\vert} \sum_{i \in I_4 }\hat{g}_{A_n}^{(B+1)}(x_i)  \,-\,  \mathbb{E}( \hat{g}_A^{(B+1)}(X)  \,|\,  \hat{g}_{A_n}^{(B+1)} ) \big\vert >  a_3(\alpha)  \big)\,+\,   o\left(\frac{1}{n}\right) \\
    \end{array}
\end{equation}
where the second inequality follows from (\ref{eqn:e124}) and the last by our assumption in the statement of Corollary \ref{cor1_v2}. 

The rest of the proof of Theorem \ref{thm6} remains the same noticing that for any $t>0$
\[
    \arraycolsep=1.4pt\def\arraystretch{1.8}
   \begin{array}{l}
      \displaystyle  \frac{ 32[  \mathbb{E}((\hat{f}_{A_n}^{(B+1)}(X) - f^*(X) )^2  )  ]^{1/2} }{ 5\alpha  (1-0.58\alpha)     }\\
      = \displaystyle \frac{32}{5\alpha (1-0.58\alpha) } \,[\mathbb{E}( \|\hat{f}_{A_n}^{(B+1)} - f^*  \|_{\mathcal{L}_2}^2  )   ]^{1/2}  \\
      \leq   \displaystyle \frac{32}{5\alpha (1-0.58\alpha) } \,[\mathbb{E}( \|\hat{f}_{A_n}^{(B+1)} - f^*  \|_{\mathcal{L}_2}^2 \,|\, \|\hat{f}_{A_n}^{(B+1)} - f^*  \|_{\mathcal{L}_2}\leq t )    \
      \,+\, \\
       \displaystyle\mathbb{E}( \|\hat{f}_{A_n}^{(B+1)} - f^*  \|_{\mathcal{L}_2}^2 \,|\, \|\hat{f}_{A_n}^{(B+1)} - f^*  \|_{\mathcal{L}_2} > t) \,\cdot\, \mathbb{P}(  \|\hat{f}_{A_n}^{(B+1)} - f^*  \|_{\mathcal{L}_2} >  t) ]^{1/2}  \\
        \leq  \displaystyle    \frac{32}{5 \alpha (1-0.58\alpha) }[ t^2\,+\,   4 A_n^2 \mathbb{P}(  \|\hat{f}_{A_n}^{(B+1)} - f^*  \|_{\mathcal{L}_2} >  t) ]^{1/2}\\
                \leq  \displaystyle    \frac{32t}{5 \alpha (1-0.58\alpha) }\,+\,    \frac{64}{5 \alpha} A_n [\mathbb{P}(  \|\hat{f}_{A_n}^{(B+1)} - f^*  \|_{\mathcal{L}_2} >  t) ]^{1/2}\\
                 \leq \displaystyle \frac{32t}{5 \alpha (1-0.58\alpha) }\,+\,     O(n^{-1})\\
                 \leq \displaystyle\frac{1}{100 \log^s n}
   \end{array}
\]
by taking $t \asymp \frac{\alpha}{ \log^s n}$ and the conditions in Corollary \ref{cor1_v2}.
\end{proof}

\end{document}